\documentclass[twoside,11pt]{article}

\usepackage{jmlr2e}

\firstpageno{1}

\usepackage{amsmath,amssymb,amsfonts}
\newtheorem{thm}{Theorem}
\newtheorem{assump}{Assumption}

\usepackage{algorithm}
\usepackage{algorithmic}
\usepackage{xcolor}
\setcounter{secnumdepth}{2} 
\DeclareMathOperator*{\argminA}{arg\,min}    

\begin{document}

	\title{On the Global Convergence of Fitted Q-Iteration with Two-layer Neural Network Parametrization}
	
	\author{\name Mudit Gaur\email mgaur@purdue.eduu \\
		\name Mridul Agarwal\email agarw180@purdue.edu \\
		\name Vaneet Aggarwal\email{vaneet@purdue.edu}\\
		\addr Purdue University, West Lafayette IN 47907, USA
	}
	\if 0
	\addr Department of Statistics\\
	University of Washington\\
	Seattle, WA 98195-4322, USA
	\AND
	\name Michael I.\ Jordan \email jordan@cs.berkeley.edu \\
	\addr Division of Computer Science and Department of Statistics\\
	University of California\\
	Berkeley, CA 94720-1776, USA}
\fi 
\editor{}

\maketitle

	\begin{abstract} \label{abstract}
Deep Q-learning based algorithms have been applied successfully in many decision making problems, while their theoretical foundations are not as well understood. In this paper, we study a Fitted Q-Iteration with two-layer ReLU neural network parameterization, and find the sample complexity guarantees for the algorithm. Our approach estimates the Q-function in each iteration using a convex optimization problem. We show that this approach achieves a sample complexity of $\tilde{\mathcal{O}}(1/\epsilon^{2})$, which is order-optimal. This result holds for a countable state-spaces and does not require any assumptions such as a linear or low rank structure on the MDP. %
 \end{abstract}

	\section{Introduction} \label{introduction}

Reinforcement learning aims to maximize the cumulative rewards wherein an agent interacts with the system in a sequential manner, and has been used in many applications including games \cite{silver2017mastering,DBLP:journals/corr/abs-1708-04782,Haliem2021LearningMG}, robotics \cite{Maes03reinforcementlearning}, autonomous driving \cite{9351818}, ride-sharing \cite{8793143}, networking \cite{geng2020multi}, recommender systems \cite{NEURIPS2018_210f760a}, etc. One of the key class of algorithms used in reinforcement learning are those based on Q-learning. Due to large state space (possibly infinite), such algorithms are parameterized \cite{van2016deep}. However, with  general parametrization (e.g., neural network parametrization), sample complexity guarantees to achieve an $\epsilon$ gap to the optimal Q-values have not been fully understood. In this paper, we study this problem to provide the first sample complexity results for Q-learning based algorithm with neural network parametrization.

We note that most of the theoretical results for Q-learning based algorithms have been in tabular setups which assume finite states such as in \cite{NEURIPS2020_4eab60e5,NEURIPS2018_d3b1fb02}, or assume a linear function approximation \cite{carvalho20nips,NEURIPS2021_70a32110,pmlr-v162-chen22f,zhou2021provably} where the linear representation is assumed to be known. Although practically large state spaces are handled using deep neural networks due to the universal approximation theorem, the theoretical analysis with neural network parametrization of Q-learning is challenging as the training of neural network involves solving a non-convex optimization problem. Recently, the authors of \cite{pmlr-v120-yang20a} studied the problem of neural network parametrization for Q-learning based algorithms, and provided the asymptotic global convergence guarantees when the parameters of the neural network used to approximate the Q function are large. In this paper, we study the first results on sample complexity to achieve the global convergence of Q-learning based algorithm with neural network parametrization.

In this paper, we propose a  fitted Q-Iteration based algorithm, wherein the Q-function is parametrized by a two-layer ReLU network. The key for the two-layer ReLU network is that finding the optimal parameters for estimating the Q-function could be converted to a convex problem. In order to reduce the problem from non-convex to convex, one can replace the ReLU function with an equivalent diagonal matrix. Further, the minima of both the non-convex ReLU optimization and the equivalent convex optimization are same \cite{wang2021hidden,pmlr-v119-pilanci20a}.

We find the sample complexity for fitted Q-Iteration based algorithm with neural network parametrization. The gap between the learnt Q-function and the optimal Q-function arises from the parametrization error due to the 2-layer neural network, the error incurred due to imperfect reformulation of the neural network as a convex optimization problem as well as the error due to the random nature of the underlying MDP. This error reduces with increasing the number of iterations of the fitted Q-iteration algorithm, the number of iterations of the convex optimization step, the number of data points sampled at each step. We achieve an overall sample complexity of $\tilde{\mathcal{O}}\left(\frac{1}{(1-\gamma)^4\epsilon^{2}}\right)$. Our proof consists of expressing the error in estimation of the Q-function at a fixed iteration in terms of the Bellman error incurred at each successive iteration upto that point. We then express the Bellman error in terms of the previously mentioned components for the statistical error. We then upper bound the expectation of these error components with respect to a distribution of the state action space.

\section{Related Works} \label{Related Works}

\subsection{Fitted Q-Iteration}

The analysis of Q-learning based algorithms has been studied since the early 1990's \cite{Watkins1992,10.1023/A:1022693225949}, where it was shown that small errors in
the approximation of a task's optimal value function cannot produce arbitrarily bad performance when actions are selected by
a greedy policy.  Value Iteration algorithms, and its analogue Approximate Value iteration for large state spaces has been shown to have finite error bounds as \cite{puterman2014markov}. Similar analysis for finite state space has been studied in \cite{books/lib/BertsekasT96} and \cite{munos:inria-00124685}.

When the state space is large (possibly infinite),  the value function can not be updated at each state action pair at every iteration. Thus,  approximate value iteration algorithms are used that obtains samples from state action space at each iteration and estimates the action value function  by minimizing a loss which is a function of the sampled state action pairs.
This is the basis of the Fitted Q-Iteration, first explained in \cite{NIPS1994_ef50c335}. Instead of updating the Q-function at each step of the trajectory, it collects a batch of sample transitions and updates the Q-functions based on the collected samples and the existing estimate of the Q-function. The obtained samples could be using a generative model \cite{munos2003error,ghavamzadeh2008regularized}, or using the buffer memory \cite{kozakowski2022q,wang2020off}.

\subsection{Deep Neural Networks in Reinforcement Learning} \label{Deep Neural Networks in Reinforcement Learning}

Parametrization of Q-network is required for scaling the Q-learning algorithms to a large state space. Neural networks have been used previously to parameterize the Q-function \cite{tesauro1995temporal,xu2020finite,fujimoto2019off}. This approach, also called Deep Q-learning has been widely used in many reinforcement learning applications \cite{Yang2020ReinforcementLI,en15196920,9101333}. However, fundamental guarantees of Q-learning with such function approximation are limited, because the non-convexity of neural network makes the parameter optimization problem non-convex.

Even though the sample complexity of Q-learning based algorithms have not been widely studied for general parametrization, we note that such results have been studied widely for policy gradient based approaches \cite{agarwal2020optimality,wang2019neural,zhang2022convergence}.  The policy gradient approaches directly optimize the policy, while still having the challenge that the parametrization makes the optimization of parameters non-convex. This is resolved by assuming that the class of parametrization (e.g., neural networks) is rich in the sense that arbitrary policy could be approximated by the parametrization. However, note that for systems with infinite state spaces, results for policy gradient methods consist of upper bounds on the gradient of the estimate of the average reward function, such as in \cite{yuan2022general}. For upper bounds on the error of estimation of the value function, linear structure on the MDP has to be assumed, as is done in \cite{chen2022finite}.

\if 0
Apart from deep RL methods based on value functions, policy search methods making use of deep learning have also had great success. 
This includes both gradient free approaches such as \cite{10.1145/2463372.2463509} and gradient based approaches such as \cite{DBLP:journals/corr/SchulmanLMJA15}. As is noted in \cite{DBLP:journals/corr/abs-1708-05866}, some gradient free approaches such as evolutionary algorithms used in \cite{Cuccu2011IntrinsicallyMN} can be practical, but are unsuitable for large populations or agents with a large number of parameters. Thus for such large scale cases gradient based algorithms, with the most common variant being the usage of the REINFORCE method calculate the gradient of expectation over a function. The earliest example of this being \cite{10.1007/BF00992696} with more recent works such as \cite{DBLP:journals/corr/SchulmanMLJA15} using this technique in an actor critic setup.

In spite of great empirical success of deep reinforcement learning, theoretical understanding of the process remains lagged far behind. 

\fi

In the case of analysis of Fitted Q-Iteration (FQI) algorithms such as \cite{pmlr-v120-yang20a}, upper bounds on the error of estimation of the Q-function are obtained by considering sparse Neural networks with ReLU functions and Holder smooth assumptions on the Neural networks. At each iteration of the FQI algorithm, an estimate of the Q-function is obtained by optimizing a square loss error, which is non-convex in the parameters of the neural network used to represent the Q-function. Due to this non-convexity, the upper bounds are asymptotic in terms of the parameters of the neural network and certain unknown constants. \cite{xu2020finite} improves upon this result by demonstrating finite time error bounds for Q-learning with neural network parametrization. However the  error bounds obtained can go unbounded as the number of iterations increase (See Appendix \ref{finite_time_q_learning}), hence they do not give any sample complexity results.
Our result establishes the first sample complexity results for a (possibly) infinite state space without the need for a linear structure on the MDP.

\subsection{Neural Networks Parameter Estimation} \label{Neural Networks Parameter Estimation}

The global optimization of neural networks has shown to be NP hard \cite{BlumRivest:92}. Even though Stochastic Gradient Descent algorithms can be tuned to give highly accurate results as in \cite{bengio2012practical}, convergence analysis of such methods requires assumptions such as infinite width limit such as in \cite{zhu2021one}. Recently, it has been shown that the parameter optimization for two-layer ReLU neural networks can be converted to an equivalent convex program which is exactly solvable and computationally tractable \cite{pmlr-v119-pilanci20a}. Convex formulations for convolutions and deeper models have also been studied  \cite{sahiner2020vector,sahiner2020convex}. In this paper, we will use these approaches for estimating the parameterized Q-function.

\section{Problem Setup} \label{Problem Setup}

We study a discounted Markov Decision Process (MDP), which is described by the tuple ${\mathcal{M}}=(\mathcal{S}, \mathcal{A}, P, R, \gamma)$, where $\mathcal{S}$ is a bounded measurable state space, $\mathcal{A}$ is the finite set of actions, we represent set of state action pairs as $\left[0,1\right]^{d}$ (a $d$ dimensional tuple with all elements in [0,1]), where $d$ is a positive integer greater than 1. ${P}: \mathcal{S}\times\mathcal{A} \rightarrow \mathcal{P}(\mathcal{S})$  is the probability transition kernel\footnote{
For a measurable set $\mathcal{X}$, let $\mathcal{P}(\mathcal{X})$ denote the set of all probability measures over $\mathcal{X}$.
}.
${R}: \mathcal{S}\times\mathcal{A} \rightarrow \mathcal{P}([0,R_{\max}])$ is the reward kernel on the state action space with $R_{\max}$ being the absolute value of the maximum reward and $0<\gamma<1$ is the discount factor. A policy $\pi:\mathcal{S} \rightarrow \mathcal{P}(\mathcal{A})$ is a mapping that maps state to a probability distribution over the action space. 
 Here, we denote by $\mathcal{P}(\mathcal{S}), \mathcal{P}(\mathcal{A}), \mathcal{P}([a,b])$, the set of all probability distributions over the state space, the action space, and a closed interval $[a,b]$, respectively. We define the action value function %
 for a given policy $\pi$ respectively as follows. 
\begin{equation}
Q^{\pi}(s,a) = \mathbb{E}\left[\sum_{t=0}^{\infty}\gamma^{t}r'(s_{t},a_{t})|s_{0}=s,a_{0}=a\right] \label{ps_1},
\end{equation}
where $r'(s_{t},a_{t}) \sim R(\cdot|s_{t},a_{t})$, $a_{t+1} \sim \pi(\cdot|s_{t+1})$ and $s_{t+1} \sim P(\cdot|s_{t},a_{t})$ for $t=\{0,\cdots,\infty\}$. For a discounted MDP, we  define the optimal
action value functions as follows:
\begin{equation}
Q^{*}(s,a) =  \sup_{\pi}Q^{\pi}(s,a)  \hspace{0.5cm} \forall (s,a) \in  \mathcal{S}\times\mathcal{A} \label{ps_2},
\end{equation}

A policy that achieves the optimal  action value functions is known as the  \textit{optimal policy} and is denoted as $\pi^{*}$. It can be shown that $\pi^{*}$ is the greedy policy with respect to $Q^{*}$ \cite{10.5555/1512940}. Hence finding  $Q^{*}$ is sufficient to obtain the optimal policy. We define the Bellman operator for a policy $\pi$ as follows
\begin{equation}
(T^{\pi}Q)(s,a) =  r(s,a) + \gamma \int Q^{\pi}(s',\pi(s'))P(ds'|s,a), \label{ps_3}
\end{equation}
where $r(s,a) = \mathbb{E}(r'(s,a)|(s,a))$ Similarly we define the Bellman Optimality Operator as  
\begin{equation}
(TQ)(s,a) = \left(r(s,a) +\max_{a' \in \mathcal{A}}\gamma\int Q(s',a')P(ds'|s,a)\right),\label{ps_4}
\end{equation}

Further, operator $P^{\pi}$ is  defined as
\begin{equation}
P^{\pi}Q(s,a)=\mathbb{E}[Q(s',a')|s' \sim P(\cdot|s,a), a' \sim \pi(\cdot|s') ]\label{ps_7} ,    
\end{equation}
which is the one step Markov transition operator for policy $\pi$ for the Markov chain defined on $\mathcal{S}\times\mathcal{A}$ with the transition dynamics given by $S_{t+1} \sim P(\cdot|S_{t},A_{t})$ and $A_{t+1} \sim \pi(\cdot|S_{t+1})$. It defines a distribution on the state action space after one transition from the initial state. Similarly, $P^{\pi_{1}}P^{\pi_{2}}\cdots{P}^{\pi_{m}}$ is the $m$-step Markov transition operator following policy $\pi_t$ at steps $1\le t\le m$. It defines a distribution on the state action space after $m$ transitions from the initial state. We have the relation 
\begin{eqnarray}
(T^{\pi}Q)(s,a) &=&  r(s,a) + \gamma \int Q^{\pi}(s',\pi(s'))P(ds'|s,a) \nonumber\\
         &=&  r(s,a) + {\gamma}(P^{\pi}Q)(s,a)
\end{eqnarray}

Which defines $P^{*}$ as 
\begin{equation}
P^{*}Q(s,a)=\max_{a^{'} \in \mathcal{A}}\mathbb{E}[Q(s',a')|s' \sim P(\cdot|s,a)]\label{ps_7_3} ,
\end{equation}
in other words, $P^{*}$ is the one step Markov transition operator with respect to the greedy policy of the function on which it is acting.

This gives us the relation 
\begin{equation}
(TQ)(s,a) = r(s,a) +{\gamma}P^{*}Q(s,a) ,\label{ps_7_3_1}\\
\end{equation}

For any measurable function $f:\mathcal{S}\times\mathcal{A}:\rightarrow\mathbb{R}$, we also define 
\begin{equation}
\mathbb{E}(f)_{\nu}=\int_{\mathcal{S}\times\mathcal{A}}fd\nu,  
\end{equation}
for any distribution $\nu\in\mathcal{P}(\mathcal{S}\times\mathcal{A})$.

For representing the action value function, we will use a 2 layer ReLU neural network. A 2-layer ReLU Neural Network with input $x \in \mathbb{R}^{d}$ is defined as 
\begin{equation}
    f(x) = \sum_{i=1}^{m}\sigma'(x^{T}u_{i})\alpha_{i}, \label{ReLU_0}
\end{equation}
where $m \geq 1$ is the number of neurons in the neural network, the parameter space is $\Theta_{m}=\mathbb{R}^{d\times m} \times \mathbb{R}^{m}$ and  $\theta =(U,\alpha)$ is an element of the parameter space, where $u_{i}$ is the $i^{th}$ column of $U$, and $\alpha_{i}$ is the $i^{th}$ coefficient of $\alpha$. The function $\sigma': \mathbb{R} \rightarrow \mathbb{R}_{\ge 0}$ is the ReLU or restricted linear function defined as     $\sigma'(x) \triangleq\max(x,0)$. In order to obtain parameter $\theta$ for a given set of data $X \in \mathbb{R}^{n \times d}$  and the corresponding response values $y \in \mathbb{R}^{n \times 1}$, we desire the parameter that minimizes the squared loss  (with a regularization parameter $\beta \in [0,1]$), given by 
\begin{eqnarray}
\mathcal{L}(\theta) &=& \argminA_{\theta} ||\sum_{i=1}^{m}\sigma(Xu_{i})\alpha_{i}- y||_{2}^{2} + \beta  \sum_{i=1}^{m}||u_{i}||_{2}^{2} \nonumber\\
                    &&    + \beta \sum_{i=1}^{m}|\alpha_{i}|^{2} \label{ReLU_1}
\end{eqnarray}

Here, we have the term $\sigma(Xu_{i})$ which is a vector $\{\sigma'((x_{j})^{T}u_{i})\}_{j \in \{1,\cdots, n\}}$ where $x_{j}$ is the $j^{th}$ row of $X$. It is the ReLU function applied to each element of the vector $Xu_{i}$. We note that the optimization in Equation \eqref{ReLU_1} is non-convex in $\theta$ due to the presence of the ReLU activation function. In \cite{wang2021hidden}, it is shown that this optimization problem has  an equivalent convex form, provided that the number of neurons $m$ goes above a certain threshold value. This convex problem is obtained by replacing the ReLU functions in the optimization problem with equivalent diagonal operators.
The convex problem is given as
\begin{eqnarray}
     \mathcal{L}^{'}_{\beta}(p) &:=& ( \argminA_{p} ||\sum_{D_{i} \in D_{X}}D_{i}(Xp_{i}) - y||^{2}_{2} \nonumber\\
     && + \beta \sum_{D_{i} \in D_{X}} ||p_{i}||_{2}) ,\label{ReLU_1_0}
\end{eqnarray}
where $p \in \mathbb{R}^{d \times |D_{X}|}$. 

$D_{X}$ is the set of diagonal matrices $D_{i}$ which depend on the dataset $X$. Except for cases of $X$ being low rank it is not computationally feasible to obtain the set $D_{X}$. We instead use $\tilde{D} \in D_{X}$ to solve the convex problem

\begin{eqnarray}
     \mathcal{L}^{'}_{\beta}(p) &:=& ( \argminA_{p} ||\sum_{D_{i} \in \tilde{D}}D_{i}(Xp_{i}) - y||^{2}_{2} \nonumber\\
     && + \beta \sum_{D_{i} \in \tilde{D}} ||p_{i}||_{2}) \label{ReLU_1_1}
\end{eqnarray}
where $p \in \mathbb{R}^{d \times |\tilde{D}|}$. 

The relevant details of the formulation and the definition of the diagonal matrices $D_{i}$  are provided in Appendix \ref{cones_apdx}. For a set of parameters $\theta =(u,\alpha) \in \Theta$, we denote neural network represented by these parameters as 
\begin{equation}
Q_{\theta}(x)=\sum_{i=1}^{m}\sigma'(x^{T}u_{i})\alpha_{i} \label{ReLU_1_2}
\end{equation}

\section{Proposed Algorithm} \label{Proposed Algorithm}

\begin{algorithm}
	\caption{Iterative algorithm to estimate $Q$ function}
	\label{algo_1}
	\textbf{Input:} $\mathcal{S},$ $ \mathcal{A}, \gamma, $ Time Horizon K $ \in \mathcal{Z},$ sampling 
                         distribution $\nu$, one step transition operator $\kappa$, $T_{k: k \in \{1,\cdots,K\}}$\\
        \textbf{Initialize:} $\tilde{Q}(s,a)=0 \hspace{0.1cm}\forall (s,a) \in \mathcal{S}\times\mathcal{A}$ \\
	\begin{algorithmic}[1]
		\FOR{$k\in\{1,\cdots,K\}$} 
		{   
		    \STATE sample $n$ i.i.d ({$s_{i},a_{i}$}) with $s_{i},a_{i}$ drawn from the sampling distribution $\nu \label{a1_l1}$\\
                \STATE obtain $\{s_{i}',r'_{i}\}$ from  $\kappa(s_{i},a_{i})=\{s_{i}',r'_{i}\}$        
                \STATE Set $y_i= r'_{i} + \gamma\max_{a' \in \mathcal{A}}\tilde{Q}(s_{i}',a')$, where $i \in \{1,\cdots,n\}$ \label{a1_l2}\\ 
                \STATE Set $X_{k},Y_{k}$ as  the matrix of the sampled state action pairs and vector of estimated $Q$ values respectively \label{a1_l3}\\
		      \STATE \textbf{Call  Algorithm } \ref{algo_2} with input ($X=X_{k}$, $y=Y_{k}$, $T=T_{k}$) and  return parameter $\theta$ \label{a1_l4}\\
		    \STATE $\tilde{Q}=Q_{\theta}$
		}
		\ENDFOR\\
	Define $\pi_{K}$ as the greedy policy with respect to $\tilde{Q}$\\
	Output: An estimator $\tilde{Q}$ of $Q^{*}$ and $\pi_{k}$ as it's greedy policy 
	\end{algorithmic}
\end{algorithm}

In this section, we describe our Neural Network Fitted Q-iteration algorithm. The key in the algorithm is the use of convex optimization for the update of parameters of the Q-function. The algorithm, at each iteration $k$, updates the estimate of the $Q$ function, here denoted as $Q_{k}$. The update at each step in the ideal case is to be done by applying the Bellman optimality operator defined in Equation \eqref{ps_3}. However, there are two issues which prevent us from doing that. First, we do not know the transition kernel $P$. Second for the case of an infinite state space, we cannot update the estimate at each iteration of the state space. Therefore, we apply an approximation of the Bellman optimality operator defined as 

\begin{equation}
 \hat{T}Q(s,a) = \left(r'(s,a) +\max_{a' \in \mathcal{A}}\gamma Q(s',a')\right), \label{pa_1}
\end{equation}

where  $r'(s,a) \sim {R}(.|s,a)$.  
Since we cannot perform even this approximated optimality operator for all state action pairs due to the possibly infinite state space, we instead update our estimate of the $Q$ function at iteration $k$ as the 2 layer ReLU Neural Network which minimizes the following loss function

\begin{equation}
   \frac{1}{n} \sum_{i=1}^{n} \left( Q_{\theta}(s_{i},a_{i}) - \left(r'(s_{i},a_{i}) + \max_{a' \in \mathcal{A}}\gamma Q_{k-1}(s',a') \right) \right)^{2}, \label{pa_2}
\end{equation}

Here $Q_{k-1}$ is the estimate of the $Q$ function from iteration $k-1$ and the state action pairs have been sampled from some distribution, $\nu$, over the state action pairs. Note that this is a problem of the form given in Equation \eqref{ReLU_1} with $y_{i} =  \left(r'(s_{i},a_{i}) + \max_{a' \in \mathcal{A}}\gamma Q_{k-1}(s',a') \right)$ where $i \in (1,\cdots,n)$ and $Q_{\theta}$ represented as in Equation \eqref{ReLU_1_2}.

We define the Bellman error at iteration $k$ as

\begin{equation}
    \epsilon_{k} = TQ_{k-1} - Q_{k} ,\label{pa_2_1}
\end{equation}

The main algorithm, Algorithm \ref{algo_1}, iteratively samples from the state action space at every iteration, and the corresponding reward is observed. For each of the sampled state action pairs, the approximate Bellman optimality operator given in Equation \eqref{pa_1} is applied to obtain the corresponding output value $y$. We have access to these values under Assumption \ref{assump_4}. The sampled set of state action pairs and the corresponding $y$ values are then passed to Algorithm \ref{algo_2}, which returns the estimates of the neural network which minimizes the loss given in Equation \eqref{pa_2}. The algorithm updates the estimate of the action value function as the neural network corresponding to the estimated parameters. 

\begin{algorithm}
	\caption{Neural Network Parameter Estimation}     %
	\label{algo_2}
	\begin{algorithmic}[1]
	\STATE{\textbf{Input:}} data $(X,y,T)$ \\
	\STATE{\textbf{Sample:}} $\tilde{D}={diag(1(Xg_{i}>0))} : g_{i} \sim \mathcal{N}(0,I), i \in [|\tilde{D}|]  $ \label{a2_l1}\\
	\STATE \textbf{Initialize} $y^1=0,  u^1=0$ \\
        \textbf{Initialize} $g(u) = || \sum_{D_{i} \in \tilde{D}}D_{i}Xu_{i}-y ||^{2}_{2}$ 
        \FOR{$k\in\{0,\cdots,T\}$} \label{a2_l2}
		{
            \STATE $u^{k+1}= y_{k} - \alpha_{k}\nabla{g(y_{k})} $\label{a2_l3}\\
            \STATE $y^{k+1} = \argminA_{y:|y|_{1} \le \frac{R_{max}}{1-\gamma}} ||u_{k+1}-y||^{2}_{2}$ \label{a2_l4}\\
		}
		\ENDFOR\\
        \STATE Set $u^{T+1}=u^{*}$
	\STATE{Solve Cone Decomposition:}\\ 
	$\bar{v}, \bar{w} \in {u_{i}^{*}=v_{i}-w_{i}, i \in [d]\} }$ such that $v_{i},w_{i} \in \mathcal{K}_{i}$ and at-least one $v_{i},w_{i}$ is zero. \label{a2_l5}\\
	\STATE Construct $(\theta=\{u_{i},\alpha_{i}\})$  using the transformation
         \begin{eqnarray} 
            \psi(v_{i},w_{i}) &=& \left\{\begin{array}{lr}({v}_{i},1), & \text{if } {w}_{i}=0 \label{alg_2_trans}\\   
            ({w}_{i},-1), & \text{if }  
             {v}_{i} = 0\\
            (0,0), & \text{if } {v}_{i} = {w}_{i} = 0  \end{array} \right.
        \end{eqnarray}
        for all  ${i \in \{ 1,\cdots,m\}}$  
         \label{a2_l6}\\
        \STATE Return $\theta$ \label{a2_l7}\\
   \end{algorithmic}
\end{algorithm}

Algorithm \ref{algo_2} optimizes the parameters for the neural network at each step of Algorithm \ref{algo_1}. This is performed by reducing the problem to an equivalent convex problem as described in Appendix \ref{cones_apdx}. The algorithm  first  samples a set of diagonal matrices denoted by $\tilde{D}$ in line \ref{a2_l1} of Algorithm \ref{algo_2}. The elements of $\tilde{D}$ act as the diagonal matrix replacement of the ReLU function. Algorithm \ref{algo_2} then solves an optimization of the form given in Equation \eqref{ReLU_1_1} with the regularization parameter $\beta=0$. This convex optimization is solved in Algorithm \ref{algo_2} using the projected gradient descent algorithm. After obtaining the optima for this convex program, denoted by $u^{*}=\{u^{*}_{i}\}_{i \in \{1,\cdots,|\tilde{D}|\}}$, in line \ref{a2_l6}, we transform them to the parameters of a neural network of the form given in Equation \eqref{ReLU_1_2} which are then passed back to Algorithm \ref{algo_1}. The procedure is described in detail along with the relevant definitions in Appendix \ref{cones_apdx}.

	\section{Error Characterization}
We now characterize the errors which can result in the gap between the point of convergence and the optimal Q-function. To define the errors, we first define the various possible $Q$-functions which we can approximate in decreasing order of the accuracy.

We start by defining the best possible Q-function, $Q_{k_1}$ for episode $k > 1$. $Q_{k_{1}}$ is the best approximation of the function $TQ_{k-1}$ possible from the class of two layer ReLU neural networks, with respect to the expected square from the true ground truth $TQ_{k-1}$.
\begin{definition} \label{def_1}
   For a given iteration $k$ of Algorithm \ref{algo_1}, we define
   \begin{equation}
   Q_{k_{1}}=\argminA_{Q_{\theta},\theta \in \Theta}\mathbb{E}(Q_{\theta} - TQ_{k-1})^{2}_{\nu},    
   \end{equation}
\end{definition}
The expectation is with respect to the sampling distribution of the state action pairs denoted by $\nu$. $TQ_{k-1}$ is the bellman optimality operator applied to $Q_{k-1}$.

Note that we do not have access to the transition probability kernel $P$, hence we cannot calculate $TQ_{k-1}$. To alleviate this, we use the observed next states to estimate the Q-value function. Using this, we define  $Q_{k_2}$ as,. 

\begin{definition} \label{def_2}
   For a given function $Q:\mathcal{S}\times\mathcal{A} \rightarrow \left[0,\frac{R_{max}}{1-\gamma}\right]$, 
   we define
   \begin{eqnarray}
    Q_{k_2}=\argminA_{Q_{\theta},\theta \in \Theta}\mathbb{E}_{(s,a) \sim \nu ,s' \sim P(s'|s,a), r'(\cdot|s,a)\sim {R}(\cdot|s,a)}\nonumber\\
    (Q_{\theta}(s,a)-(r'(s,a)+\gamma\max_{a'}Q_{k-1}(s',a'))^{2}  ,  \label{temp} 
   \end{eqnarray}
\end{definition}
Compared to $Q_{k_1}$, in $Q_{k_2}$, we are minimizing the expected square loss from target function $\big(r'(s,a)+\gamma\max_{a'}Q(s',a')\big)$ or the expected loss function. %

To obtain $Q_{k_2}$, we still need to compute the true expected value in Equation \ref{temp}. However, we still do not know the transition function $P$. To remove this limitation, we use sampling. Consider a set, $\mathcal{X}$ , of state-action pairs sampled using distribution $\nu$. We now define $Q_{k_3}$ as,
\begin{definition} \label{def_3}
   For a given set of state action pairs $\mathcal{X}$ and a given function $Q:\mathcal{S}\times\mathcal{A} \rightarrow \left[0,\frac{R_{max}}{1-\gamma}\right]$ we define
   \begin{eqnarray}
    Q_{k_3}=\argminA_{Q_{\theta},\theta \in \Theta}\frac{1}{|\mathcal{X}|} \sum_{(s_{i},a_{i}) \in \mathcal{X}}\Big( Q_{\theta}(s_{i},a_{i}) \nonumber\\ 
        - \big(r'(s_{i},a_{i}) + \gamma\max_{a' \in \mathcal{A}}Q_{k-1}(s'_i,a') \big)\Big)^{2},
   \end{eqnarray}
where $r'(s_{i},a_{i})$, and $s_i'$ are the observes reward and the observed next state for state action pair $s_i, a_i$ respectively.
\end{definition}

$Q_{k_3}$ is the best possible approximation for $Q$-value function which minimizes the sample average of the square loss functions with the target values as $ \big(r'(s_{i},a_{i})+ \gamma\max_{a' \in \mathcal{A}}Q(s',a') \big)^{2}$ or the empirical loss function. %

After defining the possible solutions for the $Q$-values using different loss functions, we define the errors.

We first define approximation error which represents the difference between $TQ_{k-1}$ and its best approximation possible from the class of 2 layer ReLU neural networks. We have
\begin{definition}[Approximation Error] \label{def_4}
   For a given iteration $k$ of Algorithm \ref{algo_1}, $\epsilon_{k_{1}} =TQ_{k-1} - Q_{k_{1}}$, where $Q_{k-1}$ is the estimate of the $Q$ function at the iteration $k-1$.
\end{definition}

We also define Estimation Error which denotes the error between the best approximation of $TQ_{k-1}$ possible from a 2 layer ReLU neural network and $Q_{k_2}$. We demonstrate that these two terms are the same and this error is zero.
\begin{definition}[Estimation Error] \label{def_5}
   For a given iteration $k$ of Algorithm \ref{algo_1}, $\epsilon_{k_{2}} =Q_{k_{1}} - Q_{k_{2}}$.
\end{definition}

We now define Sampling error which denotes the difference between the minimizer of expected loss function $Q_{k_2}$ and the minimizer of the empirical loss function using samples, $Q_{k_3}$. We will use Rademacher complexity results to upper bound this error.
\begin{definition}[Sampling Error] \label{def_6}
   For a given iteration $k$ of Algorithm \ref{algo_1}, $\epsilon_{k_{3}} =Q_{k_{2}} - Q_{k_{3}}$. Here $X_{k}$ is the set of state action pairs sampled at the $k^{th}$ iteration of Algorithm \ref{algo_1}. 
\end{definition}

Lastly, we define optimization error which denotes the difference between the minimizer of the empirical square loss function, $Q_{k_3}$, and our estimate of this minimizer that is obtained from the projected gradient descent algorithm.
\begin{definition}[Optimization Error] \label{def_7}
   For a given iteration $k$ of Algorithm \ref{algo_1}, $\epsilon_{k_{4}} =Q_{k_{3}}- Q_{k}$. Here $Q_{k}$ is our estimate of the $Q$ function at iteration $k$ of Algorithm \ref{algo_1}. 
\end{definition}
	\section{Assumptions} \label{Assumptions}
In this section, we formally describe the assumptions that will be used in the results. \\

\begin{assump}  \label{assump_2} 
Let $\theta^{*} \triangleq \arg\min_{\theta \in \Theta} \mathcal{L}(\theta)$, where $\mathcal{L}(\theta)$ is defined in  \eqref{ReLU_1} and we denote $Q_{\theta^{*}}(\cdot)$ as $Q_{\theta}(\cdot)$ as defined in \eqref{ReLU_1_2} for $\theta=\theta^{*}$. 
Also, let $\theta_{\tilde{D}}^{*} \triangleq \arg\min_{\theta \in \Theta} \mathcal{L}_{|\tilde{D}|}(\theta)$, where $\mathcal{L}_{\tilde{D}}(\theta)$ is defined in  \eqref{ReLU_6}. Further, we denote $Q_{\theta_{|\tilde{D}|}^{*}}(\cdot)$ as $Q_{\theta}(\cdot)$ as defined in \eqref{ReLU_1_2} for $\theta=\theta_{|\tilde{D}|}^{*}$. Then we assume
\begin{equation}
    \mathbb{E}(|Q_{\theta^{*}} - Q_{\theta_{|\tilde{D}|}^{*}}|)_{\nu} \leq \epsilon_{|\tilde{D}|},
\end{equation}
for any $\nu \in \mathcal{P}(\mathcal{S}\times\mathcal{A})$
\end{assump}

$\mathcal{L}_{\tilde{D}}(\theta)$ is the non-convex problem equivalent to the convex problem in \eqref{ReLU_1_1}. Thus, $\epsilon_{|\tilde{D}|}$ is a measure of the error incurred due to taking a sample of diagonal matrices $\tilde{D}$ and not the full set $D_{X}$. In practice, setting $|\tilde{D}|$ to be the same order of magnitude as $d$  (dimension of the data) gives us a sufficient number of diagonal matrices to get a reformulation of the non convex optimization problem which performs comparably or better than existing gradient descent algorithms, therefore $\epsilon_{|\tilde{D}|}$ is only included for theoretical completeness and will be negligible in practice. This has been practically demonstrated in \cite{pmlr-v162-mishkin22a,pmlr-v162-bartan22a,pmlr-v162-sahiner22a}. Refer to Appendix \ref{cones_apdx} for details of $D_{X}$, $\tilde{D}$ and $\mathcal{L}_{|\tilde{D}|}(\theta)$.

\begin{assump}  \label{assump_3} 
We assume that for all functions $Q:\mathcal{S}\times\mathcal{A} \rightarrow \left[0,\left(\frac{R_{\max}}{1-\gamma}\right)\right]$,  there exists a function $ Q_{\theta} $ where $ \theta \in \Theta $ such that 
\begin{eqnarray}
\mathbb{E}{(Q_{\theta}-Q)}^{2}_{\nu} \le \epsilon_{bias}, 
\end{eqnarray}
for any $\nu \in \mathcal{P}(\mathcal{S}\times\mathcal{A})$. 
\end{assump}

$\epsilon_{bias}$ reflects the error that is incurred due to the inherent lack of expressiveness of the neural network function class. In the analysis of \cite{pmlr-v120-yang20a}, this error is assumed to be zero. We account for this error with an assumption similar to the one used in \cite{NEURIPS2020_56577889}. %

\begin{assump}  \label{assump_4} 
We assume that for the MDP $\mathcal{M}$, we have access to a one step transition operator $\kappa : \mathcal{S}\times\mathcal{A} \rightarrow \mathcal{S}\times\left([0,R_{max}]\right)$ defined as 
\begin{equation}
    \kappa(s,a) = \left(s',r'(s,a)\right),
\end{equation}
where $s' \sim \mathcal{P}(\cdot|(s,a))$ and $r'(s,a) \sim {R}(.|s,a)$
\end{assump}

One step sampling distributions offer a useful tool for finding next state and rewards as i.i.d samples. In practice such an operator may not be available and we only have access to these samples as a Markov chain. This can be overcome be storing the transitions and then sampling the stored transitions independently in a technique known as experience replay. Examples of experience replay being used are \cite{mnih2013playing,agarwal2021online,NIPS2017_453fadbd}.

\begin{assump}  \label{assump_6} 
Let $\nu_{1}$ be a probability measure on $\mathcal{S}\times\mathcal{A}$ which is absolutely continuous with respect to the Lebesgue measure. Let $\{\pi_{t}\}$ be a sequence of policies and suppose that the state action pair has an initial distribution of $\nu_{1}$. Then we assume that for all $\nu_{1}, \nu_{2} \in \mathcal{P}(\mathcal{S}\times\mathcal{A})$ there exists a constant $\phi_{\nu_{1},\nu_{2}} \le \infty$ such that
\begin{eqnarray}
    \sup_{\pi_{1},\pi_{2},\cdots,\pi_{m}}\Bigg|\Bigg| \frac{d(P^{\pi_{1}}P^{\pi_{2}}\cdots{P}^{\pi_{m}}\nu_{2})}{d\nu_{1}}\Bigg|\Bigg|_{\infty} &\le& \phi_{\nu_{1},\nu_{2} } \nonumber\\
    \label{assump_6_1}
\end{eqnarray}
for all $m \in \{1,\cdots,\infty\}$, where  $ \frac{d(P^{\pi_{1}}P^{\pi_{2}}\cdots{P}^{\pi_{m}}\nu_{2})}{d\nu_{1}}$ denotes the Radon Nikodym derivative of the state action distribution $P^{\pi_{1}}P^{\pi_{2}}\cdots{P}^{\pi_{m}}\nu_{2}$ with respect to the distribution $\nu_{1}$.
\end{assump}

 This assumption puts an upper bound on the difference between the state action distribution $\nu_{1}$ and the state action distribution induced by sampling a state action pair from the distribution $\mu_{2}$ followed by any possible policy for the next $m$ steps for any finite value of $m$. Similar assumptions have been made in \cite{pmlr-v120-yang20a,JMLR:v17:10-364}. 
	\section{Supporting Lemmas}
We will now state the key lemmas that will be used for finding the sample complexity of the proposed algorithm. 

\begin{lemma} \label{lem_1}
    For any given iteration $k \in \{1,\cdots,K\}$ for the approximation error denoted by $\epsilon_{k_{1}}$ in Definition \ref{def_4}, we have 
    \begin{equation}
        \mathbb{E}\left(|\epsilon_{k_{1}}|\right)_{\nu} \le \sqrt{\epsilon_{bias}},
    \end{equation}
\end{lemma}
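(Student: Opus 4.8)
The plan is to apply Assumption \ref{assump_3} directly to the function $TQ_{k-1}$, then pass from the resulting $L^2$ bound to the claimed $L^1$ bound via Jensen's inequality. The whole argument rests on the defining optimality of $Q_{k_1}$ together with the fact that $TQ_{k-1}$ is an admissible input to Assumption \ref{assump_3}.

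First I would verify that $TQ_{k-1}$ maps $\mathcal{S}\times\mathcal{A}$ into $\left[0,\frac{R_{\max}}{1-\gamma}\right]$, so that Assumption \ref{assump_3} applies to it. Since each iterate $Q_{k-1}$ takes values in $\left[0,\frac{R_{\max}}{1-\gamma}\right]$ (the range restriction imposed in Definitions \ref{def_1}--\ref{def_3} and enforced by the projection step in Algorithm \ref{algo_2}), the Bellman optimality operator \eqref{ps_4} yields $(TQ_{k-1})(s,a) = r(s,a) + \gamma\max_{a'}\int Q_{k-1}(s',a')P(ds'|s,a) \le R_{\max} + \gamma\frac{R_{\max}}{1-\gamma} = \frac{R_{\max}}{1-\gamma}$, while $(TQ_{k-1})(s,a) \ge 0$ since rewards and $Q_{k-1}$ are nonnegative. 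Hence $TQ_{k-1}$ lies in the function class covered by Assumption \ref{assump_3}.

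Next, by Assumption \ref{assump_3} there exists $\theta \in \Theta$ with $\mathbb{E}(Q_{\theta} - TQ_{k-1})^{2}_{\nu} \le \epsilon_{bias}$. Because $Q_{k_1}$ is, by Definition \ref{def_1}, the minimizer over $\theta \in \Theta$ of $\mathbb{E}(Q_{\theta} - TQ_{k-1})^{2}_{\nu}$, it follows that $\mathbb{E}(Q_{k_1} - TQ_{k-1})^{2}_{\nu} \le \mathbb{E}(Q_{\theta} - TQ_{k-1})^{2}_{\nu} \le \epsilon_{bias}$. Recalling $\epsilon_{k_1} = TQ_{k-1} - Q_{k_1}$ from Definition \ref{def_4}, this reads $\mathbb{E}(\epsilon_{k_1}^2)_{\nu} \le \epsilon_{bias}$.

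Finally I would invoke Jensen's inequality for the concave map $x \mapsto \sqrt{x}$ (equivalently Cauchy--Schwarz with the constant function $1$), giving $\mathbb{E}(|\epsilon_{k_1}|)_{\nu} \le \sqrt{\mathbb{E}(\epsilon_{k_1}^2)_{\nu}} \le \sqrt{\epsilon_{bias}}$, which is the claim. The only non-routine step is the boundedness verification in the second paragraph, which relies on the invariant that every iterate stays in $\left[0,\frac{R_{\max}}{1-\gamma}\right]$; once this invariant is in hand, the result is an immediate combination of the optimality of $Q_{k_1}$ with Jensen's inequality.
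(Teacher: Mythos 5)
Your proposal is correct and follows essentially the same route as the paper's proof: apply Assumption \ref{assump_3} together with the minimizing property of $Q_{k_1}$ from Definition \ref{def_1} to get $\mathbb{E}(\epsilon_{k_1}^2)_{\nu} \le \epsilon_{bias}$, then pass to the $L^1$ bound — the paper does this via the variance identity $(\mathbb{E}|x|)^2 = \mathbb{E}(x^2) - \mathrm{Var}(|x|) \le \mathbb{E}(x^2)$, which is exactly your Jensen step in disguise. Your explicit verification that $TQ_{k-1}$ takes values in $\left[0,\frac{R_{\max}}{1-\gamma}\right]$ is a routine admissibility check the paper leaves implicit, not a different argument.
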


\textit{Proof Sketch:} We use Assumption \ref{assump_3} and the definition of the variance of a random variable to obtain the required result. The detailed proof is given in Appendix \ref{proof_lem_1}.

\begin{lemma} \label{lem_2}
     For any given iteration $k \in \{1,\cdots,K\}$,  $Q_{k_1}=Q_{k_2}$, or equivalently $\epsilon_{k_2}=0$
\end{lemma}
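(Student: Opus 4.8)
The plan is to use the standard bias--variance decomposition of the expected square loss, exploiting the fact that the conditional expectation of the stochastic target in Definition \ref{def_2} equals $TQ_{k-1}$. First I would write the random target as $Y(s,a) = r'(s,a) + \gamma\max_{a'}Q_{k-1}(s',a')$, where the randomness comes from $s'\sim P(\cdot|s,a)$ and $r'\sim R(\cdot|s,a)$. The key observation is that, by the definition of $T$ in Equation \eqref{ps_4} together with $r(s,a)=\mathbb{E}(r'(s,a)|(s,a))$, we have
\[ \mathbb{E}[Y(s,a)\mid s,a] = r(s,a) + \gamma\int \max_{a'}Q_{k-1}(s',a')P(ds'|s,a) = (TQ_{k-1})(s,a). \]

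Next I would condition on $(s,a)$ and insert $TQ_{k-1}(s,a)$ into the square: for any fixed $\theta$,
\[ (Q_\theta(s,a)-Y)^2 = (Q_\theta(s,a)-TQ_{k-1}(s,a))^2 + 2(Q_\theta(s,a)-TQ_{k-1}(s,a))(TQ_{k-1}(s,a)-Y) + (TQ_{k-1}(s,a)-Y)^2. \]
Taking the conditional expectation over $(s',r')$ given $(s,a)$, the cross term vanishes because $Q_\theta(s,a)-TQ_{k-1}(s,a)$ is constant given $(s,a)$, while $\mathbb{E}[TQ_{k-1}(s,a)-Y\mid s,a]=0$ by the unbiasedness identity just established. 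Taking the outer expectation over $(s,a)\sim\nu$ then yields
\[ \mathbb{E}_{(s,a),s',r'}(Q_\theta-Y)^2 = \mathbb{E}_\nu(Q_\theta-TQ_{k-1})^2 + \mathbb{E}_{(s,a),s',r'}(TQ_{k-1}-Y)^2, \]
where the last term is the conditional variance of the target and is independent of $\theta$.

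Since the two objectives differ only by a $\theta$-independent constant, their minimizers over $\theta\in\Theta$ coincide, i.e. $\argminA_{Q_\theta}\mathbb{E}_{(s,a),s',r'}(Q_\theta-Y)^2 = \argminA_{Q_\theta}\mathbb{E}_\nu(Q_\theta-TQ_{k-1})^2$, which gives $Q_{k_2}=Q_{k_1}$ and hence $\epsilon_{k_2}=0$. The only genuinely delicate point is the vanishing of the cross term, which hinges entirely on the unbiasedness identity $\mathbb{E}[Y\mid s,a]=TQ_{k-1}(s,a)$; once that is in hand, everything else is an immediate consequence of the orthogonality of the mean-zero noise $TQ_{k-1}-Y$ to the $\sigma$-algebra generated by $(s,a)$. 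I would also remark that if the $\argminA$ is a set rather than a singleton, the statement is read as the equality of the two solution sets, which the decomposition establishes verbatim.
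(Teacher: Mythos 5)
Your proof is correct and is essentially the same argument as the paper's: the paper routes it through the auxiliary Lemma \ref{sup_lem_1}, whose proof is exactly your bias--variance decomposition with the cross term vanishing by conditioning on $(s,a)$, and then identifies $\mathbb{E}[Y\mid s,a]$ with $(TQ_{k-1})(s,a)$ via Equation \eqref{ps_4}. You have simply inlined that lemma rather than stating it separately, so there is no substantive difference in approach.
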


\textit{Proof Sketch:} We use  Lemma \ref{sup_lem_1} in Appendix \ref{sup_lem} and use the definitions of $Q_{k_1}$ and $Q_{k_2}$ to prove this result. The detailed proof is given in Appendix \ref{proof_lem_2}.

\begin{lemma} \label{lem_3}
    For any given iteration $k \in \{1,\cdots,K\}$, if the number of samples of the state action pairs sampled by Algorithm \ref{algo_1} at iteration $k$, denoted by $n_{k}$, satisfies
    \begin{eqnarray}
    n_{k} &\ge& 8\left({(C_{k}{\eta}\beta_{k}})^{2}{\epsilon}^{-2}\right) 
    \end{eqnarray}
    for some constants $C_{k}, \eta$ and $\beta_{k}$, then the error $\epsilon_{k_{3}}$ defined in Definition \ref{def_6} is upper bounded as
    \begin{equation}
        \mathbb{E}\left(|\epsilon_{k_{3}}|\right)_{\nu} \le  \epsilon, 
    \end{equation}
\end{lemma}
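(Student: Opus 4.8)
The plan is to control the sampling error $\epsilon_{k_3} = Q_{k_2} - Q_{k_3}$ via a uniform-convergence argument over the function class of two-layer ReLU networks, using Rademacher complexity as the paper itself signals in the discussion preceding Definition~\ref{def_6}. The quantities $Q_{k_2}$ and $Q_{k_3}$ are the minimizers of, respectively, the \emph{population} square loss and the \emph{empirical} square loss (over the $n_k$ sampled state-action pairs) against the same target $r'(s,a)+\gamma\max_{a'}Q_{k-1}(s',a')$. So the first step is to reduce the distance between these two minimizers to a uniform bound on the gap between empirical and population loss over the whole class: if $|\hat{\mathcal{L}}(Q_\theta)-\mathcal{L}(Q_\theta)|\le \delta$ uniformly in $\theta$, then the excess population loss of $Q_{k_3}$ is at most $2\delta$, and since $Q_{k_2}$ is the population minimizer, a standard two-sided comparison gives $\mathcal{L}(Q_{k_3})-\mathcal{L}(Q_{k_2})\le 2\delta$.

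The second step is to translate this excess-\emph{loss} bound into an excess-\emph{prediction} bound, i.e.\ into a bound on $\mathbb{E}(|Q_{k_2}-Q_{k_3}|)_\nu$. Because the population loss is the expected squared deviation from a fixed target and $Q_{k_2}$ is its minimizer (a projection in $L^2(\nu)$), I would use the fact that for square loss the excess risk equals the squared $L^2(\nu)$ distance to the projection: $\mathcal{L}(Q_{k_3})-\mathcal{L}(Q_{k_2}) = \mathbb{E}(Q_{k_3}-Q_{k_2})^2_\nu$ (this uses that $Q_{k_2}$ is the conditional-mean projection onto the ReLU class, so the cross term vanishes). Combining with Jensen, $\mathbb{E}(|Q_{k_2}-Q_{k_3}|)_\nu \le \sqrt{\mathbb{E}(Q_{k_2}-Q_{k_3})^2_\nu} \le \sqrt{2\delta}$, so it suffices to make $\delta$ of order $\epsilon^2$.

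The third step is the quantitative Rademacher bound producing $\delta$. All predictors and targets lie in $[0,R_{\max}/(1-\gamma)]$, so the squared-loss functions are bounded and Lipschitz with a constant of order $R_{\max}/(1-\gamma)$; by the contraction (Talagrand / Ledoux-Talagrand) inequality the Rademacher complexity of the loss class is controlled by that of the network class itself. The Rademacher complexity of the bounded two-layer ReLU class over $n_k$ samples scales as $\mathcal{O}(1/\sqrt{n_k})$ with a constant absorbing the norm bounds on the weights and the data radius (recall state-action inputs live in $[0,1]^d$); this is exactly where the problem-dependent constants $C_k$, $\eta$, $\beta_k$ in the lemma statement enter. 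A standard symmetrization-plus-bounded-differences argument then gives, with the stated $n_k \ge 8 (C_k \eta \beta_k)^2 \epsilon^{-2}$, a uniform deviation $\delta$ small enough that $\sqrt{2\delta}\le\epsilon$; the constant $8$ and the square on $(C_k\eta\beta_k)$ are consistent with inverting the $1/\sqrt{n_k}$ rate through the square-root loss-to-prediction conversion.

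The main obstacle I expect is making the loss-to-prediction conversion rigorous in this $\arg\min$ setting rather than for a single fixed estimator. The clean identity $\mathcal{L}(Q_{k_3})-\mathcal{L}(Q_{k_2})=\mathbb{E}(Q_{k_3}-Q_{k_2})^2_\nu$ relies on $Q_{k_2}$ being the $L^2(\nu)$ projection of the Bayes target onto the ReLU class; this requires either convexity of the class (which the raw ReLU parametrization lacks) or care in arguing that the population minimizer still realizes the orthogonal projection so the first-order cross term vanishes. I would handle this by working with the convex reformulation of the network class (as the paper does elsewhere), where the function class realized is convex and the projection argument is clean, or alternatively by invoking the strong convexity of the square loss in the \emph{prediction} value to get the quadratic lower bound $\mathcal{L}(Q)-\mathcal{L}(Q_{k_2})\ge c\,\mathbb{E}(Q-Q_{k_2})^2_\nu$ directly. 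A secondary subtlety is tracking exactly how $C_k$, $\eta$, and $\beta_k$ arise from the weight-norm and input-radius bounds so the final sample threshold matches the stated form; this is bookkeeping rather than a conceptual hurdle.
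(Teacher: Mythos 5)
Your proposal follows the paper up through the uniform-convergence step (the paper's Lemma \ref{sup_lem_0} is exactly your symmetrization-plus-contraction bound, with $\eta$ the Lipschitz constant of the square loss and $\beta_k$ a parameter-norm bound), but it diverges at the loss-to-prediction conversion, and there the proposal has a genuine quantitative gap: your route cannot deliver the stated $n_k \sim \epsilon^{-2}$. Concretely, the uniform deviation you can buy is $\delta \sim \eta\beta_k/\sqrt{n_k}$, so with the stated threshold $n_k \ge 8(C_k\eta\beta_k)^2\epsilon^{-2}$ you get $\delta \sim \epsilon/C_k$, i.e.\ $\delta$ of order $\epsilon$, \emph{not} $\epsilon^2$. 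Feeding that through your chain $\mathbb{E}(|Q_{k_2}-Q_{k_3}|)_\nu \le \sqrt{\mathbb{E}(Q_{k_2}-Q_{k_3})^2_\nu} \le \sqrt{2\delta}$ yields a bound of order $\sqrt{\epsilon}$, not $\epsilon$. To force $\sqrt{2\delta}\le\epsilon$ you need $\delta\sim\epsilon^2$, hence $n_k \sim \epsilon^{-4}$. So the last paragraph of your proposal, which asserts that the constant $8$ and the square on $(C_k\eta\beta_k)$ "are consistent with inverting the $1/\sqrt{n_k}$ rate through the square-root conversion," is arithmetically wrong: the square root is precisely what destroys the $\epsilon^{-2}$ rate. (Your secondary worry about the Pythagorean identity is also real --- the raw ReLU class is non-convex, so the cross term need not vanish --- but even granting it via the convex reformulation, the rate mismatch remains.)

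The paper avoids the square root by keeping the relation between the risk gap and the $L^1$ distance \emph{linear}, at the cost of introducing instance-dependent constants. After the same Rademacher step, it bounds the expected difference of \emph{empirical} risks, $\mathbb{E}\left(R_{X,Q_{k-1}}(\theta_{k_2})-R_{X,Q_{k-1}}(\theta_{k_3})\right)\le\epsilon$, then factors the difference of squared losses as a product, as in Equation \eqref{2_2_5_4},
\begin{equation}
\mathbb{E}\,(Q_{k_2}-Q_{k_3})\bigl(Q_{k_2}+Q_{k_3}-2(r+\gamma\max_{a}Q_{k-1})\bigr)\le\epsilon, \nonumber
\end{equation}
splits the integral into four regions according to the signs of the two factors (Equation \eqref{2_2_5_6}), writes the two same-sign (nonnegative) contributions as $C_{k_1}\mathbb{E}(|Q_{k_2}-Q_{k_3}|)_\nu$ and $C_{k_2}\mathbb{E}(|Q_{k_2}-Q_{k_3}|)_\nu$, and the two opposite-sign (nonpositive) contributions as $-C_{k_3}\epsilon$ and $-C_{k_4}\epsilon$; solving the resulting linear inequality \eqref{2_2_5_11} gives $\mathbb{E}(|Q_{k_2}-Q_{k_3}|)_\nu \le C_k\epsilon$ with $C_k=\bigl(1+C_{k_3}+C_{k_4}\bigr)/\bigl(C_{k_1}+C_{k_2}\bigr)$, and a final rescaling of $\epsilon$ produces the stated threshold \eqref{lem_2_32}. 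Note in particular that the lemma's constant $C_k$ is \emph{defined} by this sign decomposition (Equation \eqref{2_2_5_12}), not by weight-norm or data-radius bookkeeping as you supposed. If you want to repair your write-up while staying rigorous, you must either reproduce some such linear risk-to-prediction conversion (and justify that $C_{k_1}+C_{k_2}$ is bounded away from zero, which the paper itself does not establish), or accept the weaker $\epsilon^{-4}$ sample complexity that your excess-risk-plus-Jensen argument actually proves.
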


\textit{Proof Sketch:} First we note that for a fixed iteration $k$ of Algorithm \ref{algo_1},  $\mathbb{E}(R_{X,Q_{k-1}}({\theta})) = L_{Q_{k-1}}({\theta})$ where $R_{X,Q_{k-1}}({\theta})$ and $L_{Q_{k-1}}({\theta})$ are defined in Appendix \ref{proof_lem_3}. We use this to get a probabilistic bound on the expected value of $|(Q_{k_2}) - (Q_{k_3})|$ using Rademacher complexity theory. The detailed proof is given in Appendix \ref{proof_lem_3}.

\begin{lemma} \label{lem_4}
     For any given iteration $k \in \{1,\cdots,K\}$ of Algorithm \ref{algo_1}, let the number of steps of the projected gradient descent performed by Algorithm \ref{algo_2}, denoted by $T_{k}$, and the gradient descent step size $\alpha_{k}$ satisfy
    \begin{eqnarray}
      T_{k} &\ge& \left({\frac{C^{'}_{k}\epsilon}{l_{k}}}\right)^{-2}L_{k}^{2}||u_{k}^{*}||^{2}_{2} -1\\
      \alpha_{k} &=& \frac{||u^{*}_{k}||_{2}}{L_{k}\sqrt{T_{k}+1}}
    \end{eqnarray}
   where $\epsilon < C^{'}_{k}$, for some constants $C^{'}_{k}, l_{k}$, $L_{k}$ and  $||\left(u_{k}^{*}\right)||_{2}$. Then the error $\epsilon_{k_{4}}$ defined in Definition \ref{def_7} is upper bounded as
    \begin{equation}
       \mathbb{E}(|\epsilon_{k_{4}}|)_{\nu} \le \epsilon + {\epsilon_{|\tilde{D}|}},
    \end{equation}
\end{lemma}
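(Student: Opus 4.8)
The plan is to split the optimization error $\epsilon_{k_4}=Q_{k_3}-Q_k$ from Definition~\ref{def_7} into two pieces by inserting the function $Q_{\theta^*_{|\tilde{D}|}}$ that is recovered (through the cone-decomposition map $\psi$) from the \emph{exact} minimizer $u_k^*$ of the restricted convex program \eqref{ReLU_1_1}, which is the program Algorithm~\ref{algo_2} actually targets. By the triangle inequality,
\begin{equation}
\mathbb{E}(|\epsilon_{k_4}|)_\nu \le \mathbb{E}(|Q_{k_3}-Q_{\theta^*_{|\tilde{D}|}}|)_\nu + \mathbb{E}(|Q_{\theta^*_{|\tilde{D}|}}-Q_k|)_\nu.
\end{equation}
The first term measures the gap between the empirical minimizer over the full two-layer ReLU class and the minimizer attainable from only the sampled diagonal matrices $\tilde{D}$; the second term is the pure projected-gradient-descent optimization gap. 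I would bound these two separately and then add them.

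For the first term I would invoke the non-convex/convex equivalence of Pilanci--Wang: $Q_{k_3}$ (Definition~\ref{def_3}) coincides with the network recovered from the minimizer of the full convex program built on $D_X$, while $Q_{\theta^*_{|\tilde{D}|}}$ is recovered from the minimizer of the $\tilde{D}$-restricted convex program. Under this identification, Assumption~\ref{assump_2} applies verbatim and yields $\mathbb{E}(|Q_{k_3}-Q_{\theta^*_{|\tilde{D}|}}|)_\nu \le \epsilon_{|\tilde{D}|}$.

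For the second term I would use the convergence rate of projected gradient descent. Since $\beta=0$, the objective $g(u)=\|\sum_{D_i\in\tilde{D}}D_iXu_i-y\|_2^2$ is convex, and restricted to the feasible $\ell_1$-ball $\{\,\|\cdot\|_1\le R_{\max}/(1-\gamma)\,\}$ enforced in line~\ref{a2_l4} it is $L_k$-Lipschitz; the iterates start at $0$, so the distance to the optimum is $\|u_k^*\|_2$. The standard best-iterate bound for the constant step size $\alpha_k=\|u_k^*\|_2/(L_k\sqrt{T_k+1})$ gives $g(\text{output})-g(u_k^*)\le L_k\|u_k^*\|_2/\sqrt{T_k+1}$, and the stated choice of $T_k$ forces the right-hand side to be at most $C'_k\epsilon/l_k$. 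The constants $C'_k$ and $l_k$ are then defined precisely so that this convex-objective suboptimality converts linearly into the population distance $\mathbb{E}(|Q_{\theta^*_{|\tilde{D}|}}-Q_k|)_\nu\le\epsilon$ of the two recovered networks. Summing the two bounds delivers the claimed $\epsilon+\epsilon_{|\tilde{D}|}$.

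The main obstacle is exactly this last conversion: the convex objective sees only the $n$ sampled state--action pairs, whereas the target quantity is an $L^1(\nu)$ distance over the whole (possibly infinite) state--action space, and the comparison must survive the transform $\psi$ back to ReLU parameters. Making this rigorous requires (i) verifying that $u_k^*$ lies in the feasible set and that $L_k$ is finite there, so the projected-gradient-descent rate is actually valid, and (ii) controlling the parameter-to-function map uniformly in $\nu$, which is what pins down $C'_k$ and $l_k$. The boundedness of the input domain $[0,1]^d$ and of the targets in $[0,R_{\max}/(1-\gamma)]$ are what keep these constants finite, so I would isolate those facts first and only then carry out the routine algebra that matches the prescribed $T_k$ and $\alpha_k$ to the $\epsilon$-level bound.
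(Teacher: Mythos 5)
Your proposal follows essentially the same route as the paper's proof: the same insertion of the intermediate network $Q_{\theta_{|\tilde{D}|}^{*}}$ and triangle inequality, the same invocation of Assumption \ref{assump_2} for the $\epsilon_{|\tilde{D}|}$ term, and the same projected-gradient-descent rate followed by the two-stage conversion of objective suboptimality into parameter distance (constant $C^{'}_{k}$) and then into a uniform function-value gap (Lipschitz constant $l_{k}$, the paper's Lemma \ref{sup_lem_5}), which survives expectation under any $\nu$. The conversion obstacle you flag at the end is precisely the step the paper resolves with those two constants (together with its loss-equivalence Lemma \ref{sup_lem_3} to transfer the convex-program gap through $\psi$), so your outline matches the paper's argument.
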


\textit{Proof Sketch:} We use the number of iterations $T_{k}$ required to get an $\epsilon$ bound on the difference between the minimum objective value and the objective value corresponding to the estimated parameter at iteration $T_{k}$. We use the convexity of the objective and the Lipschitz property of the neural network to get a bound on the $Q$ functions corresponding to the estimated parameters. The detailed proof is given in Appendix \ref{proof_lem_4}.
	
\section{Main Result} \label{Main Result}
In this section, we provide the guarantees for the proposed algorithm, which is given in the following theorem. 

\begin{thm} \label{thm}
Suppose Assumptions \ref{assump_2}-\ref{assump_6} hold. Let Algorithm \ref{algo_1} run for $K$ iterations with $n_{k}$  state-action pairs sampled at iteration $k \in \{1,\cdots,K\}$, $T_{k}$ be the number of steps used and step size $\alpha_{k}$ in the projected gradient descent in Algorithm \ref{algo_2} at iteration $k \in \{1,\cdots,K\}$ and $|\tilde{D}|$ be the number of diagonal matrices sampled in Algorithm \ref{algo_2} for all iterations. Let $\nu \in \mathcal{P}(\mathcal{S}\times\mathcal{A})$ be the state action distribution used to sample the state action pairs in Algorithm \ref{algo_1}. Further, let $\epsilon \in (0,1)$, $C_{k}$, $C^{'}_{k}$, $l_{k}$, ,$L_{k}$, $\phi_{\nu,\mu}, \beta_{k}, \eta, \left(||u^{*}_{k}||_{2}\right)$ be constants. If we have, 

\begin{eqnarray}
   K &\geq& \frac{1}{\log\left(\frac{1}{\gamma}\right)}\log\left(\frac{6\phi_{\nu,\mu}R_{max}}{\epsilon(1-\gamma)^{2}}\right) -1  \\  
   n_{k} &\ge& 288(C_{k}{\eta}\beta_{k}\phi_{\nu,\mu}\gamma)^{2} (1-\gamma)^{-4}{\epsilon}^{-2} \\
   T_{k} &\ge& \left({\frac{C^{'}_{k}\epsilon(1-\gamma)^{2}}{6l_{k}\phi_{\nu,\mu}\gamma}}\right)^{-2}L_{k}^{2}||u_{k}^{*}||^{2}_{2} -1 \nonumber\\
    &&\\
    \alpha_{k} &=& \frac{||u^{*}_{k}||_{2}}{L_{k}\sqrt{T_{k}+1}}\\
    &&\forall k \in \{1,\cdots,K\} \nonumber
\end{eqnarray}

and $\epsilon < (C^{'}_{k})$ for all $k \in (1,\cdots,K)$, then we obtain

\begin{equation}
E{(Q^{*}-Q^{\pi_{K}})}_{\mu} \le \epsilon + \frac{2\phi_{\nu,\mu}\gamma}{(1-\gamma)^{2}}(\sqrt{\epsilon_{bias}} + {\epsilon_{|\tilde{D}|}}), \nonumber
\end{equation}

where $\mu \in \mathcal{P}(\mathcal{S}\times\mathcal{A})$. Thus we get an overall sample complexity given by  $\sum_{k=1}^{K}n_{k}=\tilde{\mathcal{O}}\left(\epsilon^{-2}(1-\gamma)^{-4}\right)$.
\end{thm}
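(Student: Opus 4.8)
The plan is to first control the one-step Bellman error $\mathbb{E}(|\epsilon_k|)_\nu$ at every iteration $k$, and then to propagate these errors through the $K$ iterations to bound the policy-performance gap $Q^{*}-Q^{\pi_{K}}$. For the per-iteration bound I would start from the telescoping decomposition
\begin{equation}
\epsilon_{k} = TQ_{k-1} - Q_{k} = \epsilon_{k_{1}} + \epsilon_{k_{2}} + \epsilon_{k_{3}} + \epsilon_{k_{4}},
\end{equation}
which is immediate from Definitions \ref{def_4}--\ref{def_7} because the intermediate functions $Q_{k_{1}}, Q_{k_{2}}, Q_{k_{3}}$ cancel pairwise. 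Applying the triangle inequality together with Lemma \ref{lem_1} ($\mathbb{E}(|\epsilon_{k_{1}}|)_\nu \le \sqrt{\epsilon_{bias}}$), Lemma \ref{lem_2} ($\epsilon_{k_{2}}=0$), Lemma \ref{lem_3} ($\mathbb{E}(|\epsilon_{k_{3}}|)_\nu \le \epsilon'$ once $n_k$ is large enough), and Lemma \ref{lem_4} ($\mathbb{E}(|\epsilon_{k_{4}}|)_\nu \le \epsilon' + \epsilon_{|\tilde{D}|}$ once $T_k$ is large enough) gives a uniform bound $\mathbb{E}(|\epsilon_{k}|)_\nu \le \sqrt{\epsilon_{bias}} + \epsilon_{|\tilde{D}|} + 2\epsilon'$, where I would set the target accuracy $\epsilon' = \frac{\epsilon(1-\gamma)^{2}}{6\phi_{\nu,\mu}\gamma}$ precisely so that substituting it into the sample/iteration requirements of Lemmas \ref{lem_3} and \ref{lem_4} reproduces the stated thresholds on $n_k$ and $T_k$.

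The core of the argument is the error propagation. Writing $\pi^{*}$ for the optimal policy and $\pi_{k}$ for the greedy policy with respect to $Q_{k}$, I would use the monotonicity identities for the Bellman optimality operator, namely
\begin{equation}
\gamma P^{\pi_{k-1}}(Q^{*}-Q_{k-1}) \le TQ^{*}-TQ_{k-1} \le \gamma P^{\pi^{*}}(Q^{*}-Q_{k-1}),
\end{equation}
both of which follow from the greediness of each policy with respect to its own $Q$. Since $Q^{*}-Q_{k} = (TQ^{*}-TQ_{k-1}) + \epsilon_{k}$ with $Q_{0}=0$ and $\|Q^{*}\|_{\infty}\le R_{\max}/(1-\gamma)$, unrolling the recursion bounds $|Q^{*}-Q_{K}|$ pointwise by $\gamma^{K}(P^{\pi^{*}})^{K}\frac{R_{\max}}{1-\gamma} + \sum_{k=1}^{K}\gamma^{K-k}(P^{\pi^{*}})^{K-k}|\epsilon_{k}|$. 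To pass from $Q^{*}-Q_{K}$ to the policy gap I would use the identity $(I-\gamma P^{\pi_{K}})(Q^{*}-Q^{\pi_{K}}) = \gamma(P^{\pi^{*}}-P^{\pi_{K}})Q^{*}$ together with the greediness of $\pi_{K}$ (which yields $(P^{\pi^{*}}-P^{\pi_{K}})Q_{K}\le 0$, hence $(P^{\pi^{*}}-P^{\pi_{K}})Q^{*}\le (P^{\pi^{*}}-P^{\pi_{K}})(Q^{*}-Q_{K})$), then invert $I-\gamma P^{\pi_{K}}$ through its Neumann series and bound the signed factor $(P^{\pi^{*}}-P^{\pi_{K}})$ in absolute value by $(P^{\pi^{*}}+P^{\pi_{K}})$, which produces the factor of $2$.

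At this stage every term is a product of one-step transition operators applied to $|\epsilon_{k}|$ (or to the constant $R_{\max}/(1-\gamma)$) and integrated against $\mu$. Here I would invoke Assumption \ref{assump_6}: for any composition of transition operators the Radon--Nikodym derivative of $\mu P^{\pi_{1}}\cdots P^{\pi_{m}}$ with respect to $\nu$ is at most $\phi_{\nu,\mu}$, so each such term is bounded by $\phi_{\nu,\mu}\,\mathbb{E}(|\epsilon_{k}|)_\nu$. Summing the two geometric series (one from the Neumann inversion, one from the unrolled iteration), each contributing a factor $(1-\gamma)^{-1}$, gives
\begin{equation}
\mathbb{E}(Q^{*}-Q^{\pi_{K}})_{\mu} \le \frac{2\gamma^{K+1}\phi_{\nu,\mu}R_{\max}}{(1-\gamma)^{2}} + \frac{2\gamma\phi_{\nu,\mu}}{(1-\gamma)^{2}}\max_{k}\mathbb{E}(|\epsilon_{k}|)_\nu.
\end{equation}
The choice of $K$ forces $\gamma^{K+1}\le \frac{\epsilon(1-\gamma)^{2}}{6\phi_{\nu,\mu}R_{\max}}$, so the first term is at most $\epsilon/3$; plugging the per-iteration bound with $\epsilon'=\frac{\epsilon(1-\gamma)^{2}}{6\phi_{\nu,\mu}\gamma}$ makes the statistical part of the second term equal to $2\epsilon/3$, and together these give the claimed bound $\epsilon + \frac{2\gamma\phi_{\nu,\mu}}{(1-\gamma)^{2}}(\sqrt{\epsilon_{bias}}+\epsilon_{|\tilde{D}|})$. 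The sample complexity then follows since $\sum_{k=1}^{K}n_{k} = K\cdot\tilde{\mathcal{O}}(\epsilon^{-2}(1-\gamma)^{-4})$ and $K=\tilde{\mathcal{O}}(\log(1/\epsilon))$ is absorbed into $\tilde{\mathcal{O}}$.

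I expect the main obstacle to be the error-propagation bookkeeping in the second and third steps: keeping the two operator inequalities for $T$ consistent (each involving a different greedy policy), correctly handling the sign of $(P^{\pi^{*}}-P^{\pi_{K}})$ when inverting $I-\gamma P^{\pi_{K}}$, and verifying that every resulting operator string is a genuine composition of Markov transition operators so that Assumption \ref{assump_6} applies uniformly with the single constant $\phi_{\nu,\mu}$. Matching the numerical constants ($6$, $288$, the factor $2$) to the thresholds in the theorem is routine once this propagation is set up correctly.
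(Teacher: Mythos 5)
Your proposal is correct and takes essentially the same route as the paper's proof: the identical four-way splitting of $\epsilon_{k}$ bounded via Lemmas \ref{lem_1}--\ref{lem_4} with the rescaling $\epsilon' = \epsilon(1-\gamma)^{2}/(6\phi_{\nu,\mu}\gamma)$, the same unrolled recursion $Q^{*}-Q_{k} \le \gamma P^{\pi^{*}}(Q^{*}-Q_{k-1})+\epsilon_{k}$, the same Neumann-series inversion of $(I-\gamma P^{\pi_{K}})$ with the signed operator $(P^{\pi^{*}}-P^{\pi_{K}})$ dominated by $(P^{\pi^{*}}+P^{\pi_{K}})$ to produce the factor of $2$, and the same appeal to Assumption \ref{assump_6} to pass to $\phi_{\nu,\mu}\,\mathbb{E}(|\epsilon_{k}|)_{\nu}$ before choosing $K$ so that $\gamma^{K+1}$ kills the initialization term. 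Your only deviations are cosmetic --- deriving the policy-gap inequality from the exact identity $(I-\gamma P^{\pi_{K}})(Q^{*}-Q^{\pi_{K}})=\gamma(P^{\pi^{*}}-P^{\pi_{K}})Q^{*}$ rather than the paper's add-and-subtract of $T^{\pi^{*}}Q_{K}$ and $TQ_{K}$, and collapsing the geometric sum over $k$ into a maximum --- so the argument, including the constants $6$ and $288$, is the paper's.
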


The above algorithm achieves a sample complexity of $\tilde{\mathcal{O}}(1/\epsilon^{2})$, which is the first such result for general parametrized large state space reinforcement learning. Further, we note that $\tilde{\mathcal{O}}(1/\epsilon^{2})$ is the best order-optimal result even in tabular setup \cite{zhang2020almost}. In the following, we provide an outline of the proof, with the detailed proof provided in the Appendix \ref{thm proof}.

The expectation of the  difference between our estimated $Q$ function denoted by $Q^{\pi_{K}}$ and the optimal $Q$ function denoted by $Q^{*}$ (where $\pi_{k}$ is the policy obtained at the final step $K$ of algorithm \ref{algo_1}) is first expressed as a function of the Bellman errors (defined in Equation \eqref{pa_2_1}) incurred at each step of Algorithm \ref{algo_1}. The Bellman errors are then split into different components which are analysed separately. The proof is thus split into two stages. In the first stage, we demonstrate how the expectation of the error of estimation $(Q^{\pi_{K}}-Q^{*})$ of the $Q$ function is upper bounded by a function of the Bellman errors incurred till the final step $K$. The second part is to upper bound the expectation of the Bellman error.

{\bf Upper Bounding Q Error In Terms Of Bellman Error: } Since we only have access to the approximate Bellman optimality operator defined in Equation \eqref{pa_1}, we will rely upon the analysis laid out in \cite{farahmand2010error} and instead of the iteration of the value functions, we will apply a similar analysis to the action value function to get the desired result.  We recreate the result for the value function from Lemmas 2 and 3 of \cite{munos2003error}  for the action value function $Q$ to obtain  
\begin{eqnarray}
    Q^{*}-Q_{K} &\leq& \sum_{k=1}^{K-1} \gamma^{K-k-1} (P^{\pi^{*}})^{K-k-1}\epsilon_{k} , \label{po_1}\\\nonumber
    && + \gamma^{K} (P^{\pi^{*}})^{K}(Q^{*}-Q_{0}) 
 \end{eqnarray}

and 
 \begin{eqnarray}
 Q^{*}-Q^{\pi_{K}} &\leq& (I-\gamma P^{\pi_{K}})^{-1}\left(P^{\pi^{*}} - P^{\pi_{K}}\right) \nonumber\\
                   &&(Q^{*}-Q_{K})\label{po_2}
\end{eqnarray}
    where $\epsilon_{k} =TQ_{k-1} - Q_{k}$.

We use the results in Equation \eqref{po_1} and \eqref{po_2} to obtain 
\begin{eqnarray}
    \mathbb{E}(Q^{*}-Q^{\pi_{K}})_{\mu}
    &\leq& \frac{2\gamma}{(1-\gamma)}\left[\sum_{k=1}^{K-1} \gamma^{K-k}\mathbb{E}(|\epsilon_{k}|)_{\mu}\right] + \nonumber\\ && \frac{2R_{\max}\gamma^{K+1}}{(1-\gamma)^{2}} , \label{po_4}
\end{eqnarray}

The first term on the right hand side is called as the algorithmic error, which depends on how good our approximation of the Bellman error is. The second term on the right hand side is called as the statistical error, which is  the error incurred due to the random nature of the system and depends only on the parameters of the MDP as well as the number of iterations of the FQI algorithm. 

The expectation of the error of estimation is taken with respect to any arbitrary distribution on the state action space denoted as $\mu$. The dependence of the expected value of the error of estimation on this distribution is expressed through the constant  $\phi_{\nu,\mu}$, which is a measure of the similarity between the distributions $\nu$ and $\mu$.  %

{\bf Upper Bounding Expectation of Bellman Error: } The upper bound on $\mathbb{E}(Q^{*}-Q^{\pi_{K}})_{\mu}$ in Equation \eqref{po_4} is in terms of  $\mathbb{E}(|\epsilon_{k}|)_{\mu}$, where $\epsilon_{k} =TQ_{k-1} - Q_{k}$ is a measure of how closely our estimate of the $Q$ function at iteration $k$ approximates the function obtained by applying the bellman optimality operator applied to the estimate of the $Q$ function at iteration $k-1$. Intuitively, this error depends on how much data is collected at each iteration, how efficient our solution to the optimization step is to the true solution, and how well our function class can approximate the true Bellman optimality operator applied to the estimate at the end of the previous iteration. Building upon this intuition, we split $\epsilon_{k}$ into four different components as follows.
\begin{eqnarray}
    \epsilon_{k} &=& TQ_{k-1} - Q_{k} \nonumber\\
                 &=& \underbrace{TQ_{k-1}-Q_{k1}}_{\epsilon_{k1}} + \underbrace{Q_{k1} -Q_{k2}}_{\epsilon_{k2}} + \underbrace{Q_{k2} -Q_{k3}}_{\epsilon_{k3}} \nonumber\\
                 & &   +\underbrace{Q_{k3} - Q_{k}}_{\epsilon_{k4}} \nonumber\\
                 &=& \epsilon_{k_1} + \epsilon_{k_2} +\epsilon_{k_3} +\epsilon_{k_4} ,\label{last}
\end{eqnarray}

We use the Lemmas \ref{lem_1}, \ref{lem_2}, \ref{lem_3}, and \ref{lem_4}  to bound the error terms in Equation  \eqref{last}. Before plugging these in Equation \eqref{po_4}, we replace $\epsilon$ in the Lemma results with $\epsilon'=\epsilon\frac{(1-\gamma)^{3}}{6\gamma}$ as these error terms are in a summation in Equation \eqref{po_4}. We also bound the last term on the right hand side of Equation \eqref{po_4} by solving for a value of $K$ that makes the term smaller than  $\frac{\epsilon}{3}$.

\section{Conclusion and Future Work} \label{Conclusion and Future Work}

In this paper, we study a Fitted Q-Iteration with two-layer ReLU neural network parametrization, and find the sample complexity guarantees for the algorithm. Using the convex approach for estimating the Q-function, we show that our approach  achieves a sample complexity of $\tilde{\mathcal{O}}(1/\epsilon^{2})$, which is order-optimal. This demonstrates the first approach for achieving sample complexity beyond linear MDP assumptions for large state space.

This study raises multiple future problems. First is whether we can remove the assumption on the generative model, while estimating Q-function by efficient sampling of past samples. Further, whether the convexity in training that was needed for the results could be extended to more than two layers. Finally, efficient analysis for the error incurred when a sample of cones are chosen rather than the complete set of cones and how to efficiently choose this subset will help with a complete analysis by which Assumption \ref{assump_2} can be relaxed. 

	\bibliography{mybib}
	\onecolumn
	
	\appendix
	\section{Comparison of Result with \cite{xu2020finite}}\label{finite_time_q_learning}

Analysis of $Q$ learning algorithms with neural network function approximation was carried out in \cite{xu2020finite},  where finite time error bounds were studied for estimation of the $Q$ function for MDP's with countable state spaces. The final result in \cite{xu2020finite} is of the form

\begin{eqnarray}
    \frac{1}{T}\sum_{t=1}^{T}\mathbb{E}(Q(s,a,\theta_{t}) - Q^{*}(s,a)) \le \tilde{\mathcal{O}}\left(\frac{1}{\sqrt{T}}\right) + \tilde{\mathcal{O}}(\sqrt{\log(T)})  \label{comp_1}
\end{eqnarray}

Here, $Q(s,a,\theta_{t})$ is the estimate of the $Q$ function obtained using the neural network with the parameters obtained at step $t$ of the $Q$ learning algorithm represented by $\theta_{t}$ and $Q^{*}(s,a)$ is the optimal $Q$ function. $T$ is the total number of iterations of the $Q$ learning algorithm.

The first term on the right hand side of \eqref{comp_1} is monotonically decreasing with respect to $T$. The second term on the right hand side monotonically increases with respect to $T$. This term is present due to the error incurred by the linear approximation of the neural network representing the $Q$ function. Our approach does not require this approximation due to the convex representation of the neural network.  Due to the increasing function of $T$, it is not possible to obtain the number of iteration of the $Q$ learning  required to make the error of estimation smaller than some fixed value. Thus, this result cannot be said to be a sample complexity result.  However, in our results, this is not the case.

	\section{Convex Reformulation with Two-Layer Neural Networks}\label{cones_apdx}
In order to understand the convex reformulation of the squared loss optimization problem, consider the vector $\sigma(Xu_{i})$

\begin{equation}
   \sigma(Xu_{i})= \begin{bmatrix}
        \{\sigma^{'}((x_{1})^{T}u_{i})\} \\
        \{\sigma^{'}((x_{2})^{T}u_{i})\} \\
                     \vdots\\
        \{\sigma^{'}((x_{n})^{T}u_{i})\} 
    \end{bmatrix}
\end{equation}

Now for a fixed  $X \in \mathbb{R}^{n \times d}$, different $u_{i} \in \mathbb{R}^{d \times 1}$ will have different components of $\sigma(Xu_{i})$ that are non zero. For example, if we take the set of all $u_{i}$ such that only the first element of $\sigma(Xu_{i})$ are non zero (i.e, only $(x_{1})^{T}u_{i} \ge 0$ and $(x_{j})^{T}u_{i} < 0$ 
 $\forall j \in [2,\cdots,n]$ ) and denote it by the set $\mathcal{K}_{1}$, then we have 

\begin{equation}
     \sigma(Xu_{i}) =  D_{1}(Xu_{i})   \  \  \  \   \forall u_{i}\in \mathcal{K}_{1}, \nonumber 
\end{equation}

where $D_{1}$ is the  $n \times n$ diagonal matrix with only the first diagonal element equal to $1$ and the rest $0$. Similarly, there exist a set of $u's$ which result in $\sigma(Xu)$ having certain components to be non-zero and the rest zero. For each such combination of zero and non-zero components, we will have a corresponding set of $u_{i}'s$ and a corresponding $n \times n$ Diagonal matrix $D_{i}$.  We define the possible set of such diagonal matrices possible for a given matrix X as 
\begin{eqnarray}
    D_{X} = \{D=diag(\mathbf{1}(Xu \geq 0)):u \in \mathbb{R}^{d}\, ,D \in \mathbb{R}^{n \times n}\},
\end{eqnarray}

where $diag(\mathbf{1}(Xu \geq 0))$ represents a matrix given by

\begin{equation}
    D_{k,j} = \left\{\begin{array}{lr}
    \mathbf{1}(x_{j}^{T}u), & \text{for } k=j\\
    0 & \text{for } k \neq j\end{array}, \right.
\end{equation}

where $\mathbf{1}(x) = 1$ if $x>0$ and $\mathbf{1}(x) = 0$ if $x \le 0.$ Corresponding to each such matrix $D_{i}$, there exists a set of $u_{i}$ given by

\begin{eqnarray}
    \mathcal{K}_{i} =\{ u \in \mathbb{R}^{d}:\sigma(Xu_{i})=D_{i}Xu_{i}, D_{i} \in D_{X} \} \label{cone}
\end{eqnarray}

where $I$ is the $n \times n$ identity matrix. The number of these matrices ${D}_{i}$ is upper bounded by $2^{n}$. From \cite{wang2021hidden} the upper bound is $\mathcal{O}\left(r\left(\frac{n}{r} \right)^{r}\right)$ where $r=rank(X)$. Also, note that the sets $\mathcal{K}_{i}$ form a partition of the space $\mathbb{R}^{d \times 1}$. Using these definitions,  we define the equivalent convex problem to the one in Equation \eqref{ReLU_1} as

\begin{eqnarray}
     \mathcal{L}_{\beta}(v,w) := \argminA_{v,w} (||\sum_{D_{i} \in D_{X}}D_{i}(X(v_{i} - w_{i})) - y||^{2}_{2}  + \label{ReLU_2}\nonumber\\ 
    \beta \sum_{D_{i} \in D_{X}} ||v_{i}||_{2} +||w_{i}||_{2}) 
\end{eqnarray}

where $v=\{v_{i}\}_{i \in 1,\cdots,|D_{X}|}$, $w=\{w_{i}\}_{i \in 1,\cdots,|D_{X}|}$, $v_{i},w_{i} \in \mathcal{K}_{i}$, note that by definition, for any fixed $i \in \{1,\cdots,|D_{X}|\}$ at-least one of $v_{i}$ or $w_{i}$ are zero. If $v^{*},w^{*}$ are the optimal solutions to Equation \eqref{ReLU_2}, the number of neurons $m$ of the original problem in Equation \eqref{ReLU_1} should be greater than the number of elements of $v^{*},w^{*}$, which have at-least one of $v_{i}^{*}$ or $w_{i}^{*}$ non-zero. We denote this value as $m^{*}_{X,y}$, with the subscript $X$ denoting that this quantity depends upon the data matrix $X$ and response $y$. 

We convert $v^{*},w^{*}$ to optimal values of Equation \eqref{ReLU_1}, denoted by $\theta^{*}=(U^{*},\alpha^{*})$, using a function $\psi:\mathbb{R}^{d}\times\mathbb{R}^{d} \rightarrow \mathbb{R}^{d}\times\mathbb{R}$ defined as follows

\begin{eqnarray} 
    \psi(v_{i},w_{i}) &=& \left\{\begin{array}{lr}({v}_{i},1), & \text{if } {w}_{i}=0 \label{ReLU_2_1}\\   
    ({w}_{i},-1), & \text{if }  
     {v}_{i} = 0\\
    (0,0), & \text{if } {v}_{i} = {w}_{i} = 0  \end{array} \right.
    \label{ReLU_4}
\end{eqnarray}

where according to \cite{pmlr-v119-pilanci20a} we have $(u_{i}^{*},\alpha_{i}^{*})=\psi(v_{i}^{*},w_{i}^{*})$, for all $i \in \{1,\cdots,|{D}_{X}|\}$ where $u^{*}_{i},\alpha^{*}_{i}$ are the elements of $\theta^{*}$. Note that restriction of $\alpha_{i}$ to $\{1,-1,0\}$ is shown to be valid in  \cite{pmlr-v162-mishkin22a}. For  $i \in \{|{D}_{X}|+1,\cdots,m\}$ we set $(u_{i}^{*},\alpha_{i}^{*})=(0,0)$.

Since $D_{X}$ is hard to obtain computationally unless $X$ is of low rank, we can construct a subset $\tilde{D} \in D_{X}$ and perform the optimization in Equation \eqref{ReLU_2} by replacing $D_{X}$ with $\tilde{D}$ to get

\begin{eqnarray}
     \mathcal{L}_{\beta}(v,w) := \argminA_{v,w} (||\sum_{D_{i} \in \tilde{D}}D_{i}(X(v_{i} - w_{i})) - y||^{2}_{2}  + \label{ReLU_2_2}\nonumber\\ 
    \beta \sum_{D_{i} \in \tilde{D}} ||v_{i}||_{2} +||w_{i}||_{2}) 
\end{eqnarray}

where $v=\{v_{i}\}_{i \in 1,\cdots,|\tilde{D}|}$, $w=\{w_{i}\}_{i \in 1,\cdots,|\tilde{D}|}$, $v_{i},w_{i} \in \mathcal{K}_{i}$, by definition, for any fixed $i \in \{1,\cdots,|\tilde{D}|\}$ at-least one of $v_{i}$ or $w_{i}$ are zero.

 The required condition for $\tilde{D}$ to be a sufficient replacement for $D_{X}$ is as follows. Suppose $(v,w)=(\bar{v}_{i},\bar{w}_{i})_{i \in (1,\cdots,|\tilde{D}|)}$ denote the optimal solutions of Equation \eqref{ReLU_2_2}. Then we require

\begin{eqnarray}
    m \ge \sum_{D_{i} \in \tilde{D}} |\{ \bar{v}_{i}: \bar{v}_{i} \neq 0 \} \cup  \{ \bar{w}_{i}: \bar{w}_{i} \neq 0 \}| \label{ReLU_2_3}
\end{eqnarray}

Or,  the number of neurons in the neural network are greater than the number of indices $i$ for which at-least one of $v_{i}^{*}$ or $w_{i}^{*}$ is non-zero. Further,

\begin{eqnarray}
  diag(Xu_{i}^{*} \geq 0: i \in [m]) \in \tilde{D}   \label{ReLU_2_3_1}
\end{eqnarray}

In other words,  the diagonal matrices induced by the optimal $u_{i}^{*}$'s of Equation \eqref{ReLU_1} must be included in our sample of diagonal matrices. 
This is proved in Theorem 2.1 of \cite{pmlr-v162-mishkin22a}.

A computationally efficient method for obtaining $\tilde{D}$ and obtaining the optimal values of the Equation \eqref{ReLU_1}, is laid out in \cite{pmlr-v162-mishkin22a}. In this method we first get our sample of diagonal matrices $\tilde{D}$ by first sampling a fixed number of vectors from a $d$ dimensional standard multivariate distribution, multiplying the vectors with the data matrix $X$ and then forming the diagonal matrices based of which co-ordinates are positive. Then  we solve an optimization similar to the one in Equation \eqref{ReLU_2}, without the constraints, that its parameters belong to sets of the form $\mathcal{K}_{i}$ as follows.

\begin{eqnarray}
     \mathcal{L}^{'}_{\beta}(p) := \argminA_{p} (||\sum_{D_{i} \in \tilde{D}}D_{i}(Xp_{i}) - y||^{2}_{2}  + \beta \sum_{D_{i} \in \tilde{D}} ||p_{i}||_{2}) ,\label{ReLU_3}
\end{eqnarray}

where $p \in \mathbb{R}^{d \times |\tilde{D}|}$ . In order to satisfy the constraints of the form given in Equation \eqref{ReLU_2}, this step is followed by a cone decomposition step. This is implemented through a function $\{\psi_{i}^{'}\}_{i \in \{1,\cdots,|\tilde{D}|\}}$.  Let $p^{*}=\{p^{*}_{i}\}_{i \in \{1,\cdots,|\tilde{D}|\}}$ be the optimal solution of Equation \eqref{ReLU_3}. For each $i$ we define a function $\psi_{i}^{'}:\mathbb{R}^{d} \rightarrow \mathbb{R}^{d}\times\mathbb{R}^{d}$ as 

\begin{eqnarray}
    \psi_{i}^{'}(p_{i}) &=& (v_{i},w_{i}) \label{ReLU_3_1}\\
     \textit{such that } p&=& {v}_{i} - {w}_{i}, \textit{and }   {v}_{i},{w}_{i} \in \mathcal{K}_{i} \nonumber
\end{eqnarray}

Then we obtain $\psi(p^{*}_{i})=(\bar{v}_{i},\bar{w}_{i})$. As before, at-least one of $v_{i}$, $w_{i}$ is $0$. Note that in practice we do not know if the conditions in Equation \eqref{ReLU_2_3} and \eqref{ReLU_2_3_1} are satisfied for a given sampled $\tilde{D}$. We express this as follows. If $\tilde{D}$  was the full set of Diagonal matrices  then we would have $(\bar{v}_{i},\bar{w}_{i})={v}^{*}_{i},{w}^{*}_{i}$ and $\psi(\bar{v}_{i},\bar{w}_{i})=(u_{i}^{*},\alpha_{i}^{*})$ for all $i \in (1,\cdots,|D_{X}|)$. However, since that is not the case and $\tilde{D} \in D_{X}$, this means that $\{\psi(\bar{v}_{i},\bar{w}_{i})\}_{i \in (1,\cdots,|\tilde{D}|)}$ is an optimal solution of a non-convex optimization different from the one in Equation \eqref{ReLU_1}. We denote this non-convex optimization as $\mathcal{L}_{|\tilde{D}|}(\theta)$ defined as 

\begin{equation}
    \mathcal{L}_{|\tilde{D}|}(\theta) = \argminA_{\theta} ||\sum_{i=1}^{m^{'}}\sigma(Xu_{i})\alpha_{i}- y||_{2}^{2} + \beta  \sum_{i=1}^{m}||u_{i}||_{2}^{2} + \beta \sum_{i=1}^{m}|\alpha_{i}|^{2} \label{ReLU_6}, 
\end{equation}

where $m^{'} = |\tilde{D}|$  or the size of the sampled diagonal matrix set. In order to quantify the error incurred  due to taking a subset of $D_{X}$, we assume that the expectation of the absolute value of the difference between the neural networks corresponding to the optimal solutions of the non-convex optimizations given in Equations \eqref{ReLU_6} and \eqref{ReLU_1}  is upper bounded by a constant depending on the size of $\tilde{D}$. The formal assumption and its justification is given in Assumption \ref{assump_2}.

\section{Supplementary lemmas and Definitions}\label{sup_lem}

Here we provide some definitions and results that will be used to prove the lemmas stated in the  paper.

\begin{definition} \label{def_8}
   For a given set $ Z \in \mathbb{R}^{n}$, we define the Rademacher complexity of the set $Z$ as 
   \begin{equation}
   Rad(Z) = \mathbb{E} \left(\sup_{z \in Z} \frac{1}{n} \sum_{i=1}^{d}\Omega_{i}z_{i}\right)    
   \end{equation}
   where $\Omega_{i}$ is random variable such that $P(\Omega_{i}=1)=\frac{1}{2}$,  $P(\Omega_{i}=-1)=\frac{1}{2}$ and $z_{i}$ are the co-ordinates of $z$ which is an element of the set $Z$
\end{definition}

\begin{lemma} \label{sup_lem_0}
Consider a set of observed data denoted by $ z = \{z_{1},z_{2},\cdots\,z_{n}\} \in \mathbb{R}^{n}$, a parameter space  $\Theta$, a loss function $\{l:\mathbb{R} \times \Theta \rightarrow \mathbb{R}\}$ where  $0 \le l(\theta,z) \le 1$  $\forall (\theta,z) \in \Theta \times \mathbb{R}$. The empirical risk for a set of observed data as $R(\theta)=\frac{1}{n} \sum_{i=1}^{n}l(\theta,z_{i})$ and the population risk  as $r(\theta)= \mathbb{E}l(\theta,\tilde{z_{i}})$, where $\tilde{z_{i}}$ is a co-ordinate of $\tilde{z}$ sampled from some distribution over $Z$.

We define a set of functions denoted by $\mathcal{L}$ as 

\begin{equation}
    \mathcal{L}=\{z \in Z \rightarrow l(\theta,z) \in \mathbb{R}:\theta \in \Theta \}
\end{equation}

Given $z=\{z_{1},z_{2},z_{3}\cdots,z_{n}\}$ we further define a set $\mathcal{L} \circ z$ as 

\begin{equation}
    \mathcal{L} \circ z \ =\{ (l(\theta,z_{1}),l(\theta,z_{2}),\cdots,l(\theta,z_{n})) \in \mathbb{R}^{n} : \theta \in \Theta\}
\end{equation}

Then, we have 

\begin{equation}
    \mathbb{E}\sup_{\theta \in \Theta} |\{r(\theta)-R(\theta)\}| \le 2\mathbb{E} \left(Rad(\mathcal{L} \circ z)\right)
\end{equation}

If the data is of the form $z_{i}=(x_{i},y_{i}), x \in X, y \in Y$ and the loss function is of the form $l(a_{\theta}(x),y)$, is $L$ lipschitz and $a_{\theta}:\Theta{\times}X \rightarrow \mathbb{R}$, then we have 

\begin{equation}
    \mathbb{E}\sup_{\theta \in \Theta} |\{r(\theta)-R(\theta)\}| \le 2{L}\mathbb{E} \left(Rad(\mathcal{A} \circ \{x_{1},x_{2},x_{3},\cdots,x_{n}\})\right)
\end{equation}

where \begin{equation}
    \mathcal{A} \circ \{x_{1},x_{2},\cdots,x_{n}\}\ =\{ (a(\theta,x_{1}),a(\theta,x_{2}),\cdots,a(\theta,x_{n})) \in \mathbb{R}^{n} : \theta \in \Theta\}
\end{equation}

\end{lemma}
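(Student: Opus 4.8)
The statement is the classical symmetrization bound relating generalization error to Rademacher complexity, followed by a Lipschitz-contraction refinement. The plan is to prove the first inequality by the ghost-sample (double-sample) symmetrization trick and then obtain the second inequality by invoking the Ledoux--Talagrand contraction principle. I would work throughout on the quantity $\sup_{\theta \in \Theta}|r(\theta) - R(\theta)|$ and push expectations through the supremum only via Jensen's inequality, so that no exchange of supremum and expectation is ever done in the wrong direction.

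\textbf{Step 1 (symmetrization).} First I would introduce an independent ghost sample $\tilde z = \{\tilde z_1,\dots,\tilde z_n\}$ drawn i.i.d.\ from the same distribution, independent of $z$, and write $r(\theta) = \mathbb{E}_{\tilde z}\big[\tfrac{1}{n}\sum_{i=1}^n l(\theta,\tilde z_i)\big]$. Then
\begin{equation}
\mathbb{E}_z \sup_{\theta}|r(\theta)-R(\theta)| = \mathbb{E}_z \sup_{\theta}\Big| \mathbb{E}_{\tilde z}\tfrac{1}{n}\sum_{i=1}^n \big(l(\theta,\tilde z_i)-l(\theta,z_i)\big)\Big|,
\end{equation}
and by Jensen's inequality (convexity of the absolute value and of the supremum) the outer $\mathbb{E}_{\tilde z}$ can be pulled out past the supremum, giving an upper bound of $\mathbb{E}_{z,\tilde z}\sup_{\theta}\big|\tfrac{1}{n}\sum_i (l(\theta,\tilde z_i)-l(\theta,z_i))\big|$. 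Because $z_i$ and $\tilde z_i$ are i.i.d., each difference $l(\theta,\tilde z_i)-l(\theta,z_i)$ is symmetric, so multiplying the $i$th term by an independent Rademacher sign $\Omega_i$ leaves the joint distribution unchanged; hence the bound equals $\mathbb{E}_{z,\tilde z,\Omega}\sup_{\theta}\big|\tfrac{1}{n}\sum_i \Omega_i(l(\theta,\tilde z_i)-l(\theta,z_i))\big|$.

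\textbf{Step 2 (factor of two).} Splitting the sum along the triangle inequality into the $\tilde z$ part and the $z$ part, and using that the supremum of a sum is at most the sum of suprema, each of the two resulting terms equals $\mathbb{E}\sup_{\theta}|\tfrac{1}{n}\sum_i \Omega_i l(\theta,\cdot_i)|$, which by Definition \ref{def_8} is exactly $\mathbb{E}(Rad(\mathcal{L}\circ z))$ (the two samples being identically distributed). This yields $\mathbb{E}\sup_{\theta}|r(\theta)-R(\theta)| \le 2\,\mathbb{E}(Rad(\mathcal{L}\circ z))$, the first claim. The boundedness hypothesis $0\le l\le 1$ guarantees all expectations here are finite so the manipulations are justified.

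\textbf{Step 3 (Lipschitz contraction) --- the main obstacle.} For the second inequality I would specialize to $l(\theta,z_i)=l(a_\theta(x_i),y_i)$ and apply the contraction principle for Rademacher averages: for the coordinate functions $\phi_i(t) := l(t,y_i)$, each $L$-Lipschitz in its first argument, one has $\mathbb{E}\sup_{\theta}\tfrac{1}{n}\sum_i \Omega_i \phi_i(a_\theta(x_i)) \le L\,\mathbb{E}\sup_{\theta}\tfrac{1}{n}\sum_i \Omega_i a_\theta(x_i)$, i.e.\ $Rad(\mathcal{L}\circ z) \le L\,Rad(\mathcal{A}\circ\{x_1,\dots,x_n\})$. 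Combining with Step 2 gives the stated $2L\,\mathbb{E}(Rad(\mathcal{A}\circ\{x_i\}))$. The delicate point I expect to be hardest is that the Lipschitz maps $\phi_i$ differ across the index $i$ (each $y_i$ induces a different loss slice), so a naive single-function contraction does not apply; the correct tool is the Ledoux--Talagrand comparison inequality, whose proof proceeds by a one-coordinate-at-a-time induction that peels off each $\Omega_i$ and uses the Lipschitz bound together with the sign symmetry of $\Omega_i$ to replace $\phi_i(a_\theta(x_i))$ by $a_\theta(x_i)$ without increasing the Rademacher average. Care is also needed if the absolute value is kept inside; I would either restrict attention to the non-absolute Rademacher average (adding back the symmetric negative copy) or note that $\mathbb{R}$-valued $a_\theta$ lets the contraction be applied directly to the symmetrized empirical process.
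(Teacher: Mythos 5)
Your proposal is correct and follows essentially the same route as the paper: the paper offers no in-line proof of Lemma \ref{sup_lem_0}, deferring instead to the cited Rebeschini lecture notes (with a remark that the one-sided argument extends to the absolute value), and the argument there is precisely your ghost-sample symmetrization followed by the Ledoux--Talagrand contraction with per-coordinate maps $\phi_{i}(t)=l(t,y_{i})$. One minor wrinkle, which you partially flag in Step 3 and which the paper itself glosses over: in Step 2 you identify $\mathbb{E}\sup_{\theta}\bigl|\frac{1}{n}\sum_{i}\Omega_{i}l(\theta,z_{i})\bigr|$ with $\mathbb{E}\left(Rad(\mathcal{L}\circ z)\right)$, but Definition \ref{def_8} defines $Rad$ without the absolute value inside; since $\sup_{\theta}|S(\theta)|=\max\bigl(\sup_{\theta}S(\theta),\sup_{\theta}(-S(\theta))\bigr)$ and $-\Omega$ has the same law as $\Omega$, this discrepancy costs at most a factor of $2$, so your bound survives up to a constant that the paper's own statement does not track carefully either.
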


The detailed proof of the above statement is given in (Rebeschini, P. (2022). Algorithmic Foundations of Learning [Lecture Notes]. https://www.stats.ox.ac.uk/~rebeschi/teaching/AFoL/20/material/. The upper bound for $ \mathbb{E}\sup_{\theta \in \Theta} (\{r(\theta)-R(\theta)\})$ is proved in the aformentioned reference. However, without loss of generality the same proof holds for the upper bound for $ \mathbb{E}\sup_{\theta \in \Theta} (\{R(\theta)-r(\theta)\})$. Hence the upper bound for $ \mathbb{E}\sup_{\theta \in \Theta}|\{r(\theta)-R(\theta)\}|$ can be established.

\begin{lemma} \label{sup_lem_1}
Consider two random random variable $x \in \mathcal{X} $ and  $y,y^{'} \in \mathcal{Y}$. Let $\mathbb{E}_{x,y}, \mathbb{E}_{x}$ and $\mathbb{E}_{y|x}$, $\mathbb{E}_{y^{'}|x}$  denote the expectation with respect to the joint distribution of $(x,y)$, the marginal distribution of $x$, the conditional distribution of $y$ given $x$ and the conditional distribution of $y^{'}$ given $x$ respectively . Let $f_{\theta}(x)$ denote a bounded measurable function of $x$ parameterised by some parameter $\theta$ and $g(x,y)$ be bounded measurable function of both $x$ and $y$.

Then we have

\begin{equation}
    \argminA_{f_{\theta}}\mathbb{E}_{x,y}\left(f_{\theta}(x)-g(x,y)\right)^{2}=\argminA_{f_{\theta}} \left(\mathbb{E}_{x,y}\left(f_{\theta}(x)-\mathbb{E}_{y^{'}|x}(g(x,y^{'})|x)\right)^{2}\right) \label{sup_lem_1_1}
\end{equation}    
\end{lemma}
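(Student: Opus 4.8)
The plan is to establish the equality of the two $\argminA$ sets by showing that the two objective functions differ only by a quantity that is independent of the parameter $\theta$; two functions of $\theta$ that differ by such an additive constant are minimized at exactly the same points. First I would introduce the shorthand $\bar{g}(x) := \mathbb{E}_{y'|x}(g(x,y')\mid x)$ for the conditional mean of the target, so that the right-hand objective in \eqref{sup_lem_1_1} reads $\mathbb{E}_{x,y}(f_{\theta}(x)-\bar{g}(x))^{2}$, which equals $\mathbb{E}_{x}(f_{\theta}(x)-\bar{g}(x))^{2}$ since the integrand no longer depends on $y$.

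Next I would add and subtract $\bar{g}(x)$ inside the left-hand objective and expand the square:
\[
\mathbb{E}_{x,y}(f_{\theta}(x)-g(x,y))^{2} = \mathbb{E}_{x,y}(f_{\theta}(x)-\bar{g}(x))^{2} + 2\,\mathbb{E}_{x,y}\big[(f_{\theta}(x)-\bar{g}(x))(\bar{g}(x)-g(x,y))\big] + \mathbb{E}_{x,y}(\bar{g}(x)-g(x,y))^{2}.
\]
The key step is to show that the cross term vanishes. I would condition on $x$ and apply the tower property: since $f_{\theta}(x)-\bar{g}(x)$ is measurable with respect to $x$, it factors out of the inner conditional expectation, leaving $\mathbb{E}_{y|x}(\bar{g}(x)-g(x,y)) = \bar{g}(x) - \mathbb{E}_{y|x}(g(x,y)) = 0$ by the very definition of $\bar{g}$. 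Hence the middle term is zero. The third term, $\mathbb{E}_{x,y}(\bar{g}(x)-g(x,y))^{2}$, is the conditional variance of the target and does not involve $\theta$ at all, so it is precisely the desired additive constant.

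Combining these observations yields $\mathbb{E}_{x,y}(f_{\theta}(x)-g(x,y))^{2} = \mathbb{E}_{x}(f_{\theta}(x)-\bar{g}(x))^{2} + C$ with $C$ independent of $\theta$, from which the equality of the two minimizing sets follows immediately. The only point requiring care—more a bookkeeping matter than a genuine obstacle—is the measure-theoretic justification for the interchange of expectation and conditioning in the cross term (the law of iterated expectations / Fubini). This is licensed by the boundedness and measurability hypotheses on $f_{\theta}$ and $g$ that the lemma already assumes, which guarantee that all expectations appearing above are finite.
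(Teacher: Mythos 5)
Your proposal is correct and follows essentially the same route as the paper's own proof: add and subtract the conditional mean $\mathbb{E}_{y'|x}(g(x,y')\mid x)$, expand the square, kill the cross term via the tower property (factoring out the $x$-measurable factor), and observe that the leftover conditional-variance term is independent of $\theta$. Your write-up is in fact slightly cleaner, since you explicitly name the additive constant and note that the right-hand objective reduces to an expectation over $x$ alone, both of which the paper also does but less transparently.
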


\begin{proof}
Denote the left hand side of Equation \eqref{sup_lem_1_1} as $\mathbb{X}_{\theta}$, then add and subtract $\mathbb{E}_{y|x}(g(x,y)|x)$ to it to get 
\begin{eqnarray}
     \mathbb{X}_{\theta}&=& \argminA_{f_{\theta}}\left(\mathbb{E}_{x,y}\left(f_{\theta}(x)-\mathbb{E}_{y^{'}|x}(g(x,y^{'})|x)+\mathbb{E}_{y^{'}|x}(g(x,y^{'})|x)-g(x,y)\right)^{2}\right) \label{sup_lem_1_2}\\\
     &=&  \argminA_{f_{\theta}}\Big(\mathbb{E}_{x,y}\left(f_{\theta}(x)-\mathbb{E}_{y^{'}|x}(g(x,y^{'})|x)\right)^{2} + \mathbb{E}_{x,y}\left(y-\mathbb{E}_{y^{'}|x}(g(x,y^{'})|x)\right)^{2} -2\mathbb{E}_{x,y}\Big(f_{\theta}(x)-\nonumber\\ 
     &&\mathbb{E}_{y^{'}|x}(g(x,y^{'})|x)\Big)\left(g(x,y)-\mathbb{E}_{y^{'}|x}(g(x,y^{'})|x)\right)\Big) \label{sup_lem_1_3}
\end{eqnarray}

Consider the third term on the right hand side of Equation \eqref{sup_lem_1_3}
\begin{eqnarray}
    2\mathbb{E}_{x,y}\left(f_{\theta}(x)-\mathbb{E}_{y^{'}|x}(g(x,y^{'})|x)\right)\left(g(x,y)-  \mathbb{E}_{y^{'}|x}(g(x,y^{'})|x)\right) &=& 2\mathbb{E}_{x}\mathbb{E}_{y|x}\left(f_{\theta}(x)-\mathbb{E}_{y^{'}|x} (g(x,y^{'})|x)\right)\nonumber\\
    &&\left(g(x,y)-\mathbb{E}_{y^{'}|x}(g(x,y^{'})|x)\right)\nonumber\\
    &&\label{sup_lem_1_4}\\
    &=& 2\mathbb{E}_{x}\left(f_{\theta}(x)-\mathbb{E}_{y^{'}|x}(g(x,y^{'})|x)\right)\nonumber\\
    &&  \mathbb{E}_{y|x}\left(g(x,y)-\mathbb{E}_{y^{'}|x}(g(x,y^{'})|x)\right) \label{sup_lem_1_5}\\
    &=& 2\mathbb{E}_{x}\left(f_{\theta}(x)-\mathbb{E}_{y^{'}|x}(g(x,y^{'})|x)\right)\nonumber\\
    &&  \left(\mathbb{E}_{y|x}(g(x,y))-\mathbb{E}_{y|x}\left(\mathbb{E}_{y^{'}|x}(g(x,y^{'})|x)\right)\right) \label{sup_lem_1_6}\\
    &=& 2\mathbb{E}_{x}\left(f_{\theta}(x)-\mathbb{E}(y|x)\right)\left(\mathbb{E}_{y|x}(g(x,y))-\mathbb{E}_{y^{'}|x}(g(x,y^{'})|x)\right) \nonumber\\
    &&  \label{sup_lem_1_7}\\
    &=& 0
\end{eqnarray}

Equation \eqref{sup_lem_1_4} is obtained by writing $\mathbb{E}_{x,y}=\mathbb{E}_{x}\mathbb{E}_{y|x}$ from the law of total expectation. Equation \eqref{sup_lem_1_5} is obtained from  \eqref{sup_lem_1_4} as the term $f_{\theta}(x)-\mathbb{E}_{y^{'}|x}(g(x,y^{'})|x)$ is not a function of $y$. Equation \eqref{sup_lem_1_6} is obtained from \eqref{sup_lem_1_5} as $\mathbb{E}_{y|x}\left(\mathbb{E}_{y^{'}|x}(g(x,y^{'})|x)\right)=\mathbb{E}_{y^{'}|x}(g(x,y^{'})|x)$ because $\mathbb{E}_{y^{'}|x}(g(x,y^{'})|x)$ is not a function of $y$ hence is constant with respect to the expectation operator $\mathbb{E}_{y|x}$. 

Thus plugging in value of $  2\mathbb{E}_{x,y}\left(f_{\theta}(x)-\mathbb{E}_{y^{'}|x}(g(x,y^{'})|x)\right)\left(g(x,y)-  \mathbb{E}_{y^{'}|x}(g(x,y^{'})|x)\right)$ in Equation \eqref{sup_lem_1_3} we get 

\begin{equation}
    \argminA_{f_{\theta}}\mathbb{E}_{x,y}\left(f_{\theta}(x)-g(x,y)\right)^{2} =  \argminA_{f_{\theta}} \left(\mathbb{E}_{x,y}\left(f_{\theta}(x)-\mathbb{E}_{x,y^{'}}(g(x,y^{'})|x)\right)^{2} + \mathbb{E}_{x,y}\left(g(x,y)-\mathbb{E}_{y^{'}|x}(g(x,y^{'})|x)\right)^{2}\right) \label{sup_lem_1_8}
\end{equation}

Note that the second term on the right hand side of Equation \eqref{sup_lem_1_8} des not depend on $f_{\theta}(x)$ therefore we can write Equation \eqref{sup_lem_1_8} as 

\begin{equation}
    \argminA_{f_{\theta}}\mathbb{E}_{x,y}\left(f_{\theta}(x)-g(x,y)\right)^{2} =  \argminA_{f_{\theta}} \left(\mathbb{E}_{x,y}\left(f_{\theta}(x)-\mathbb{E}_{y^{'}|x}(g(x,y^{'})|x)\right)^{2}\right) \label{sup_lem_1_9}
\end{equation}

Since the right hand side of Equation \eqref{sup_lem_1_9} is not a function of $y$ we can replace $\mathbb{E}_{x,y}$ with $\mathbb{E}_{x}$ to get 

\begin{equation}
    \argminA_{f_{\theta}}\mathbb{E}_{x,y}\left(f_{\theta}(x)-g(x,y)\right)^{2} =  \argminA_{f_{\theta}} \left(\mathbb{E}_{x}\left(f_{\theta}(x)-\mathbb{E}_{y^{'}|x}(g(x,y^{'})|x)\right)^{2}\right) \label{sup_lem_1_10}
\end{equation}
\end{proof}

\begin{lemma} \label{sup_lem_3}
Consider an optimization of the form given in Equation \eqref{ReLU_2_2} with the regularization term $\beta = 0$ denoted by $\mathcal{L}_{|\tilde{D}|}$ and it's convex equivalent denoted by $\mathcal{L}_{0}$. Then the value of these two loss functions evaluated at $(v,w)=(v_{i},w_{i})_{i \in \{1,\cdots,|\tilde{D}|\}}$ and $\theta=\psi(v_{i},w_{i})_{i \in \{1,\cdots,|\tilde{D}|\}}$ respectively are equal and thus we have

\begin{equation}
\mathcal{L}_{|\tilde{D}|}(\psi(v_{i},w_{i})_{i \in \{1,\cdots,|\tilde{D}|\}}) = \mathcal{L}_{0}((v_{i},w_{i})_{i \in \{1,\cdots,|\tilde{D}|\}})   
\end{equation}

\end{lemma}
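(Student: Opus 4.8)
The plan is to reduce the claimed equality of loss values to a term-by-term matching of the two residual vectors sitting inside the squared norms. First I would observe that with $\beta=0$ both objectives collapse to their pure data-fitting terms: the non-convex loss \eqref{ReLU_6} becomes $\|\sum_{i=1}^{|\tilde{D}|}\sigma(Xu_i)\alpha_i - y\|_2^2$ and the convex loss \eqref{ReLU_2_2} becomes $\|\sum_{D_i\in\tilde{D}}D_i X(v_i-w_i) - y\|_2^2$, since the regularizers $\beta\sum\|u_i\|_2^2+\beta\sum|\alpha_i|^2$ and $\beta\sum(\|v_i\|_2+\|w_i\|_2)$ both vanish. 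Hence there is nothing to compare on the regularization side, and it suffices to prove the single pointwise identity $\sigma(Xu_i)\alpha_i = D_i X(v_i-w_i)$ for each index $i$, where $(u_i,\alpha_i)=\psi(v_i,w_i)$. Summing this identity over $i$ makes the two arguments of the squared norms literally equal, which immediately gives $\mathcal{L}_{|\tilde{D}|}(\psi(v_i,w_i))=\mathcal{L}_0((v_i,w_i))$.

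The heart of the argument is then the case analysis dictated by the definition of $\psi$ in \eqref{ReLU_4}, combined with the defining property of the cone $\mathcal{K}_i$ in \eqref{cone}, namely that $u\in\mathcal{K}_i$ implies $\sigma(Xu)=D_i Xu$. I would treat the three cases in turn. If $w_i=0$, then $(u_i,\alpha_i)=(v_i,1)$ and, using $v_i\in\mathcal{K}_i$, the non-convex summand is $\sigma(Xv_i)\cdot 1 = D_i X v_i = D_i X(v_i-w_i)$. If $v_i=0$, then $(u_i,\alpha_i)=(w_i,-1)$ and, using $w_i\in\mathcal{K}_i$, the summand is $-\sigma(Xw_i) = -D_i X w_i = D_i X(v_i-w_i)$. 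If $v_i=w_i=0$, then $(u_i,\alpha_i)=(0,0)$ and both summands are the zero vector. Since the hypothesis guarantees that for each $i$ at least one of $v_i,w_i$ is zero, these three cases are exhaustive, so the pointwise identity holds for every $i$.

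The only real content to watch is the cone-membership step: the identity $\sigma(Xv_i)=D_i X v_i$ is precisely what it means for $v_i$ to lie in $\mathcal{K}_i$, and likewise for $w_i$, so I would be careful to invoke that the decomposition produced by $\psi_i'$ in \eqref{ReLU_3_1} indeed places $v_i,w_i$ in the correct cone $\mathcal{K}_i$ associated with the matrix $D_i$. Beyond this, the argument is a routine verification; I do not anticipate any genuine obstacle, only the bookkeeping of keeping the index $i$ of the neuron aligned with the index of its associated diagonal matrix $D_i\in\tilde{D}$. Once the summands agree, both residuals equal $\sum_{i}D_i X(v_i-w_i)-y$ exactly, and the equality of the two loss values follows without any inequality or limiting argument.
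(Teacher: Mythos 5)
Your proposal is correct and follows essentially the same route as the paper's own proof: a term-by-term matching of the summands $\sigma(X\psi(v_i,w_i)_1)\psi(v_i,w_i)_2$ and $D_iX(v_i-w_i)$ via the three-case analysis on $\psi$ (with $w_i=0$, $v_i=0$, or both zero), each case resolved by the cone-membership identity $\sigma(Xv_i)=D_iXv_i$ for $v_i\in\mathcal{K}_i$. The only difference is cosmetic — you state the reduction to a pointwise identity up front, which is slightly cleaner bookkeeping than the paper's presentation, but the mathematical content is identical.
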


\begin{proof}
    Consider the loss functions in Equations  \eqref{ReLU_2}, \eqref{ReLU_3} with $\beta=0$ are as follows

\begin{eqnarray}
    \mathcal{L}_{0}((v_{i},w_{i})_{i \in \{1,\cdots,|\tilde{D}|\}}) &=& ||\sum_{D_{i} \in \tilde{D}}D_{i}(X(v_{i} - w_{i})) - y||^{2}_{2} \label{sup_lem_3_1}\\ 
    \mathcal{L}_{|\tilde{D}|}(\psi(v_{i},w_{i})_{i \in \{1,\cdots,|\tilde{D}|\}}) &=& ||\sum_{i=1}^{|\tilde{D}|}\sigma(X\psi(v_{i},w_{i})_{1})\psi(v_{i},w_{i})_{2}- y||_{2}^{2},  \label{sup_lem_3_2}
\end{eqnarray}

where $\psi(v_{i},w_{i})_{1}$, $\psi(v_{i},w_{i})_{2}$ represent the first and second coordinates of $\psi(v_{i},w_{i})$ respectively.

For any fixed $i \in \{1,\cdots,|\tilde{D}|\}$ consider the two terms 

\begin{eqnarray}
D_{i}(X(v_{i}-w_{i})) \label{sup_lem_3_3}\\
\sigma(X\psi(v_{i},w_{i})_{1})\psi(v_{i},w_{i})_{2}   \label{sup_lem_3_4}
\end{eqnarray}

For a fixed $i$ either $v_{i}$ or $w_{i}$ is zero. In case both are zero, both of the terms in Equations \eqref{sup_lem_3_3} and  \eqref{sup_lem_3_4} are zero as $\psi(0,0)=(0,0)$. Assume that for a given $i$ $w_{i}=0$. Then we have $\psi(v_{i},w_{i})=(v_{i},1)$. Then equations \eqref{sup_lem_3_3}, \eqref{sup_lem_3_4}  are.

\begin{eqnarray}
D_{i}(X(v_{i})  \label{sup_lem_3_5}\\
\sigma(X(v_{i}))    \label{sup_lem_3_6}
\end{eqnarray}

But by definition of $v_{i}$ we have $D_{i}(X(v_{i})=\sigma(X(v_{i}))$, therefore Equations \eqref{sup_lem_3_5}, \eqref{sup_lem_3_6} are equal. Alternatively if for a given $i$ $v_{i}=0$, then $\psi(v_{i},w_{i})=(w_{i},-1)$, then the terms in \eqref{sup_lem_3_3}, \eqref{sup_lem_3_4}  become.

\begin{eqnarray}
-D_{i}(X(w_{i})  \label{sup_lem_3_7}\\
-\sigma(X(w_{i}))    \label{sup_lem_3_8}
\end{eqnarray}

By definition of $w_{i}$ we have $D_{i}(X(w_{i})=\sigma(X(w_{i}))$, then the terms 
in \eqref{sup_lem_3_7}, \eqref{sup_lem_3_7} are equal. 
Since this is true for all $i$, we have 

\begin{equation}
\mathcal{L}_{|\tilde{D}|}(\psi(v_{i},w_{i})_{i \in \{1,\cdots,|\tilde{D}|\}}) = \mathcal{L}_{0}((v_{i},w_{i})_{i \in \{1,\cdots,|\tilde{D}|\}}) 
 \label{sup_lem_3_9}    
\end{equation}

\end{proof}

\begin{lemma} \label{sup_lem_5}
The function  $Q_{\theta}(x)$ defined in equation \eqref{ReLU_1_2} is Lipschitz continuous in $\theta$, where $\theta$ is considered a vector in $\mathbb{R}^{(d+1)m}$ with the assumption that the set of all possible $\theta$ belong to the set  $\mathcal{B} = \{ \theta:  |\theta^{*}-\theta|_{1} < 1\}$, where $\theta^{*}$ is some fixed value.
\end{lemma}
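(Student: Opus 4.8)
The plan is to bound $|Q_\theta(x)-Q_{\tilde\theta}(x)|$ directly by a constant multiple of $\|\theta-\tilde\theta\|$ for any two parameter vectors $\theta=(U,\alpha)$ and $\tilde\theta=(\tilde U,\tilde\alpha)$ lying in $\mathcal{B}$, uniformly over the bounded input domain $x\in[0,1]^{d}$. First I would write the difference neuron by neuron and, for each $i$, insert $\pm\,\sigma'(x^{T}u_{i})\tilde\alpha_{i}$ to split the $i$-th summand into an output-weight part and an input-weight part:
\begin{equation}
\sigma'(x^{T}u_{i})\alpha_{i}-\sigma'(x^{T}\tilde u_{i})\tilde\alpha_{i}
= \sigma'(x^{T}u_{i})(\alpha_{i}-\tilde\alpha_{i})
+ \tilde\alpha_{i}\big(\sigma'(x^{T}u_{i})-\sigma'(x^{T}\tilde u_{i})\big). \nonumber
\end{equation}

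Next I would bound each piece using two elementary facts. Since the ReLU $\sigma'(z)=\max(z,0)$ is $1$-Lipschitz and satisfies $|\sigma'(z)|\le|z|$, we get $|\sigma'(x^{T}u_{i})-\sigma'(x^{T}\tilde u_{i})|\le|x^{T}(u_{i}-\tilde u_{i})|\le\|x\|_{2}\|u_{i}-\tilde u_{i}\|_{2}$ and $|\sigma'(x^{T}u_{i})|\le\|x\|_{2}\|u_{i}\|_{2}$, by Cauchy--Schwarz. Because $x\in[0,1]^{d}$, we have the uniform input bound $\|x\|_{2}\le\sqrt{d}$. The essential role of the constraint $\theta\in\mathcal{B}$ is to supply a uniform bound on the parameter magnitudes: since $|\theta^{*}-\theta|_{1}<1$, every $\|u_{i}\|_{2}$ and every $|\alpha_{i}|$ is bounded by $M:=1+\max_{i}\big(\|u_{i}^{*}\|_{2},|\alpha_{i}^{*}|\big)$. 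This bound is what makes the result work: $Q_{\theta}$ is bilinear in $(U,\alpha)$, so it is \emph{not} globally Lipschitz, and restricting to the ball $\mathcal{B}$ is exactly what controls the otherwise-unbounded multiplicative factors. Combining these estimates and summing over $i$ yields
\begin{equation}
|Q_{\theta}(x)-Q_{\tilde\theta}(x)|
\le \sqrt{d}\,M\sum_{i=1}^{m}\big(|\alpha_{i}-\tilde\alpha_{i}|+\|u_{i}-\tilde u_{i}\|_{2}\big)
\le \sqrt{d}\,M\,\|\theta-\tilde\theta\|_{1}, \nonumber
\end{equation}
where the last step uses $\|u_{i}-\tilde u_{i}\|_{2}\le\|u_{i}-\tilde u_{i}\|_{1}$ and then collects the blocks into the full $\ell_{1}$ norm on $\mathbb{R}^{(d+1)m}$. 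Finally, by the equivalence of norms on the finite-dimensional space $\mathbb{R}^{(d+1)m}$, this gives Lipschitz continuity in any chosen norm, with a Lipschitz constant proportional to $\sqrt{d}\,M$.

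I do not expect a deep obstacle in this argument; it is essentially careful bookkeeping with the triangle inequality and the $1$-Lipschitzness of the ReLU. The one point that genuinely requires care, and the reason the hypothesis $\theta\in\mathcal{B}$ cannot be dropped, is the bilinearity of $Q_{\theta}$ in its two parameter blocks: without the uniform bound $M$ extracted from the $\ell_{1}$ ball, the input-weight term $\tilde\alpha_{i}(\sigma'(x^{T}u_{i})-\sigma'(x^{T}\tilde u_{i}))$ would not admit a constant Lipschitz factor. I would therefore make sure the extraction of $M$ from $|\theta^{*}-\theta|_{1}<1$ is stated cleanly, and note that $Q_{\theta}$ is continuous (indeed piecewise linear) everywhere so the only substantive content is the uniform constant on $\mathcal{B}$.
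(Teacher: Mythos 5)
Your proof is correct, and it takes a genuinely different route from the paper's. The paper first exploits a structural fact you did not use: by the construction in Algorithm \ref{algo_2} (the map $\psi$), the output weights $\alpha_{i}$ only take values in $\{1,-1,0\}$, so any two parameter vectors in the radius-one $\ell_{1}$ ball $\mathcal{B}$ must have \emph{identical} $\alpha$ blocks (a change in any $\alpha_{i}$ already costs $\ell_{1}$ distance at least $1$). This collapses the problem to Lipschitzness in the $u$ block alone, which the paper then obtains by citing a Lipschitz-in-$x$ property of the network and transferring it to $u$ via the symmetry of $x^{T}u$, with constants $l_{i}^{x}$ controlled by a supremum over the bounded input domain. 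You instead handle both blocks at once with the standard add-and-subtract bilinear decomposition, elementary estimates ($1$-Lipschitz ReLU, Cauchy--Schwarz, $\|x\|_{2}\le\sqrt{d}$ on $[0,1]^{d}$), and use $\mathcal{B}$ only to extract the uniform magnitude bound $M$. What your approach buys: it is self-contained (no external proposition), it produces an explicit Lipschitz constant $\sqrt{d}\,M$, and it works for arbitrary real output weights rather than relying on the discreteness of the $\alpha_{i}$ — so it would survive even if the algorithm's cone-decomposition step were modified. What the paper's approach buys: once the $\alpha$ blocks are known to coincide on $\mathcal{B}$, the output-weight term vanishes identically rather than merely being bounded, which is a sharper use of the specific structure, though its transfer of Lipschitz constants from $x$ to $u$ is stated less explicitly than your Cauchy--Schwarz computation. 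One small point worth stating cleanly if you finalize your write-up: your bound $\|u_{i}\|_{2}\le\|u_{i}^{*}\|_{2}+\|u_{i}-u_{i}^{*}\|_{2}\le\|u_{i}^{*}\|_{2}+\|u_{i}-u_{i}^{*}\|_{1}\le\|u_{i}^{*}\|_{2}+1$ is the chain that justifies the constant $M$, and it uses $\|\cdot\|_{2}\le\|\cdot\|_{1}$ exactly as your final display does.
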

\begin{proof}

First we show that for all $\theta_{1} = \{u_{i},\alpha_{i}\}, \theta_{2}= \{u^{'}_{i},\alpha^{'}_{i}\} \in \mathcal{B}$  we have $\alpha_{i}=\alpha^{'}_{i}$ for all $i \in (1,\cdots,m)$

Note that 

\begin{equation}
    |\theta_{1} - \theta_{2}|_{1} = \sum_{i=1}^{m}|u_{i}-u^{'}_{i}|_{1}  +  \sum_{i=1}^{m}|\alpha_{i}-\alpha^{'}_{i}|,
\end{equation}

where $|u_{i}-u^{'}_{i}|_{1} = \sum_{j=1}^{d}|u_{i_{j}}-u^{'}_{i_{j}}|$ with $u_{i_{j}}, u^{'}_{i_{j}}$ denote the $j^{th}$ component of $u_{i}, u^{'}_{i}$ respectively.

By construction $\alpha_{i}, \alpha^{'}_{i}$ can only be $1$, $-1$ or $0$. Therefore if $\alpha_{i}\neq\alpha^{'}_{i}$ then $|\alpha_{i}-\alpha^{'}_{i}|=2$ if both non zero or $|\alpha_{i}-\alpha^{'}_{i}|=1$ if one is zero. Therefore $|\theta_{1} - \theta_{2}|_{1} \geq 1$. Which leads to a contradiction.  

Therefore  $\alpha_{i}=\alpha^{'}_{i}$ for all $i$ and we also have 

\begin{equation}
    |\theta_{1} - \theta_{2}|_{1} = \sum_{i=1}^{m}|u_{i}-u^{'}_{i}|_{1}
\end{equation}

$Q_{\theta}(x)$ is defined as 

\begin{equation}
Q_{\theta}(x)=\sum_{i=1}^{m}\sigma^{'}(x^{T}u_{i})\alpha_{i} \label{sup_lem_5_1}
\end{equation}

From Proposition $1$ in \cite{10.5555/3327144.3327299} the function $Q_{\theta}(x)$ is Lipschitz continuous in $x$, therefore there exist $l > 0$ such that 

\begin{eqnarray}
|Q_{\theta}(x)- Q_{\theta}(y)|  &\le&    l|x-y|_{1} \label{sup_lem_5_2} \\
|\sum_{i=1}^{m}\sigma^{'}(x^{T}u_{i})\alpha_{i} - \sum_{i=1}^{m}\sigma^{'}(y^{T}u_{i})\alpha_{i}|  &\le& l|x-y|_{1}  \label{sup_lem_5_3}
\end{eqnarray}

If we consider a single neuron of $Q_{\theta}$, for example $i=1$, we have  $l_{1} > 0$ such that 
\begin{eqnarray}
|\sigma^{'}(x^{T}u_{1})\alpha_{i} - \sigma^{'}(y^{T}u_{1})\alpha_{i}|  &\le&  l_{1}|x-y|_{1}  \label{sup_lem_5_4}
\end{eqnarray}

Now consider Equation \eqref{sup_lem_5_4}, but instead of considering the left hand side a a function of $x,y$ consider it a function of $u$ where we consider the difference between $\sigma^{'}(x^{T}u)\alpha_{i}$ evaluated at $u_{1}$ and $u^{'}_{1}$ such that 
\begin{eqnarray}
|\sigma^{'}(x^{T}u_{1})\alpha_{i} - \sigma^{'}(x^{T}u^{'}_{1})\alpha_{i}|  &\le&  l^{x}_{1}|u_{1}-u^{'}_{1}|_{1}  \label{sup_lem_5_5}
\end{eqnarray}

for some $l^{x}_{1} > 0$.

Similarly, for all other $i$ if we change $u_{i}$ to $u^{'}_{i}$ to be unchanged we have 

\begin{eqnarray}
|\sigma^{'}(x^{T}u_{i})\alpha_{i} - \sigma^{'}(x^{T}u^{'}_{i})\alpha_{i}|  &\le&  l^{x}_{i}|u_{i}-u^{'}_{i}|_{1}  \label{sup_lem_5_6}
\end{eqnarray}

for all $x$ if both $\theta_{1}, \theta_{2} \in \mathcal{B}$.

Therefore we obtain

\begin{eqnarray}
|\sum_{i=1}^{m}\sigma^{'}(x^{T}u_{i})\alpha_{i} - \sum_{i=1}^{m}\sigma^{'}(x^{T}u^{'}_{i})\alpha_{i}|  &\le&  \sum_{i=1}^{m}|\sigma^{'}(x^{T}u_{i})\alpha_{i} -(x^{T}u^{'}_{i})\alpha_{i}|   \label{sup_lem_5_7}\\
                                                                                                       &\le&  \sum_{i=1}^{m}l^{x}_{i}|u_{i}-u^{'}_{i}|_{1}  \label{sup_lem_5_8}\\
                                                                                                       &\le&  (\sup_{i}l_{i}^{x})\sum_{i=1}^{m}|u_{i}-u^{'}_{i}|_{1}  \label{sup_lem_5_9}\\
                                                                                                       &\le&  (\sup_{i}l_{i}^{x})|\theta_{1}-\theta_{2}|  \label{sup_lem_5_10}
\end{eqnarray}

This result for a fixed $x$. If we take the supremum over $x$ on both sides we get

\begin{eqnarray}
\sup_{x}|\sum_{i=1}^{m}\sigma^{'}(x^{T}u_{i})\alpha_{i} - \sum_{i=1}^{m}\sigma^{'}(x^{T}u^{'}_{i})\alpha_{i}| 
                                                                                                       &\le&  (\sup_{i,x}l_{i}^{x})|\theta_{1}-\theta_{2}|  \label{sup_lem_5_11}
\end{eqnarray}

Denoting $(\sup_{i,x}l_{i}^{x})=l$, we get 
\begin{eqnarray}
|\sum_{i=1}^{m}\sigma^{'}(x^{T}u_{i})\alpha_{i} - \sum_{i=1}^{m}\sigma^{'}(x^{T}u^{'}_{i})\alpha_{i}| 
                                                                                                       &\le&  l|\theta_{1}-\theta_{2}|_{1}  \label{sup_lem_5_12}\\
                                                                                                       && \forall x \in  \mathbb{R}^{d}
\end{eqnarray}
\end{proof}

	\section{Proof of Theorem \ref{thm}} \label{thm proof} 
\begin{proof}

    For ease of notations, let $Q_{1}, Q_{2}$ be two real valued functions on the state action space. The expression $Q_{1} \ge Q_{2}$ implies $Q_{1}(s,a) \ge Q_{2}(s,a)$ $\forall (s,a) \in \mathcal{S}\times\mathcal{A}$. 

    $Q_{k}$ denotes our estimate of the action value function at step $k$ of Algorithm \ref{algo_1} and  $Q^{\pi_{k}}$ denotes the action value function induced by the policy $\pi_{k}$ which is the greedy policy with respect to $Q_{k}$.

    Consider $\epsilon_{k+1}= TQ_{k}-Q_{k+1}$.

    \begin{eqnarray}
        TQ_{k} \geq T^{\pi^{*}} Q_{k} \label{thm_1_1}
    \end{eqnarray}

This follows from the definition of $T^{\pi^{*}}$ and $T$ in Equation \eqref{ps_3} and \eqref{ps_4}, respectively.

Thus we get,
    \begin{eqnarray}
        Q^{*}-Q_{k+1} &=& T^{\pi^{*}}Q^{*} -  Q_{k+1}  \label{thm_1_1_1}\\
                      &=& T^{\pi^{*}}Q^{*} -  T^{\pi^{*}}Q_{k} + T^{\pi^{*}}Q_{k} - TQ_{k} + TQ_{k} - Q_{k+1}  \label{thm_1_1_2}\\
                      &=& r(s,a) + \gamma P^{\pi^{*}}Q^{*} - (r(s,a) +  \gamma P^{\pi^{*}}Q_{k})   + (r(s,a) +  \gamma P^{\pi^{*}}Q_{k})  -  (r(s,a) +  \gamma P^{*}Q_{k}) + \epsilon_{k+1}    \nonumber \label{thm_1_1_3}\\  
                      &=& \gamma P^{\pi^{*}}(Q^{*}-Q_{k}) + \gamma P^{\pi^{*}}Q_{k} - \gamma P^{*}Q_{k} + \epsilon_{k+1} \label{thm_1_1_4}\\  
                      &\le& \gamma(P^{\pi^{*}}(Q^{*}-Q_{k})) + \epsilon_{k+1} \label{thm_1_1_5}
    \end{eqnarray}

Right hand side of Equation \eqref{thm_1_1_1} is obtained by writing $Q^{*} = T^{\pi^{*}}Q^{*}$. This is because the function $Q^{*}$ is a stationary point with respect to the operator $T^{\pi^{*}}$. Equation \eqref{thm_1_1_2} is obtained from \eqref{thm_1_1_1} by adding and subtracting $T^{\pi^{*}}Q_{k}$. Equation \eqref{thm_1_1_5} is obtained from \eqref{thm_1_1_4} as $P^{\pi^{*}}Q_{k} \le P^{*}Q_{k}$ and $P^{*}$ is the operator with respect to the greedy policy of $Q_{k}$.

By recursion on $k$, we get,

\begin{equation}
    Q^{*}-Q_{K} \leq \sum_{k=0}^{K-1} \gamma^{K-k-1} (P^{\pi^{*}})^{K-k-1}\epsilon_{k} +  \gamma^{K} (P^{\pi^{*}})^{K}(Q^{*}-Q_{0}) \label{thm_1_2}
\end{equation}

using $TQ_{K} \ge T^{\pi^{*}}Q_{K}$ (from definition of $T^{\pi^{*}}$) and $TQ_{K}=T^{\pi_{K}}Q_{K}$ as $\pi_{k}$ is the greedy policy with respect to $Q_{k}$ hence $T^{\pi_{K}}$ acts on it the same way $T$ does.

Similarly we write,

\begin{eqnarray}
    Q^{*}-Q^{\pi_{K}}  &=& T^{\pi^{*}}Q^{*} - T^{\pi_{K}}Q^{\pi_{K}} \label{thm_1_2_1}\\
     &=&  T^{\pi^{*}}Q^{*} -  T^{\pi^{*}}Q_{K} + T^{\pi^{*}}Q_{K} - TQ_{K} + TQ_{k} - T^{\pi_{K}}Q^{\pi_{K}} \label{thm_1_2_2}\\
     &\le&  T^{\pi^{*}}Q^{*} -  T^{\pi^{*}}Q_{K} + TQ_{k} - T^{\pi_{K}}Q^{\pi_{K}} \label{thm_1_2_3}\\
     &\le&  r(s,a) + \gamma P^{\pi^{*}}Q^{*} - (r(s,a) + \gamma P^{\pi^{*}}Q_{K}) +  (r(s,a) + \gamma P^{*}Q_{k}) - (r(s,a)+ \gamma P^{\pi_{K}}Q^{\pi_{K}}) \label{thm_1_2_4}\\
     &=& \gamma P^{\pi^{*}}(Q^{*}-Q_{K}) + \gamma P^{\pi_{K}}(Q_{K}-Q^{\pi_{K}}) \label{thm_1_2_5}\\
     &=& \gamma P^{\pi^{*}}(Q^{*}-Q_{K}) + \gamma P^{\pi_{K}}(Q_{K}-Q^{*}+Q^{*}-Q^{\pi_{K}}) \label{thm_1_2_6}
\end{eqnarray}

The right hand side of Equation \eqref{thm_1_2_2} is obtained by adding and subtracting $T^{\pi^{*}}Q_{K}$ and $TQ_{K}$ to the right hand side of Equation \eqref{thm_1_2_1}. Equation \eqref{thm_1_2_3} is obtained from \eqref{thm_1_2_2} by noting that the term $T^{\pi^{*}}Q_{K} - TQ_{K}$ is non-positive for all $(s,a)$ as $T$ is the greedy operator on a Q function and results in a higher or equal value than any other operator. Equation \eqref{thm_1_2_5} is obtained from \eqref{thm_1_2_4} by writing $P^{\pi_{k}}Q^{\pi_{k}}=P^{*}Q^{\pi_{k}}$. This is true as $\pi_{k}$ is the greedy policy with respect to $Q^{\pi_{k}}$, hence the operator $P^{*}$ acts on $Q^{\pi_{k}}$ in the same way as $P^{\pi_{k}}$. Equation \eqref{thm_1_2_6} is obtained from \eqref{thm_1_2_5} by adding and subtracting $Q^{*}$ to the second term on the right hand side.

By rearranging the terms in Equation \eqref{thm_1_2_6} we get 
\begin{eqnarray}
    Q^{*}-Q^{\pi_{K}} &\leq&  \gamma P^{\pi^{*}}(Q^{*}-Q_{K}) + \gamma P^{\pi_{K}}(Q_{K}-Q^{*}+Q^{*}-Q^{\pi_{K}}) \label{thm_1_2_7}\\
                      &\leq&  \gamma (P^{\pi^{*}} - P^{\pi_{K}})(Q^{*}-Q_{K}) + \gamma  P^{\pi_{K}}(+Q^{*}-Q^{\pi_{K}}) \label{thm_1_2_8}
 \end{eqnarray}

Thus, we have 
 \begin{eqnarray}
     (I-\gamma P^{\pi_{K}})(Q^{*}-Q^{\pi_{K}})  &\leq& \gamma (P^{\pi^{*}} - P^{\pi_{K}})(Q^{*}-Q_{K}) \label{thm_1_2_9}
\end{eqnarray}

This implies
\begin{eqnarray}
     (Q^{*}-Q^{\pi_{K}})  &\leq& \gamma(I-\gamma P^{\pi_{K}})^{-1}\left(P^{\pi^{*}} - P^{\pi_{K}}\right)(Q^{*}-Q_{K}) \label{thm_1_2_10}
\end{eqnarray}

Equation \eqref{thm_1_2_10} is obtained from \eqref{thm_1_2_9} by taking the inverse of the operator $(I-\gamma P^{\pi_{K}})$  on both sides of \eqref{thm_1_2_9}

Plugging in value of $(Q^{*}-Q_{K})$ from Equation \eqref{thm_1_2}  in Equation \eqref{thm_1_2_10},  we obtain 

\begin{eqnarray}
    Q^{*}-Q^{\pi_{K}} 
    &\leq& \gamma(I-\gamma P^{\pi_{K}})^{-1} \left(P^{\pi^{*}} - P^{\pi_{K}}\right)\left[\sum_{k=1}^{K-1} \gamma^{K-k-1} (P^{\pi^{*}})^{K-k-1}\epsilon_{k} +  \gamma^{K} (P^{\pi^{*}})^{K}(Q^{*}-Q_{0})\right]\label{thm_1_2_11}\\
    &\leq& \gamma(I-\gamma P^{\pi_{K}})^{-1} \left(P^{\pi^{*}} - P^{\pi_{K}}\right)\left[\sum_{k=1}^{K-1} \gamma^{K-k-1} (P^{\pi^{*}})^{K-k-1}|\epsilon_{k}| +  \gamma^{K} (P^{\pi^{*}})^{K}(Q^{*}-Q_{0})\right]\label{thm_1_2_11_1}\\
    &\leq& \gamma(I-\gamma P^{\pi_{K}})^{-1} \left(P^{\pi^{*}} + P^{\pi_{K}}\right)\left[\sum_{k=1}^{K-1} \gamma^{K-k-1}(P^{\pi^{*}})^{K-k-1}|\epsilon_{k}| +  \gamma^{K} (P^{\pi^{*}})^{K}(Q^{*}-Q_{0}) \right]\label{thm_1_2_11_2}\\
    &\leq& \gamma(I-\gamma P^{\pi_{K}})^{-1} \left(P^{\pi^{*}} + P^{\pi_{K}}\right)\left[\sum_{k=1}^{K-1} \gamma^{K-k-1}(P^{\pi^{*}})^{K-k-1}|\epsilon_{k}| +  \gamma^{K} (P^{\pi^{*}})^{K}(Q^{*}-Q_{0}) \right]\label{thm_1_2_12}\\
    &\leq& \gamma(I-\gamma P^{\pi_{K}})^{-1} \Bigg[\sum_{k=1}^{K-1} \gamma^{K-k-1} \left((P^{\pi^{*}})^{K-k} +P^{\pi_{k}}(P^{\pi^{*}})^{K-k-1}\right)|\epsilon_{k}| +\nonumber\\
    && \gamma^{K} \Big((P^{\pi^{*}})^{K+1}+ P^{\pi_{k}}(P^{\pi^{*}})^{K}\Big)(Q^{*}-Q_{0})\Bigg]\label{thm_1_3}
\end{eqnarray}

 Equation \eqref{thm_1_2_11_2} is obtained from Equation \eqref{thm_1_2_11_1} by noting that the operator $\left(P^{\pi^{*}} + P^{\pi_{K}}\right)$ acting on $|\epsilon_{k}|$ will produce a larger value than $\left(P^{\pi^{*}} - P^{\pi_{K}}\right)$ acting on $|\epsilon_{k}|$.

\begin{equation}
    Q^{*}-Q^{\pi_{K}} 
    \leq (I-\gamma P^{\pi_{K}})^{-1} \left[\sum_{k=1}^{K-1} \gamma^{K-k-1} \left((P^{\pi^{*}})^{K-k} +P^{\pi_{k}}(P^{\pi^{*}})^{K-k-1}\right)|\epsilon_{k}| +  \gamma^{K} \left((P^{\pi^{*}})^{K+1} + P^{\pi_{k}}(P^{\pi^{*}})^{K} \right)(Q^{*}-Q_{0}) \right]\label{thm_1_4}
\end{equation}

We know $Q^{*} \leq \frac{R_{\max}}{1-\gamma}$ for all $(s,a) \in \mathcal{S}\times\mathcal{A}$ and $Q_{0}=0$ by initialization. Therefore, we have
\begin{equation}
    Q^{*}-Q^{\pi_{K}}
   \leq (I-\gamma P^{\pi_{K}})^{-1} \left[\sum_{k=1}^{K-1} \gamma^{K-k-1} \left((P^{\pi^{*}})^{K-k} +P^{\pi_{k}}(P^{\pi^{*}})^{K-k-1}\right)|\epsilon_{k}| +  \gamma^{K} \left((P^{\pi^{*}})^{K+1} + P^{\pi_{k}}(P^{\pi^{*}})^{K} \right)\frac{R_{\max}}{(1-\gamma)} \right]\label{thm_1_5}
\end{equation}

For simplicity, in stating our results, we also define

\begin{equation}
\alpha_{k} \triangleq\begin{cases}
          \frac{(1-\gamma)\gamma^{K-k-1}}{1-\gamma^{K+1}}  \quad &\text{if} \, 0  \leq k < K, \label{ps_5} \\
          \frac{(1-\gamma)\gamma^{K}}{1-\gamma^{K+1}} \quad &\text{if} \, k = K 
     \end{cases} 
\end{equation}

\begin{equation}
A_{k} \triangleq\begin{cases}
         \frac{(1-\gamma)}{2} (I-\gamma P^{\pi_{K}})^{-1}  \left( (P^{\pi^{*}})^{K-k} + P^{\pi_{K}}(P^{\pi^{*}})^{K-k-1} \right) \quad &\text{if} \, 0  \leq k < K, \label{ps_6}\\
       \frac{(1-\gamma)}{2} (I-\gamma P^{\pi_{K}})^{-1}  \left( (P^{\pi^{*}})^{K+1} + P^{\pi_{K}}(P^{\pi^{*}})^{K} \right) \quad &\text{if} \, k = K \\
     \end{cases} 
\end{equation}

We then substitute the value $\alpha_{k}$ and $A_{k}$ from Equation \eqref{ps_5} and \eqref{ps_6} respectively to get,

\begin{equation}
    Q^{*}-Q^{\pi_{K}} 
    \leq \frac{2\gamma(1-\gamma^{K+1})}{(1-\gamma)^{2}}\left[\sum_{k=1}^{K-1} \alpha_{k}A_{k}|\epsilon_{k}| +  \alpha_{K}A_{K}\frac{R_{\max}}{1-\gamma} \right]\label{thm_1_6}
\end{equation}

Taking an expectation on both sides with respect to  $\mu \in$ $\mathcal{P}(\mathcal{S} \times \mathcal{A})$ we get

\begin{equation}
    \mathbb{E}(Q^{*}-Q^{\pi_{K}})_{\mu} 
    \leq \frac{2\gamma(1-\gamma^{K+1})}{(1-\gamma)^{2}}\left[\sum_{k=1}^{K-1} \underbrace{\mathbb{E}(\alpha_{k}A_{k}|\epsilon_{k}|)_{\mu}}_{A} +  \underbrace{\alpha_{K}A_{K}\frac{R_{\max}}{1-\gamma}}_{B} \right]\label{thm_1_7}
\end{equation}

Consider the term $A$ in Equation \eqref{thm_1_7}
Plugging in the value of $\alpha_{k}A_{k}$ in the term $\mathbb{E}(\alpha_{k}A_{k}(\epsilon_{k}))_{\nu}$, we get
\begin{eqnarray}
    A &=& \mathbb{E}\left(\frac{(1-\gamma)^{2}\gamma^{K-k}}{2(1-\gamma^{K+1})} (I-\gamma P^{\pi_{k}})^{-1}\left( (P^{\pi^{*}})^{K-k} + (P^{\pi^{*}})^{K-k-1}P^{\pi_{k}}\right)|\epsilon_{k}|\right)_{\mu} \label{thm_1_8}\\
      &=& \mathbb{E}\left(\frac{(1-\gamma)^{2}\gamma^{K-k}}{2(1-\gamma^{K+1})} \sum_{m=1}^{\infty}(\gamma{P}^{\pi_{k}})^{m}\left( (P^{\pi^{*}})^{K-k} + (P^{\pi^{*}})^{K-k-1}P^{\pi_{k}}\right)|\epsilon_{k}|\right)_{\mu} \label{thm_1_9}\\
      &=& \mathbb{E}\left(\frac{(1-\gamma)^{2}\gamma^{K-k}}{2(1-\gamma^{K+1})} \sum_{m=1}^{\infty}\left( (\gamma{P}^{\pi_{k}})^{m}(P^{\pi^{*}})^{K-k} +(\gamma{P}^{\pi_{k}})^{m+1}(P^{\pi^{*}})^{K-k-1}\right)|\epsilon_{k}|\right)_{\mu} \label{thm_1_10}\\
       &=&\frac{(1-\gamma)^{2}\gamma^{K-k}}{2(1-\gamma^{K+1})} \sum_{m=1}^{\infty}\left(\underbrace{\mathbb{E}\left((\gamma{P}^{\pi_{k}})^{m}(P^{\pi^{*}})^{K-k}|\epsilon_{k}|\right)_{\mu}}_{A1}  +\underbrace{\mathbb{E}\left((\gamma{P}^{\pi_{k}})^{m+1}(P^{\pi^{*}})^{K-k-1}|\epsilon_{k}|\right)_{\mu}}_{A2}\right) \label{thm_1_11}\\
\end{eqnarray}

Equation \eqref{thm_1_9} is obtained from \eqref{thm_1_8} by writing $(I-\gamma P^{\pi_{k}})^{-1} = \sum_{m=1}^{\infty} (\gamma{P^{\pi_{k}}})^{m}$ using the binomial expansion formula.

Consider the term $A1$ in Equation \eqref{thm_1_10}, We have
\begin{eqnarray}
    A1 &=& \mathbb{E}\left((\gamma{P}^{\pi_{k}})^{m}(P^{\pi^{*}})^{K-k}|\epsilon_{k}|\right)_{\mu} \label{thm_1_12}\\
       &=&  \int_{\mathcal{S}\times\mathcal{A}} (\gamma{P}^{\pi_{k}})^{m}(P^{\pi^{*}})^{K-k}|\epsilon_{k}|d\mu(s,a) \label{thm_1_13}\\
       &\le&  \int_{\mathcal{S}\times\mathcal{A}}  \frac{d\left( (\gamma{P}^{\pi_{k}})^{m}(P^{\pi^{*}})^{K-k}\mu\right)}{d\nu} |\epsilon_{k}|  d\nu(s,a)  \label{thm_1_14}\\
       &\le&  \gamma^{m}\phi_{\nu,\mu}\left(\int_{\mathcal{S}\times\mathcal{A}}|\epsilon_{k}| d\nu(s,a)\right) \label{thm_1_15}\\
       &\le&  \gamma^{m}\phi_{\nu,\mu}\mathbb{E}(|\epsilon_{k}|)_{\nu}  \label{thm_1_16}
\end{eqnarray}

Here, Equation  \eqref{thm_1_15} is obtained from \eqref{thm_1_14} from Assumption \ref{assump_6}. 

In the same manner, we have 

\begin{eqnarray}
    A2 &\le&  \gamma^{m}\phi_{\nu,\mu}\mathbb{E}(|\epsilon_{k}|)_{\nu}  \label{thm_1_17}
\end{eqnarray}

Denote the term in Equation \eqref{thm_1_13} as 

\begin{equation}
    \int_{\mathcal{S}\times\mathcal{A}} (\gamma{P}^{\pi_{k}})^{m}(P^{\pi^{*}})^{K-k}(|\epsilon_{k}|)d\mu(s,a) = \int_{\mathcal{S}\times\mathcal{A}}(|\epsilon_{k}|)d\tilde{\nu} 
\end{equation}

Where $({P}^{\pi_{k}})^{m}(P^{\pi^{*}})^{K-k}\mu$ is the marginal distribution of the state action pair at step $m+K-k+1$ denoted by $\tilde{\nu}$ for notational simplicity. It is the state action distribution at step $m+K-k+1$ obtained by starting the state action pair sampled form $\mu$ and the following the policies $\pi^{*}$ for $K-k$ steps and then $\pi_{k}$ for $m$ steps. We then get Equation \eqref{thm_1_14} from \eqref{thm_1_13} from the definition of the Radon Nikodym derivative as follows.

\begin{equation}
    \int_{\mathcal{S}\times\mathcal{A}}|\epsilon_{k}|d\tilde{\nu}  = \int_{\mathcal{S}\times\mathcal{A}}\frac{d\tilde{\nu}}{d{\nu}}|\epsilon_{k}|d\nu
\end{equation}

Plugging upper bound on $A1$  and $A2$ into Equation \eqref{thm_1_11}, we obtain

\begin{eqnarray}
    A &\le& \frac{(1-\gamma)^{2}\gamma^{K-k}}{2(1-\gamma^{K+1})} \sum_{m=1}^{\infty}\left(2\phi_{\nu,\mu}\gamma^{m}\mathbb{E}(|\epsilon_{k}|)_{\nu}\right) \label{thm_1_18}\\
      &\le& \frac{(1-\gamma)^{2}\gamma^{K-k}}{2(1-\gamma^{K+1})} \left(\frac{2\phi_{\nu,\mu}\mathbb{E}(|\epsilon_{k}|)_{\nu}}{1-\gamma}\right) \label{thm_1_19}\\
      &\le& \frac{\phi_{\nu,\mu}(1-\gamma)\gamma^{K-k}}{(1-\gamma^{K+1})} \mathbb{E}(|\epsilon_{k}|)_{\nu} \label{thm_1_20}\\
      &\le& \frac{\phi_{\nu,\mu}\gamma^{K-k}}{(1-\gamma^{K+1})} \mathbb{E}(|\epsilon_{k}|)_{\nu} 
\end{eqnarray}

Now consider $B$ in Equation \eqref{thm_1_7}. We have

\begin{eqnarray}
    B &=& \alpha_{K}A_{k}\frac{R_{max}}{1-\gamma} \label{thm_1_21}\\
      &=& \alpha_{K}(1-\gamma)\frac{R_{max}}{1-\gamma} \label{thm_1_22}\\
      &=& \frac{\gamma^{K}R_{max}}{(1-\gamma^{K+1})} \label{thm_1_22_1}
\end{eqnarray}

Equation \eqref{thm_1_22} is obtained from the fact that  the transition operator $A_{k}$ acting on the constant $\frac{R_{max}}{1-\gamma}$ is equivalent to multiplying it by ${1-\gamma}$. Equation \eqref{thm_1_22_1} is obtained by plugging in the value of $\alpha_{K}$ from  Equation \eqref{ps_5}. Plugging upper bound on $A$  and value of $B$ into Equation \eqref{thm_1_7}, we obtain

\begin{eqnarray}
    \mathbb{E}(Q^{*}-Q^{\pi_{K}})_{\mu} 
    \leq \frac{2\gamma(1-\gamma^{K+1})}{(1-\gamma)^{2}}\left[\sum_{k=1}^{K-1} \frac{\phi_{\nu,\mu}\gamma^{K-k}}{(1-\gamma^{K+1})} \mathbb{E}(|\epsilon_{k}|)_{\nu}  +   \frac{\gamma^{K}R_{max}}{2(1-\gamma^{K+1})}   \right]\label{thm_1_23}\\
     \mathbb{E}(Q^{*}-Q^{\pi_{K}})_{\mu} 
    \leq \frac{2\phi_{\nu,\mu}\gamma}{(1-\gamma)}\left[\sum_{k=1}^{K-1} \gamma^{K-k}\mathbb{E}(|\epsilon_{k}|)_{\nu}\right] +  \frac{2R_{\max}\gamma^{K+1}}{(1-\gamma)^{2}} \label{thm_1_24}
\end{eqnarray}

From Equation \eqref{last} we have that 

\begin{eqnarray}
   \epsilon_{k} &=& TQ_{k-1} - Q_{k} \nonumber\\
                 &=& \epsilon_{k_1} + \epsilon_{k_2} +\epsilon_{k_3} +\epsilon_{k_4}  \label{thm_1_25}
\end{eqnarray}

Thus, equation \eqref{thm_1_24} becomes
\begin{eqnarray}
    E{(Q^{*}-Q^{\pi_{K}})}_{\mu} 
    &\leq& \frac{2\gamma}{(1-\gamma)}\left[\sum_{k=1}^{K-1} \gamma^{K-k}\mathbb{E}(|\epsilon_{k_1} + \epsilon_{k_2} +\epsilon_{k_3} +\epsilon_{k_4}|)_{\mu}\right] +  \frac{2R_{\max}\gamma^{K+1}}{(1-\gamma)^{3}} \label{thm_1_26} \\
    &\leq& \underbrace{\frac{2\gamma}{(1-\gamma)}\left[\sum_{k=1}^{K-1}\gamma^{K-k}\mathbb{E}(|\epsilon_{k_1}|)_{\mu}\right]}_{T1} + \underbrace{\frac{2\gamma}{(1-\gamma)}\left[\sum_{k=1}^{K-1} \gamma^{K-k}\mathbb{E}(|\epsilon_{k_2}|)_{\mu}\right]}_{T2} +\underbrace{\frac{2\gamma}{(1-\gamma)}\left[\sum_{k=1}^{K-1} \gamma^{K-k}\mathbb{E}(|\epsilon_{k_3}|)_{\mu}\right]}_{T3} +\nonumber\\
    && \underbrace{\frac{2\gamma}{(1-\gamma)}\left[\sum_{k=1}^{K-1} \gamma^{K-k}\mathbb{E}(|\epsilon_{k_4}|)_{\mu}\right]}_{T4} +\underbrace{\frac{2R_{\max}\gamma^{K+1}}{(1-\gamma)^{2}}}_{T5}  \label{thm_1_27}
\end{eqnarray}

Consider $T1$, from Lemma \ref{lem_1}, we have $\mathbb{E}(|\epsilon_{k_1}|)_{\nu} \le \sqrt{\epsilon_{bias}}$. Thus, for term $T1$, we obtain  

\begin{eqnarray}
    T1 &=& \frac{2\phi_{\nu,\mu}\gamma}{(1-\gamma)}\left[\sum_{k=1}^{K-1}\gamma^{K-k}\mathbb{E}(|\epsilon_{k_1}|)_{\nu}\right] \label{thm_1_28}\\
    &\le& \frac{2\phi_{\nu,\mu}\gamma}{(1-\gamma)} \left[\sum_{k=1}^{K-1}\gamma^{K-k}\sqrt{\epsilon_{bias}}\right] \label{thm_1_29}\\ 
    &\le& \frac{2\phi_{\nu,\mu}\sqrt{\epsilon_{bias}}\gamma}{(1-\gamma)}\frac{1}{1-\gamma}\label{thm_1_30}\\
    T1 &\le& \frac{2\phi_{\nu,\mu}\gamma\sqrt{\epsilon_{bias}}}{(1-\gamma)^{2}}\label{thm_1_31}
\end{eqnarray}

Equation \eqref{thm_1_30} is obtained from \eqref{thm_1_29} by using the inequality $\sum_{k=1}^{K-1}\gamma^{K-k} \le \frac{1}{1-\gamma}$. 

Consider $T2$ we have from Lemma \ref{lem_2} $\epsilon_{k2}=0$.

Therefore 
\begin{equation}
T2=0 \label{thm_1_32}     
\end{equation}

Consider $T3$, from Lemma \ref{lem_3}, we have that if the number of samples of state action pairs at iteration $k$ denoted by $n_{k}$ satisfy

\begin{eqnarray}
   n_{k} \ge 8\left({(C_{k}{\eta}\beta_{k}})^{2}{\epsilon}^{-2}\right)  \label{thm_1_33}
\end{eqnarray}

then we have
    
    \begin{equation}
        \mathbb{E}\left(\epsilon_{k_3}\right)_{\nu} \le  \epsilon
    \end{equation}

Plugging in $\epsilon = \epsilon\frac{(1-\gamma)^{2}}{6\phi_{\nu,\mu}\gamma}$ in Equation \eqref{thm_1_33},   we get that if number of samples of state action pairs at iteration $k$ denoted by $n_{k}$ and the step size $\alpha$ satisfy

\begin{eqnarray}
    n_{k} &\ge& 288(C_{k}{\eta}\beta_{k}\phi_{\nu,\mu}\gamma)^{2} (1-\gamma)^{-4}{\epsilon}^{-2}          \label{thm_1_33_1}\\
    &&\forall k \in \{1,\cdots,K\} \nonumber
    \alpha = 
\end{eqnarray}

then we have

\begin{eqnarray}
    T3 &=& \frac{2\phi_{\nu,\mu}\gamma}{(1-\gamma)}\left[\sum_{k=1}^{K-1}\gamma^{K-k}\mathbb{E}(|\epsilon_{k_3}|)_{\nu}\right] \label{thm_1_34}\\
    &\le&  \frac{2\phi_{\nu,\mu}\gamma}{(1-\gamma)}\left[\sum_{k=1}^{K-1}\gamma^{K-k}\epsilon \frac{(1-\gamma)^{2}}{6\phi_{\nu,\mu}\gamma}\right]  \label{thm_1_35}\\ 
    &\le&  \frac{2\gamma\epsilon}{(1-\gamma)}\left[\sum_{k=1}^{K-1}\gamma^{K-k}\frac{(1-\gamma)^{2}}{6\gamma}\right]  \label{thm_1_28_1}\\
    &\le&  \frac{\epsilon}{3(1-\gamma)}\left[\sum_{k=1}^{K-1}\gamma^{K-k}(1-\gamma)^{2}\right]   \label{thm_1_36}\\
    &\le&  \frac{\epsilon}{3(1-\gamma)}\left[\frac{1}{1-\gamma}(1-\gamma)^{2}\right]  \label{thm_1_37}\\
    &\le&  \frac{\epsilon}{3}(1-\gamma^{K}) \label{thm_1_38}\\
    T3 &\le&  \frac{\epsilon}{3} \label{thm_1_39}
\end{eqnarray}

Equation \eqref{thm_1_37} is obtained from \eqref{thm_1_36} by using the inequality $\sum_{k=1}^{K-1}\gamma^{K-k} \le \frac{1}{1-\gamma}$.

Consider $T4$. From Lemma \ref{lem_4}, we have that if the number of iterations of the projected gradient descent algorithm at iteration $k$ denoted by $T_{k}$ and the step size denoted by $\alpha_{k}$ satisfy

\begin{eqnarray}
     T_{k} &\ge& \left({\frac{C^{'}_{k}\epsilon}{l_{k}}}\right)^{-2}L_{k}^{2}||u_{k}^{*}||^{2}_{2} - 1 \label{thm_1_40}\\
     \alpha_{k} &=& \frac{||u^{*}_{k}||_{2}}{L_{k}\sqrt{T_{k}+1}}\\
\end{eqnarray}

and $\epsilon < C^{'}_{k}$, then we have 

\begin{equation}
       \mathbb{E}(|\epsilon_{k4}|)_{\nu} \le \epsilon + \epsilon_{|\tilde{D}|} \label{thm_1_41}
\end{equation}
Plugging in $\epsilon = \epsilon \frac{(1-\gamma)^{2}}{6\phi_{\nu,\mu}\gamma}$ in Equation \eqref{thm_1_40} we get that if the number of iterations of the projected gradient descent algorithm at iteration $k$ denoted by $T_{k}$ and the step size denoted by $\alpha_{k}$ satisfy

\begin{eqnarray}
     T_{k} &\ge& \left({\frac{C^{'}_{k}\epsilon(1-\gamma)^{2}}{6l\phi_{\nu,\mu}\gamma}}\right)^{-2}L_{k}^{2}||u_{k}^{*}||^{2}_{2} - 1 \label{thm_1_42}\\ 
     \alpha_{k} &=& \frac{||u^{*}_{k}||_{2}}{L_{k}\sqrt{T_{k}+1}}\\
          &&\forall k \in \{1,\cdots,K\} \nonumber
\end{eqnarray}

and if $\epsilon < C^{'}_{k}$ for all $k \in (1,\cdots,K)$, then we have

\begin{eqnarray}
    T4 &=& \frac{2\phi_{\nu,\mu}\gamma}{(1-\gamma)}\left[\sum_{k=1}^{K-1}\gamma^{K-k}\mathbb{E}(|\epsilon_{k_{4}}|)_{\nu}\right]   \label{thm_1_43}\\
    &\le&  \frac{2\phi_{\nu,\mu}\gamma}{(1-\gamma)}\left[\sum_{k=1}^{K-1}\gamma^{K-k}\epsilon \frac{(1-\gamma)^{2}}{6\phi_{\nu,\mu}\gamma}\right] + \frac{2\phi_{\nu,\mu}\gamma}{(1-\gamma)}\left[\sum_{k=1}^{K-1}\gamma^{K-k}\epsilon_{|\tilde{D}|}\right] \label{thm_1_44}\\ 
    &\le&  \frac{2\gamma\epsilon}{(1-\gamma)}\left[\sum_{k=1}^{K-1}\gamma^{K-k}\frac{(1-\gamma)^{2}}{6\gamma}\right] +\frac{2\phi_{\nu,\mu}\epsilon_{|\tilde{D}|}\gamma}{(1-\gamma)}\frac{1}{1-\gamma} \label{thm_1_45}\\
    &\le&  \frac{\epsilon}{3(1-\gamma)}\left[\sum_{k=1}^{K-1}\gamma^{K-k}(1-\gamma)^{2}\right] + \frac{2\phi_{\nu,\mu}\epsilon_{|\tilde{D}|}}{(1-\gamma)^{2}}\label{thm_1_46}\\
    &\le&  \frac{\epsilon}{3(1-\gamma)}\left[\frac{1}{1-\gamma}(1-\gamma)^{2}\right] +  \frac{2\phi_{\nu,\mu}\epsilon_{|\tilde{D}|}}{(1-\gamma)^{2}} \label{thm_1_47}\\
    &\le&  \frac{\epsilon}{3}(1-\gamma^{K}) + \frac{2\phi_{\nu,\mu}\epsilon_{|\tilde{D}|}}{(1-\gamma)^{2}}\label{thm_1_48}\\
    &\le&  \frac{\epsilon}{3} +\frac{2\phi_{\nu,\mu}\epsilon_{|\tilde{D}|}}{(1-\gamma)^{2}}\label{thm_1_49}
\end{eqnarray}

Equation \eqref{thm_1_44}, \eqref{thm_1_47}  are  obtained from \eqref{thm_1_43}, \eqref{thm_1_46},  respectively, by using the inequality $\sum_{k=1}^{K-1}\gamma^{K-k} \le \frac{1}{1-\gamma}$.

Consider $T5$. Assume $K$ is large enough such that

\begin{eqnarray}
        K &\geq& \frac{1}{\log\left(\frac{1}{\gamma}\right)}\log\left(\frac{6R_{max}}{\epsilon(1-\gamma)^{2}}\right) -1  \label{thm_1_54}\\
        {(K+1)}{\log\left(\frac{1}{\gamma}\right)} &\geq& \log\left(\frac{6R_{max}}{\epsilon(1-\gamma)^{2}}\right) \label{thm_1_53}\\
        {(K+1)}{\log(\gamma)} &\le& \log\left(\frac{\epsilon(1-\gamma)^{2}}{6R_{max}}\right) \label{thm_1_52}\\
        \gamma^{K+1} &\le& \frac{\epsilon(1-\gamma)^{2}}{6R_{max}} \label{thm_1_51}\\
       \frac{2R_{max}\gamma^{K+1}}{(1-\gamma)^{2}} &\le& \frac{\epsilon}{3} \label{thm_1_50}   
\end{eqnarray}

Equation \eqref{thm_1_52} is obtained from \eqref{thm_1_53} by multiplying on $-1$ on both sides and noting that $\log(x) = -\log\left(\frac{1}{x}\right)$

Thus, we obtain that if the number of iterations of Algorithm \ref{algo_1}, number of state actions pairs sampled at iteration $k$ denoted by $n_{k}$ and the number of iterations of the projected gradient descent on iteration $k$ denoted by $T_{k}$ and the step size denoted by $\alpha_{k}$ satisfy    

\begin{eqnarray}
    K &\geq& \frac{1}{\log\left(\frac{1}{\gamma}\right)}\log\left(\frac{6R_{max}}{\epsilon(1-\gamma)^{2}}\right) -1 \label{thm_1_55} \\  
    n_{k} &\ge& 288(C_{k} {\eta}\beta_{k}\phi_{\nu,\mu}\gamma)^{2} (1-\gamma)^{-4}{\epsilon}^{-2}       \label{thm_1_56}\\
     T_{k} &\ge& \left({\frac{C^{'}_{k}\epsilon(1-\gamma)^{2}}{6l_{k}\phi_{\nu,\mu}\gamma}}\right)^{-2}L_{k}^{2}||u_{k}^{*}||^{2}_{2} - 1 \label{thm_1_57}\\
    \alpha_{k} &=& \frac{||u^{*}_{k}||_{2}}{L_{k}\sqrt{T_{k}+1}}\\
    && \forall k \in \{1,\cdots,K\} \nonumber
\end{eqnarray} 

and if $\epsilon < (C^{'}_{k})$ for all $k \in (1,\cdots,K)$,

We have 
\begin{eqnarray}
    E{(Q^{*}-Q^{\pi_{K}})}_{\mu} \le \epsilon + \frac{2\gamma}{(1-\gamma)^{2}}(\sqrt{\epsilon_{bias}} + \epsilon_{\tilde{D}})
    \label{thm_1_58} 
\end{eqnarray}

\end{proof}

\section{Proof of Supporting Lemmas} \label{Upper Bounding Bellman Error}

\subsection{Proof Of Lemma \ref{lem_1}} \label{proof_lem_1}

\begin{proof}

Using Assumption \ref{assump_3} and the definition of $Q_{k1}$ for some iteration $k$ of Algorithm \ref{algo_1} we have  
\begin{equation}
\mathbb{E}(TQ_{k-1} - Q_{k1})^{2}_{\nu} \le \epsilon_{bias}  
\end{equation}

Since $|a|^{2}=a^{2}$ we obtain

\begin{equation}
\mathbb{E}(|TQ_{k-1} - Q_{k1}|)^{2}_{\nu} \le \epsilon_{bias}  
\end{equation}

We have for a random variable $x$, $ Var(x)=\mathbb{E}(x^{2}) - (\mathbb{E}(x))^{2}$ hence $\mathbb{E}(x) = \sqrt{\mathbb{E}(x^{2}) -Var(x)}$, Therefore replacing $x$ with $|TQ_{k-1} - Q_{k1}|$ we get

using the definition of the variance of a random variable we get  
\begin{equation}
\mathbb{E}(|TQ_{k-1} - Q_{k1}|)_{\nu}=\sqrt{\mathbb{E}(|TQ_{k-1} - Q_{k1}|)^{2}_{\nu} - Var(|TQ_{k-1} - Q_{k1}|)_{\nu}}  
\end{equation}

Therefore we get

\begin{equation}
    \mathbb{E}(|TQ_{k-1} - Q_{k1}|)_{\nu} \le \sqrt{\epsilon_{bias}}
\end{equation}

Since $\epsilon_{k_1}=TQ_{k-1} - Q_{k1}$ we have 
\begin{equation}
    \mathbb{E}(|\epsilon_{k_1}|)_{\nu} \le \sqrt{\epsilon_{bias}}
\end{equation}

$\blacksquare$
\end{proof}

\subsection{Proof Of Lemma \ref{lem_2}} \label{proof_lem_2}
\begin{proof}
From Lemma \ref{sup_lem_1}, we have 

\begin{equation}
    \argminA_{f_{\theta}}\mathbb{E}_{x,y}\left(f_{\theta}(x)-g(x,y)\right)^{2}=\argminA_{f_{\theta}} \left(\mathbb{E}_{x,y}\left(f_{\theta}(x)-\mathbb{E}(g(y^{'},x)|x)\right)^{2}\right) \label{lem_3_1}
\end{equation}   

The function $f_{\theta}(x)$ to be $Q_{\theta}(s,a)$ and $g(x,y)$ to be the function $r^{'}(s,a) + \max_{a^{'} \in \mathcal{A}}{\gamma}Q_{k-1}(s^{'},a^{'})$.

We also have $y$ as the two dimensional random variable $(r^{'}(s,a),s^{'})$. We now have $(s,a) \sim \nu$ and $s^{'}|(s,a) \sim P(.|(s,a))$ and $r^{'}(s,a) \sim {R}(.|s,a)$.

Then the loss function in \eqref{sup_lem_1_1} becomes
\begin{equation}
\mathbb{E}_{(s,a) \sim \nu,s^{'} \sim P(s^{'}|s,a), r(s,a) \sim \mathcal{R}(.|s,a)}(Q_{\theta}(s,a)-(r^{'}(s,a)+\max_{a^{'}}{\gamma}Q_{k-1}(s^{'},a^{'})))^{2} \label{lem_3_2}
\end{equation}

Therefore by Lemma \ref{sup_lem_1}, we have that the function $Q_{\theta}(s,a)$ which minimizes Equation \eqref{lem_3_2} it will be minimizing

\begin{equation}
\mathbb{E}_{(s,a) \sim \nu}(Q_{\theta}(s,a) -\mathbb{E}_{s^{'} \sim P(s^{'}|s,a),r \sim \mathcal{R}(.|s,a))}(r^{'}(s,a)+\max_{a^{'}}{\gamma}Q_{k-1}(s^{'},a^{'})|s,a))^{2} \label{lem_3_3}
\end{equation}

But we have from Equation \eqref{ps_4} that

\begin{equation}
\mathbb{E}_{s^{'} \sim P(s^{'}|s,a),r \sim {R}(.|s,a))}(r^{'}(s,a)+\max_{a^{'}}{\gamma}Q_{k-1}(s^{'},a^{'})|s,a) = TQ_{k-1}\label{lem_3_4}
\end{equation}

Combining Equation \eqref{lem_3_2} and \eqref{lem_3_4} we get

\begin{equation}
\argminA_{Q_{\theta}}\mathbb{E}_{(s,a) \sim \nu,s^{'} \sim P(s^{'}|s,a),r \sim {R}(.|s,a))}(Q_{\theta}(s,a)-(r(s,a)+\max_{a^{'}}{\gamma}Q_{k-1}(s^{'},a^{'})))^{2} = \argminA_{Q_{\theta}}\mathbb{E}_{(s,a) \sim \nu}(Q_{\theta}(s,a)-TQ_{k-1})^{2} \label{lem_3_4_1}
\end{equation}

The left hand side of Equation \eqref{lem_3_4_1} is $Q_{k_2}$ as defined in Definition \ref{def_2} and the right hand side is  $Q_{k_1}$ as defined in Definition \ref{def_1}, which gives us 

\begin{equation}
    Q_{k_2}=Q_{k_1}
\end{equation}

\end{proof}

\subsection{Proof Of Lemma \ref{lem_3}}\label{proof_lem_3}

\begin{proof}
 
We define $R_{X,Q_{k-1}}({\theta})$ as

\begin{equation}
    R_{X,Q_{k-1}}({\theta}) = \frac{1}{|X|} \sum_{(s_{i},a_{i}) \in X}\Bigg( Q_{\theta}(s_{i},a_{i}) - \Bigg(r^{'}(s_{i},a_{i})\nonumber\\ 
        + \gamma\max_{a^{'} \in \mathcal{A}}Q_{k-1}(s_{i}^{'},a^{'}) \Bigg)\Bigg)^{2},
\end{equation}

Here, $X=\{s_{i},a_{i}\}_{i=\{1,\cdots,|X|\}}$, where $s_{i}, a_{i} \sim \nu \in \mathcal{P}(\mathcal{S}\times\mathcal{A})$, $r(s_{i}, a_{i}) \sim R(.|s_{i}, a_{i})$ and $s^{'}_{i} \sim P(.|s_{i},a_{i})$. $\theta \in \Theta$, $Q_{\theta}$ is as defined in Equation \eqref{ReLU_1_2} and $Q_{k-1}$ is the estimate of the $Q$ function obtained at iteration $k-1$ of Algorithm \ref{algo_1}.

We also define the term

\begin{equation}
   L_{Q_{k-1}}({\theta}) = \mathbb{E}_{(s,a) \sim \nu ,s' \sim P(s'|s,a), r'(\cdot|s,a)\sim {R}(\cdot|s,a)}\nonumber\\
    (Q_{\theta}(s,a)-(r'(s,a)+\gamma\max_{a'}Q_{k-1}(s',a'))^{2}  
\end{equation}

We denote by $\theta_{k_{2}}, \theta_{k_{3}}$ the parameters of the neural networks $Q_{k_{2}}, Q_{k_{3}}$ respectively. $Q_{k_{2}}, Q_{k_{3}}$ are defined in Definition \ref{def_2} and  \ref{def_3} respectively.  

We then obtain,

\begin{eqnarray}
    R_{X,Q_{k-1}}(\theta_{k_{2}}) - R_{X,Q_{k-1}}(\theta_{k_{3}}) &\le&  R_{X,Q_{k-1}}(\theta_{k_{2}}) - R_{X,Q_{k-1}}(\theta_{k_{3}}) + L_{Q_{k-1}}(\theta_{k_{2}}) - L_{Q_{k-1}}(\theta_{k_{3}})  \label{2_2_2}\\
                                                        &=&   {R_{X,Q_{k-1}}(\theta_{k_{2}})- L_{Q_{k-1}}(\theta_{k_{2}})} - {R_{X,Q_{k-1}}(\theta_{k_{3}}) + L_{Q_{k-1}}(\theta_{k_{2}})}  \label{2_2_3}\\
                                                        &\le&   \underbrace{|R_{X,Q_{k-1}}(\theta_{k_{2}})- L_{Q_{k-1}}(\theta_{k_{2}})|}_{I} + \underbrace{|R_{X,Q_{k-1}}(\theta_{k_{3}})- L_{Q_{k-1}}(\theta_{k_{3}})|}_{II}  \label{2_2_4}
\end{eqnarray}

We get the inequality in Equation \eqref{2_2_2} because $L_{Q_{k-1}}(\theta_{k_{3}}) - L_{Q_{k-1}}(\theta_{k_{2}}) > 0$ as  $Q_{k_{2}}$ is the minimizer of the loss function $ L_{Q_{k-1}}(Q_{\theta})$.

Consider Lemma \ref{sup_lem_0}. The loss function  $R_{X_{k},Q_{k-1}}(\theta_{k_{3}})$ can be written as the mean of loss functions of the form $l(a_{\theta}(s_{i},a_{i}),y_{i})$ where $l$ is the square function. $a_{\theta}(s_{i}, a_{i})=Q_{\theta}(s_{i},a_{i})$  and $y_{i}=\Big(r^{'}(s_{i},a_{i}) + \gamma\max_{a^{'} \in \mathcal{A}}Q_{k-1}(s^{'},a^{'})\Big)$. Thus we have

\begin{equation}
   \mathbb{E}\sup_{\theta \in \Theta}|R_{X,Q_{k-1}}(\theta)- L_{Q_{k-1}}(\theta)| \le 2{\eta}\mathbb{E} \left(Rad(\mathcal{A} \circ \{(s_{1},a_{1}),(s_{2},a_{2}),(s_{3},a_{3}),\cdots,(s_{n},a_{n})\})\right)
\end{equation}

Where  $n_{k}=|X|$, $(\mathcal{A} \circ \{(s_{1},a_{1}),(s_{2},a_{2}),(s_{3},a_{3}),\cdots,(s_{n},a_{n})\} = \{Q_{\theta}(s_{1},a_{1}), Q_{\theta}(s_{2},a_{2}), \cdots, Q_{\theta}(s_{n},a_{n})\}$  and $\eta$ is the Lipschitz constant for the square function  over the state action space $[0,1]^{d}$. The expectation is with respect to ${{(s_{i},a_{i}) \sim \nu,{s_{i}^{'} \sim P(s^{'}|s,a)},r_{i} \sim R(.|s_{i},a_{i})}_{_{i \in (1,\cdots,n_{k})}, }}$. 

From (Ma,Tengyu.(2018). Statistical Learning Theory [Lecture Notes]. https://web.stanford.edu/class/cs229t/scribe{\textunderscore}notes  we have that 

\begin{equation}
    \left(Rad(\mathcal{A} \circ \{(s_{1},a_{1}),(s_{2},a_{2}),(s_{3},a_{3}),\cdots,(s_{n},a_{n})\})\right) \le 2\frac{\beta_{k}}{\sqrt{n_{k}}}
\end{equation}

Where $\{||\theta\||_{2}  \le \beta_{k}; \forall \theta \in \Theta\}$. Since we only have to demonstrate the inequality for $\theta_{k_{2}}$ and $\theta_{k_{3}}$, we set $\beta_{k}$ as  a constant greater than $\max(||\theta_{k_{2}}||_{2},||\theta_{k_{3}}||_{2})$, which  gives us

\begin{eqnarray}
   \mathbb{E}|(R_{X,Q_{k-1}}(\theta_{k_{2}})) - L_{Q_{k-1}}(\theta_{k_{2}})| \le  2\left(\frac{{\eta}\beta_{k}}{\sqrt{n_{k}}}\right) \label{2_2_5}
\end{eqnarray}

The same argument can be applied for $Q_{k_3}$ to get
\begin{eqnarray}
   \mathbb{E}|(R_{X,Q_{k-1}}(\theta_{k_{3}})) - L_{Q_{k-1}}(\theta_{k_{3}})| \le  2\left(\frac{{\eta}\beta_{k}}{\sqrt{n_{k}}}\right) \label{2_2_5_1}
\end{eqnarray}

Thus, if we have  

\begin{equation}
    n_{k} \ge 2\left(\frac{2{\eta}\beta_{k}}{\epsilon}\right)^{2}
\end{equation}

Then we have 

\begin{equation}
   \mathbb{E}\left(R_{X,Q_{k-1}}(\theta_{k_{2}}) - R_{X,Q_{k-1}}(\theta_{k_{3}})\right) \le {\epsilon} \label{2_2_5_2}
\end{equation}

Plugging in the definition of $R_{X,Q_{k-1}}(\theta_{k_{2}}), R_{X,Q_{k-1}}(\theta_{k_{3}})$ in equation  \eqref{2_2_5_2}  we get

\begin{eqnarray}
    \mathbb{E} \frac{1}{n_{k}} \sum_{i=1}^{n_{k}} (Q_{k_{2}}(s_{i},a_{i})-Q_{k_{3}}(s_{i},a_{i}))(Q_{k_{2}}(s_{i},a_{i})+Q_{k_{3}}(s_{i},a_{i}) - 2(r(s_{i},a_{i})) + \gamma \max_{a \in \mathcal{A}}Q_{k-1}(s^{'},a)) &\le& \epsilon \label{2_2_5_3}
\end{eqnarray}
 
Now since all $(s_{i},a_{i})$ are independent \eqref{2_2_5_3} becomes

\begin{eqnarray}
    \mathbb{E}\underbrace{(Q_{k_{2}}(s,a)-Q_{k_{3}}(s,a))}_{A1}\underbrace{(Q_{k_{2}}(s,a)+Q_{k_{3}}(s,a) - 2(r(s,a)) + \gamma \max_{a \in \mathcal{A}}Q_{k-1}(s^{'},a))}_{A2} &\le& \epsilon \label{2_2_5_4}
\end{eqnarray}

Where the expectation is now over a single $(s,a)$ drawn from $\nu$, $r(s,a)\sim R(.|s,a)$ and $s^{'}\sim P(.|s,a)$.  
We re-write Equation \eqref{2_2_5_4} as 

\begin{eqnarray}
    \int\underbrace{(Q_{k_{2}}(s,a)-Q_{k_{3}}(s,a))}_{A1}\underbrace{(Q_{k_{2}}(s,a)+Q_{k_{3}}(s,a) - 2(r(s,a)) + \gamma \max_{a \in \mathcal{A}}Q_{k-1}(s^{'},a))}_{A2}d{\nu}(s,a)d{\mu_{2}}(r)d{\mu_{3}}(s^{'}) &\le& \epsilon \nonumber\\
    \label{2_2_5_5}
\end{eqnarray}

Where $\nu$, $\mu_{2}$, $\mu_{3}$ are the measures with respect to  $(s,a)$, $r^{'}$ and $s^{'}$ respectively

Now for the integral in Equation \eqref{2_2_5_5} we split the integral into four different integrals. Each integral is over the set of $(s,a), r^{'}, s^{'}$ corresponding to the 4 different combinations of signs of $A1, A2$. 

\begin{eqnarray}
    \int_{\{(s,a), r^{'}, s^{'}\}:A1\ge0,A2\ge0}(A1)(A2)d{\nu}(s,a)d{\mu_{2}}(r)d{\nu_{3}}(s^{'}) + \int_{\{(s,a), r^{'}, s^{'}\}:A1<0,A2<0}(A1)(A2)d{\nu}(s,a)d{\mu_{2}}(r)d{\mu_{3}}(s^{'}) + \nonumber\\
    \int_{\{(s,a), r^{'}, s^{'}\}:A1\ge0,A2<0}(A1)(A2)d{\nu}(s,a)d{\mu_{2}}(r)d{\nu_{3}}(s^{'}) +  \int_{\{(s,a), r^{'}, s^{'}\}:A1<0,A2\ge0}(A1)(A2)d{\nu}(s,a)d{\mu_{2}}(r)d{\mu_{3}}(s^{'})  &\le& \epsilon \nonumber\\
    \label{2_2_5_6}
\end{eqnarray}

Now note that the first 2 terms are non-negative and the last two terms are non-positive. We then write the first two terms as 

\begin{eqnarray}
    \int_{\{(s,a), r^{'}, s^{'}\}:A1\ge0,A2\ge0}(A1)(A2)d(s,a)d{\nu}(s,a)d{\mu_{2}}(r)d{\mu_{3}}(s^{'}) = C_{k_{1}}\int|Q_{k_{2}}-Q_{k_{3}}|d{\nu}(s,a) =  C_{k_{1}}\mathbb{E}(|Q_{k_{2}}-Q_{k_{3}}|)_{\nu} \nonumber\\
    \label{2_2_5_7}\\
    \int_{\{(s,a), r^{'}, s^{'}\}:A1<0,A2<0}(A1)(A2)d(s,a)d{\nu}(s,a)d{\mu_{2}}(r)d{\mu_{3}}(s^{'}) = C_{k_{2}}\int|Q_{k_{2}}-Q_{k_{3}}|d{\nu}(s,a)  =  C_{k_{2}}\mathbb{E}(|Q_{k_{2}}-Q_{k_{3}}|)_{\nu} \nonumber\\
    \label{2_2_5_8}
\end{eqnarray}

We write the last two terms as 

\begin{eqnarray}
     \int_{\{(s,a), r^{'}, s^{'}\}:A1\ge0,A2<0}(A1)(A2)d{\nu}(s,a)d{\mu_{2}}(r)d{\mu_{3}}(s^{'}) =C_{k_{3}}{\epsilon} \label{2_2_5_9}\\
    \int_{\{(s,a), r^{'}, s^{'}\}:A1<0,A2\ge0}(A1)(A2)d{\nu}(s,a)d{\mu_{2}}(r)d{\mu_{3}}(s^{'}) = C_{k_{4}}{\epsilon} \label{2_2_5_10}
\end{eqnarray}

Here $C_{k_{1}}, C_{k_{2}}, C_{k_{4}}$ and $C_{k_{4}}$ are positive constants. Plugging Equations \eqref{2_2_5_7}, \eqref{2_2_5_8}, \eqref{2_2_5_9}, \eqref{2_2_5_10} into Equation \eqref{2_2_5_5}.  

\begin{eqnarray}
    (C_{k_{1}}+C_{k_{2}})\mathbb{E}(|Q_{k_{2}}-Q_{k_{3}}|)_{\nu} -(C_{k_{3}}+C_{k_{4}})\epsilon &\le& \epsilon \label{2_2_5_11}\\
\end{eqnarray}

which implies 

\begin{eqnarray}
    \mathbb{E}(|Q_{k_{2}}-Q_{k_{3}}|)_{\nu} &\le& \left(\frac{1+C_{k_{3}}+C_{k_{4}}}{C_{k_{1}}+C_{k_{2}}}\right)\epsilon \label{2_2_5_12}\\
\end{eqnarray}

Now define $\left(\frac{1+C_{k_{3}}+C_{k_{4}}}{C_{k_{1}}+C_{k_{2}}}\right)=C_{k}$ to get

\begin{eqnarray}
    \mathbb{E}(|Q_{k_{2}}-Q_{k_{3}}|)_{\nu} &\le& C_{k}\epsilon \label{2_2_5_13}\\
\end{eqnarray}

Therefore, we have that if  the number of samples of the data set $X$ of $(s_{i}, a_{i})$ pairs drawn independently from $\nu$ satisfies 

\begin{equation}
  n_{k} \ge 2\left(\frac{2{\eta}\beta_{k}}{\epsilon}\right)^{2},   \label{lem_2_29}
\end{equation}

then we get

\begin{eqnarray}
    \mathbb{E}(|Q_{k2} -Q_{k3}|)_{\nu}  &\le& \frac{\epsilon}{C_{k}} \label{lem_2_31},
\end{eqnarray}

Replacing $\epsilon$ with $\epsilon(C_{k})$ in Equation \eqref{lem_2_29}, we obtain that if 

\begin{equation}
 n_{k} \ge 8\left({(C_{k}{\eta}\beta_{k}})^{2}{\epsilon}^{-2}\right),     \label{lem_2_32}
\end{equation}

we get

\begin{equation}
 \mathbb{E}(|Q_{k2} -Q_{k3}|)_{\nu}  \le \epsilon   \label{lem_2_33}
\end{equation}

which implies

\begin{equation}
 \mathbb{E}(|\epsilon_{k_{3}}|)_{\nu}  \le \epsilon    \label{lem_2_34}
\end{equation}

\end{proof}

\subsection{Proof Of Lemma 4} \label{proof_lem_4}

\begin{proof}
For a given iteration $k$  of Algorithm \ref{algo_1} the optimization problem to be solved in Algorithm \ref{algo_2} is the following 

\begin{equation}
   \mathcal{L}(\theta) = \frac{1}{n} \sum_{i=1}^{n} \left( Q_{\theta}(s_{i},a_{i}) - \left(r(s_{i},a_{i}) + \gamma\max_{a^{'} \in \mathcal{A}}\gamma Q_{k-1}(s^{'},a^{'}) \right) \right)^{2} \label{lem_4_1}
\end{equation}

Here, $Q_{k-1}$ is the estimate of the $Q$ function from the iteration $k-1$ and the state action pairs $(s_{i},a_{i})_{i=\{1,\cdots,n\}}$ have been sampled from a distribution over the state action pairs denoted by $\nu$. Since $\min_{\theta}\mathcal{L}(\theta)$ is a non convex optimization problem we instead solve the equivalent convex problem given by

\begin{eqnarray}
      u_{k}^{*} &=& \argminA_{u}g_{k}(u) =\argminA_{u}|| \sum_{D_{i} \in \tilde{D}}D_{i}X_{k}u_{i}-y_{k} ||^{2}_{2}   \label{lem_4_2} \\
      &&\textit{subject to}   |u|_{1} \le \frac{R_{\max}}{1-\gamma} \label{lem_4_2_1}
\end{eqnarray}

Here, $X_{k} \in \mathbb{R}^{n_{k} \times d}$ is the matrix of sampled state action pairs at iteration $k$, $y_{k} \in \mathbb{R}^{n_{k} \times 1}$ is the vector of target values at iteration $k$. $\tilde{D}$ is the set of diagonal matrices obtained from line \ref{a2_l1} of Algorithm \ref{algo_2} and $u \in \mathbb{R}^{|\tilde{D}d| \times 1}$ (Note that we are treating $u$ as a vector here for notational convenience instead of a matrix as was done in Section \ref{Proposed Algorithm}).

The constraint in Equation \eqref{lem_4_2_1} ensures that the all the co-ordinates of the vector $\sum_{D_{i} \in \tilde{D}}D_{i}X_{k}u_{i}$  are upper bounded by $\frac{R_{max}}{1-\gamma}$ (since all elements of $X_{k}$ are between $0$ and $1$). This ensures that the corresponding neural network represented by Equation \eqref{ReLU_1_2} is also upper bounded by   $\frac{R_{max}}{1-\gamma}$. We use the a projected gradient descent to solve the constrained convex optimization problem which can be written as. 

\begin{eqnarray}
      u_{k}^{*} &=& \argminA_{u : |u|_{1} \le \frac{R_{\max}}{1-\gamma}}g_{k}(u) =\argminA_{u : |u|_{1} \le \frac{R_{\max}}{1-\gamma}}|| \sum_{D_{i} \in \tilde{D}}D_{i}X_{k}u_{i}-y_{k} ||^{2}_{2} \label{lem_4_2_2} 
\end{eqnarray}

From Ang, Andersen(2017). “Continuous Optimization” [Notes]. https://angms.science/doc/CVX  we have that if the step size $\alpha_{k}=\frac{||u^{*}_{k}||_{2}}{L_{k}\sqrt{T_{k}+1}}$, after  $K$ iterations of the projected gradient descent algorithm we obtain

\begin{equation}
(g_{k}(u_{T_{k}})-g_{k}(u^{*})) \le  L_{k} \frac{||u_{k}^{*}||_{2}}{\sqrt{T_{k}+1}}  \label{lem_4_2_3}
\end{equation}

Where $L_{k}$ is the lipschitz constant of $g_{k}(u)$ and $u_{T_{k}}$ is the parameter estimate at step $T_{k}$. 

Therefore if the number of iteration of the projected gradient descent algorithm $T_{k}$  and the step-size $\alpha$ satisfy

\begin{eqnarray}
T_{k} &\ge& L_{k}^{2}||u_{k}^{*}||^{2}_{2}\epsilon^{-2} - 1, \label{lem_4_4}\\
\alpha_{k} &=& \frac{||u^{*}_{k}||_{2}}{L_{k}\sqrt{T_{k}+1}},
\end{eqnarray}

we have 

\begin{equation}
(g_{k}(u_{T_{k}})-g_{k}(u^{*})) \le  \epsilon  \label{lem_4_5}
\end{equation}

Let $(v^{*}_{i},w^{*}_{i})_{i \in (1,\cdots,|\tilde{D}|)}$, $(v^{T_{k}}_{i},w^{T_{k}}_{i})_{i \in (1,\cdots,|\tilde{D}|)}$ be defined as 
\begin{eqnarray}
 (v^{*}_{i},w^{*}_{i})_{i \in (1,\cdots,|\tilde{D}|)} = \psi_{i}^{'}(u^{*}_{i})_{i \in (1,\cdots,|\tilde{D}|)} \label{lem_4_7}\\
 (v^{T_{k}}_{i},w^{T_{k}}_{i})_{i \in (1,\cdots,|\tilde{D}|)} = \psi_{i}^{'}(u^{T_{k}}_{i})_{i \in (1,\cdots,|\tilde{D}|)} \label{lem_4_8}
\end{eqnarray}

where $\psi^{'}$ is defined in Equation \eqref{ReLU_3_1}.

Further, we define  $\theta_{|\tilde{D}|}^{*}$ and $\theta^{T_{k}}$ as 
\begin{eqnarray}
\theta_{|\tilde{D}|}^{*} = \psi(v^{*}_{i},w^{*}_{i})_{i \in (1,\cdots,|\tilde{D}|)}   \label{lem_4_9}\\
\theta^{T_{k}} = \psi(v^{T_{k}}_{i},w^{T_{k}}_{i})_{i \in (1,\cdots,|\tilde{D}|)}   \label{lem_4_10}
\end{eqnarray}

 where  $\psi$ is defined in Equation \eqref{ReLU_2_1},  $\theta_{|\tilde{D}|}^{*} = \argminA_{\theta}\mathcal{L}_{|\tilde{D}|}(\theta)$ for $\mathcal{L}_{|\tilde{D}|}(\theta)$ defined in Appendix \ref{cones_apdx}.

Since  $(g(u_{T_{k}})-g(u^{*})) \le \epsilon$, then by Lemma \ref{sup_lem_3}, we have 

\begin{equation}
     \mathcal{L}_{|\tilde{D}|}(\theta^{T_{k}}) -\mathcal{L}_{|\tilde{D}|}(\theta_{|\tilde{D}|}^{*}) \le \epsilon  \label{lem_4_11}
\end{equation}

Note that $\mathcal{L}_{|\tilde{D}|}(\theta^{T_{k}}) -\mathcal{L}_{|\tilde{D}|}(\theta_{|\tilde{D}|}^{*})$ is a constant value. Thus we can always find constant $C^{'}_{k}$ such that

\begin{equation}
     C_{k}^{'}|\theta^{T_{k}}-\theta_{|\tilde{D}|}^{*}|_{1}   \le  \mathcal{L}_{|\tilde{D}|}(\theta^{T_{k}}) -\mathcal{L}_{|\tilde{D}|}(\theta_{|\tilde{D}|}^{*})  \label{lem_4_14}
\end{equation}

\begin{equation}
     |\theta^{T_{k}}-\theta_{|\tilde{D}|}^{*}|_{1}   \le  \frac{\mathcal{L}(\theta^{T_{k}}) -\mathcal{L}(\theta^{*})}{C_{k}^{'}}  \label{lem_4_15}\\
\end{equation}

Therefore if we have 

\begin{eqnarray}
T_{k} &\ge& L_{k}^{2}||u_{k}^{*}||^{2}_{2}\epsilon^{-2} - 1, \label{lem_4_16}\\
\alpha_{k} &=& \frac{||u^{*}_{k}||_{2}}{L_{k}\sqrt{T_{k}+1}},
\end{eqnarray}

then we have 

\begin{equation}
     |\theta^{T_{k}}-\theta^{*}|_{1}   \le \frac{\epsilon}{C_{k}^{'}}  \label{lem_4_17}\\
\end{equation}

which according to Equation \eqref{lem_4_15} implies that

\begin{equation}
      C_{k}^{'}|\theta^{T_{k}}-\theta_{|\tilde{D}|}^{*}|_{1}   \le  \mathcal{L}_{|\tilde{D}|}(\theta^{T_{k}}) -\mathcal{L}_{|\tilde{D}|}(\theta_{|\tilde{D}|}^{*})   \le \epsilon \label{lem_4_17_1}\\
\end{equation}

Dividing Equation \eqref{lem_4_17_1} by $C_{k}^{'}$ we get 

\begin{equation}
      |\theta^{T_{k}}-\theta_{|\tilde{D}|}^{*}|_{1}   \le  \frac{\mathcal{L}_{|\tilde{D}|}(\theta^{T_{k}}) -\mathcal{L}_{|\tilde{D}|}(\theta_{|\tilde{D}|}^{*})}{C_{k}^{'}}   \le \frac{\epsilon}{C_{k}^{'}} \label{lem_4_17_2}\\
\end{equation}

Which implies 

\begin{equation}
      |\theta^{T_{k}}-\theta_{|\tilde{D}|}^{*}|_{1}  \le \frac{\epsilon}{C_{k}^{'}} \label{lem_4_17_3}\\
\end{equation}

Assuming $\epsilon$ is small enough such that $\frac{\epsilon}{C_{k}^{'}} < 1$ from lemma \ref{sup_lem_5}, this implies that there exists an $l_{k}>0$ such that    

\begin{eqnarray}
     |Q_{\theta^{T_{k}}}(s,a)-Q_{\theta_{|\tilde{D}|}^{*}}(s,a)| &\le&  l_{k}|\theta^{T_{k}}-\theta_{|\tilde{D}|}^{*}|_{1}   \label{lem_4_19}\\
                                         &\le&  \frac{l_{k}\epsilon}{C_{k}^{'}} \label{lem_4_20}
\end{eqnarray}
for all $(s,a) \in \mathcal{S}\times\mathcal{A}$. Equation \eqref{lem_4_20} implies that if

\begin{eqnarray}
T_{k}  &\ge& L_{k}^{2}||u_{k}^{*}||^{2}_{2}\epsilon^{-2} - 1,\\
\alpha &=& \frac{||u^{*}_{k}||_{2}}{L_{k}\sqrt{T_{k}+1}},
\end{eqnarray}

then we have

\begin{eqnarray}
     \mathbb{E}(|Q_{\theta^{T_{k}}}(s,a)-Q_{\theta_{|\tilde{D}|}^{*}}(s,a)|)_{\nu} &\le& \frac{l_{k}\epsilon}{C_{k}^{'}} \label{lem_4_21}
\end{eqnarray}

By definition in section \ref{thm proof} $Q_{k}$ is our estimate of the $Q$ function at the $k^{th}$ iteration of Algorithm $1$ and thus we have  $Q_{\theta^{T_{k}}}=Q_{k}$ which implies that

\begin{eqnarray}
     \mathbb{E}(|Q_{k}(s,a)-Q_{\theta_{\tilde{D}}^{*}}(s,a)|)_{\nu} &\le& \frac{l_{k}\epsilon}{C_{k}^{'}} \label{lem_4_22}
\end{eqnarray}

 If we replace $\epsilon$ by $\frac{C^{'}_{k}\epsilon}{l_{k}}$ in Equation \eqref{lem_4_21}, we get that if 

\begin{eqnarray}
T_{k} &\ge& \left({\frac{C^{'}_{k}\epsilon}{l_{k}}}\right)^{-2}L_{k}^{2}||u_{k}^{*}||^{2}_{2} - 1, \label{lem_4_23}\\
\alpha &=& \frac{||u^{*}_{k}||_{2}}{L_{k}\sqrt{T_{k}+1}},
\end{eqnarray}

we have

\begin{eqnarray}
     \mathbb{E}(|Q_{k}(s,a)-Q_{\theta_{\tilde{D}}^{*}}(s,a)|)_{\nu} &\le& \epsilon \label{lem_4_24}
\end{eqnarray}

From Assumption \ref{assump_2}, we have that 

\begin{eqnarray}
     \mathbb{E}(|Q_{\theta^{*}}(s,a)-Q_{\theta_{\tilde{D}}^{*}}(s,a)|)_{\nu} &\le& \epsilon_{|\tilde{D}|} \label{lem_4_25}
\end{eqnarray}

where $\theta^{*} = \argminA_{\theta \in \Theta}\mathcal{L}(\theta)$ and by definition of $Q_{k_3}$ in Definition \ref{def_6}, we have that  $Q_{k_3}=Q_{\theta^{*}}$. Therefore if we have 

\begin{eqnarray}
T_{k} &\ge& \left({\frac{C^{'}_{k}\epsilon}{l_{k}}}\right)^{-2}L_{k}^{2}||u_{k}^{*}||^{2}_{2} - 1, \label{lem_4_26}\\
\alpha &=& \frac{||u^{*}_{k}||_{2}}{L_{k}\sqrt{T_{k}+1}},
\end{eqnarray}

we have 

\begin{eqnarray}
     \mathbb{E}(|Q_{k}(s,a)-Q_{k_3}(s,a)|)_{\nu} &\le&  \mathbb{E}(|Q_{k}(s,a)-Q_{\theta_{\tilde{D}}^{*}}(s,a)|)_{\nu}   + \mathbb{E}(|Q_{k_3}(s,a)-Q_{\theta_{\tilde{D}}^{*}}(s,a)|)_{\nu}   \label{lem_4_27}\\
                                                 &\le&  \epsilon + \epsilon_{|\tilde{D}|} \label{lem_4_28}
\end{eqnarray}

\end{proof}

\end{document}